\let\Ginclude@graphics\@org@Ginclude@graphics 
\newcommand{\uh}{\ensuremath{\hat{u}}}
\newcommand{\vh}{\ensuremath{\hat{v}}}
\newcommand{\us}{\ensuremath{u^\ast}}
\newcommand{\vs}{\ensuremath{v^\ast}}
\newcommand{\uo}{\ensuremath{\overline{u}}}
\newcommand{\vo}{\ensuremath{\overline{v}}}
\newcommand{\wo}{\ensuremath{\overline{w}}}
\newcommand{\Uc}{\ensuremath{\mathcal{U}}}
\newcommand{\Vc}{\ensuremath{\mathcal{V}}}
\newcommand{\Dc}{\ensuremath{\mathcal{D}}}
\newcommand{\fu}{\ensuremath{\nabla_u f}}
\newcommand{\fv}{\ensuremath{\nabla_v f}}
\newcommand{\fuu}{\ensuremath{\nabla_{uu}^2 f}}
\newcommand{\fuv}{\ensuremath{\nabla_{uv}^2 f}}
\newcommand{\fvu}{\ensuremath{\nabla_{vu}^2 f}}
\newcommand{\fvv}{\ensuremath{\nabla_{vv}^2 f}}
\newcommand{\gv}{\ensuremath{\nabla_v g}}
\newcommand{\guv}{\ensuremath{\nabla_{uv}^2 g}}
\newcommand{\gvv}{\ensuremath{\nabla_{vv}^2 g}}
\newtheorem{thm}{Theorem}
\newtheorem{lem}[thm]{Lemma}
\newtheoremstyle{TheoremRep}
        {\topsep}{\topsep}              %%% space between body and thm
        {\itshape}                      %%% Thm body font
        {}                              %%% Indent amount (empty = no indent)
        {\bfseries}                     %%% Thm head font
        {.}                             %%% Punctuation after thm head
        { }                             %%% Space after thm head
        {\thmname{#1}\thmnote{ \bfseries #3}}%%% Thm head spec
\theoremstyle{TheoremRep}
\newtheorem{theoremrep}{Theorem}
\newtheorem{lemmarep}[theoremrep]{Lemma}
\newtheorem{corollaryrep}[theoremrep]{Corollary}
\title[Penalty Method for Inversion-Free Deep Bilevel Optimization]{Penalty Method for Inversion-Free  Deep Bilevel Optimization}
  \author{\Name{Akshay Mehra} \Email{amehra@tulane.edu}\\
  %\addr Address 1
  \Name{Jihun Hamm} \Email{jhamm3@tulane.edu}\\
  \addr Tulane University, New Orleans, LA, USA
 }
\begin{document}
\maketitle
\begin{abstract}
Solving a bilevel optimization problem is at the core of several machine learning problems such as hyperparameter tuning, data denoising, meta- and few-shot learning, and training-data poisoning. 
Different from simultaneous or multi-objective optimization, the steepest descent direction for minimizing the upper-level cost in a bilevel problem requires the inverse of the Hessian of the lower-level cost. 
In this work, we propose a novel algorithm for solving bilevel optimization problems based on the classical penalty function approach. 
Our method avoids computing the Hessian inverse and can handle constrained bilevel problems easily.  
We prove the convergence of the method under mild conditions and show that the exact hypergradient is obtained asymptotically. 
Our method's simplicity and small space and time complexities enable us to effectively solve large-scale bilevel problems involving deep neural networks. 
We present results on data denoising, few-shot learning, and training-data poisoning problems in a large-scale setting. 
Our results show that our approach outperforms or is comparable to previously proposed methods based on automatic differentiation and approximate inversion in terms of accuracy, run-time, and convergence speed.
\end{abstract}
\begin{keywords}
%List of keywords separated by semicolon.
Bilevel optimization; data denoising; few-shot learning; data poisoning.
\end{keywords}

%%%%%%%%%%%%%%%%%%%%%%%%%%%%%%%%%%%%%%%%%%%%%%%%%%%%%%%%%%%%%%%%%%%%%%%%%%%%%%%%%%%%%%%%%%%%%%%%%%%%%%%%%%%%%%%%%%%%%
\section{Introduction}\label{sec:intro}
%%%%%%%%%%%%%%%%%%%%%%%%%%%%%%%%%%%%%%%%%%%%%%%%%%%%%%%%%%%%%%%%%%%%%%%%%%%%%%%%%%%%%%%%%%%%%%%%%%%%%%%%%%%%%%%%%%%%%
Bilevel optimization problems appear in the fields of study involving a competition between two parties or two objectives. 
Particularly, a bilevel problem arises if one party makes its choice first affecting the optimal choice for the second party, known as the Stackelberg model [\cite{von2010market}]. 
The general form of a bilevel optimization problem is as follows
\vspace{-0.2cm}
\begin{equation}\label{eq:bilevel}
\min_{u \in \Uc}\; f(u,v^*)\;\;\;\mathrm{s.t.}\;\;v^* = \arg\min_{v\in \Vc(u)}\;g(u,v)
\vspace{-0.2cm}
\end{equation}
The `upper-level' problem ${\min_{u \in \Uc} f(u,v^*)}$ is a usual minimization problem except that $v^*$ is constrained to be the solution to the `lower-level' problem ${\min_{v\in \Vc(u)} g(u,v)}$ which is in turn dependent on $u$ (see [\cite{bard2013practical}] for a review of bilevel optimization). 
In this work, we propose and analyze a new algorithm for solving bilevel problems based on the classical penalty function approach. We demonstrate the effectiveness of our algorithm on several important machine learning applications including
gradient-based hyperparameter tuning [\cite{domke2012generic,maclaurin2015gradient,luketina2016scalable,pedregosa2016hyperparameter,franceschi2017forward,franceschi2018bilevel,lorraine2020optimizing}], data denoising by importance learning [\cite{liu2016classification,yu2017transfer,ren18l2rw,han2018coteaching}], meta/few-shot learning [\cite{ravi2016optimization,santoro2016one,vinyals2016matching,franceschi2017forward,mishra2017simple,snell2017prototypical,franceschi2018bilevel, rajeswaran2019meta}], and training-data poisoning [\cite{mei2015using,munoz2017towards,koh2017understanding,shafahi2018poison}].
%Below, we describe the bilevel formulation for these machine learning applications.

{\bf Gradient-based hyperparameter tuning.} Hyperparameter tuning is essential 
%Searching for optimal hyperparameters is an indispensable step 
for any learning problem and grid search is a popular method when domain of the hyperparameters is a discrete set or a range. However, when losses are differentiable functions of the hyperparameter(s), a continuous bilevel optimization problem can help find the optimal hyperparameters. Let $u$ and $w$ be hyperparameter(s) and parameter(s) for a class of learning algorithms, $h(x;u,w)$ be the hypothesis, $L_\mathrm{val}(u,w) = \frac{1}{N_\mathrm{val}}\sum_{(x_i,y_i) \in \Dc_\mathrm{val}} l(h(x_i;u,w),y_i)$ and $L_\mathrm{train}(u,w)=\frac{1}{N_\mathrm{train}}\sum_{(x_i,y_i) \in \Dc_\mathrm{train}} l(h(x_i;u,w),y_i)$ be the loss on validation and training sets, respectively. Then the best hyperparameter(s) $u$ is the solution to
\vspace{-0.1cm}
\begin{equation}\label{eq:hyperparameter learning}
\vspace{-0.2cm}
\min_{u}\; L_\mathrm{val}(u,w^*) \;\;\mathrm{s.t.}\;\;w^* = \arg\min_{w}\; L_\mathrm{train}(u,w).
\end{equation} 

{\bf Data denoising by importance learning.} Most learning algorithms assume that the training set is an i.i.d. sample from the same distribution as the test set. However, if train and test distributions are not identical or if the training set is corrupted by noise or modified by adversaries, this assumption is violated. In such cases, changing the importance of each training example, before training, can reduce the discrepancy between the two distributions. For example, importance of the examples from the same distribution can be up-weighted in comparison to other examples. Determining the importance of each training example can be formulated as a bilevel problem. Let $u$ be the vector of non-negative importance values ${u=[u_1,\cdots, u_{N}]^T}$ where $N$ is the number of training examples, $w$ be the parameter(s) of a classifier $h(x;w)$. Assuming access to a small set of validation data, from the same distribution as test data, $L_{\mathrm{val}}(u,w) = \frac{1}{N_\mathrm{val}}\sum_{(x_i,y_i) \in \Dc_\mathrm{val}} l(h(x_i;u,w),y_i)$ be the loss on validation set and ${L_\mathrm{w\_train}(u,w) :=\frac{1}{\sum_i u_i}\sum_{(x_i,y_i) \in \Dc_\mathrm{train}} u_i l(h(x_i;w),y_i)}$ be the weighted training error. The problem of learning the importance of each training example is as follows
\begin{equation}
\label{eq:importance learning}    
\min_{u} \; L_\mathrm{val}(u,w^*)\;\;
\mathrm{s.t.} \;\; w^* = \arg\min_w  \;L_\mathrm{w\_train}(u,w).
\end{equation}

{\bf Meta-learning.}
This problem involves learning a prior on the hypothesis classes (a.k.a. inductive bias) for a given set of tasks. Few-shot learning is an example of meta-learning, where a learner is trained on several related tasks, during the meta-training phase, so that it generalizes well on unseen (but related) tasks during the meta-testing phase. An effective approach to this problem is to learn a common representation for various tasks and train task specific classifiers over this representation. Let $T$ be the map that takes raw features to a common representation ${T: \mathcal{X} \to \mathbb{R}^d}$ for all tasks and $h_i$ be the classifier for the $i$-th task, ${i \in \{1,\cdots,M\}}$ where $M$ is the total number of tasks for training. The goal is to learn both the representation map $T(\cdot\;;u)$ parameterized by $u$ and the set of classifiers $\{h_1,\cdots,h_M\}$ parameterized by ${w=\{w_1,\cdots,w_M\}}$. Let ${L_\mathrm{val}(u,w_i):= \frac{1}{N_\mathrm{val}} \sum_{(x_i,y_i) \in \Dc_\mathrm{val}} l(h_i(T(x_i;u);w_i),y_i)}$ be the validation loss of task $i$ and ${L_\mathrm{train}(u,w_i)}$ be the training loss defined similarly. Then the bilevel problem for few-shot learning is as follows
\begin{equation}
\label{eq:meta-learning}
\min_{u} \;\sum_i L_\mathrm{val}(u,w_i^*)  \;\; \mathrm{s.t.}\;\;
 w_i^* = \arg\min_{w_i}\;  L_\mathrm{train}(u,w_i),\;i=1,\cdots,M.
\end{equation}
At test time the common representation ${T(\cdot\;;u)}$ is kept fixed and the classifiers $h^{'}_{i}$ for the new tasks are trained i.e. ${\min_{w^{'}_i}\;L_\mathrm{test}(u, w^{'}_i) \;\; i=1,\cdots,N}$ where $N$ is the total number of tasks for testing.

{\bf Training-data poisoning.}
This problem refers to the setting in which an adversary can modify the training data so that the model trained on the altered data performs poorly/differently compared to one trained on the unaltered data. Attacker adds one or more `poisoned' examples ${u=\{u_1,\cdots,u_M\}}$ to the original training data ${X=\{x_1,\cdots,x_N\}}$ i.e., ${X' = X \bigcup u}$ with arbitrary labels. Additionally, to evade detection, an attacker can generate poisoned images starting from existing clean images (called base images) with a bound on the maximum perturbation allowed. Let ${L_\mathrm{poison}(u,w) := \frac{1}{N} \sum_{(x_i,y_i) \in X' \times Y'} l(h(x_i;u,w),y_i)}$ be the loss on the poisoned training data, $\epsilon$ be the bound on the maximum perturbation allowed for poisoned points and the validation set consists of target images that an attacker wants the model to misclassify. Then the problem of generating poisoning data is as follows
\begin{equation}\label{eq:clean-label-poisoning}
\min_{u} \;L_\mathrm{val}(u,w^*) \; \mathrm{s.t.} \;
\; w^* = \arg\min_{w}\; L_\mathrm{poison}(u,w)\; \mathrm{and}\; \|x^i_\mathrm{base} - u^i\|_2 < \epsilon \; \mathrm{for}\; i = 1, ..., M .
\end{equation}

{\bf Challenges of deep bilevel optimization.}
General bilevel optimization problems cannot be solved using simultaneous optimization of the upper- and lower-level cost and are in fact, shown to be NP-hard even in cases with linear upper-level and quadratic lower-level functions [\cite{bard1991some}]. 
To make the analyses tractable many previous works require assumptions such as convexity of the lower-level objective and lower-level solution set being a singleton\footnote{Recently, \cite{liu2021value} presented an analysis of bilevel problems without requiring these assumptions.}. 
%Although theoretically sound, the method's convergence to a global solution in problems with multiple solutions cannot be practically guaranteed, especially when using local methods such as gradient descent.  
Moreover, for solving bilevel problems involving deep neural networks, with millions of variables, only first-order methods such as gradient descent are feasible. 
However, the steepest descent direction (Hypergradient in Sec.~\ref{sec:hypergradient}) using the first-order methods for bilevel problems requires the computation of the inverse Hessian--gradient product. 
Since direct inversion of the Hessian is impractical even for moderate-sized problems, previous approaches approximate the hypergradient using either forward/reverse-mode differentiation [\cite{maclaurin2015gradient,franceschi2017forward,shaban2018truncated}] or by approximately solving a linear system [\cite{domke2012generic,pedregosa2016hyperparameter,rajeswaran2019meta,lorraine2020optimizing}]. However, some of these approaches have high space and time complexities
which can be problematic, especially for deep learning settings.

{\bf Contributions.}
    We propose an algorithm (Alg.~\ref{alg:main}) based on the classical penalty function approach for solving large-scale bilevel optimization problems. We prove convergence of the method (Theorem~\ref{thm:main}) and present its complexity analysis showing that it has linear time and constant space complexity (Table~\ref{tbl:complexity}). This makes our approach superior to forward/reverse-mode differentiation and similar to the approximate inversion-based methods. 
    The small space and time complexities of the method make it an effective solver for large-scale bilevel problems involving deep neural networks as shown by our experiments on data denoising, few-shot learning, and training-data poisoning problems. In addition to being able to solve constrained problems, our method performs competitively to the state-of-the-art methods on simpler problems (with convex lower-level cost) and significantly outperforms other methods on complex problems (with non-convex lower-level cost), in terms of accuracy (Sec.~\ref{sec:experiments}), convergence speed (Sec.~\ref{sec:convergence_speed}) and run-time (Appendix \ref{sec:impact_of_T}).
\\\\
The rest of the paper is organized as follows. We present and analyze the main algorithm in Sec.~\ref{sec:main}, present experiments in Sec.~\ref{sec:experiments}, and conclude in Sec.~\ref{sec:conclusion}. Proofs, experimental settings, and additional results are presented in the supplementary material. The scripts used to generate the results in this paper are available at \url{https://github.com/jihunhamm/bilevel-penalty}.

%%%%%%%%%%%%%%%%%%%%%%%%%%%%%%%%%%%%%%%%%%%%%%%%%%%%%%%%%%%%%%%%%%%%%%%%%%%%%%%%%%%%%%%%%%%%%%%%%%%%%%%%%%%%%%%%%%%%%
\section{Inversion-Free Penalty Method}\label{sec:main}
%%%%%%%%%%%%%%%%%%%%%%%%%%%%%%%%%%%%%%%%%%%%%%%%%%%%%%%%%%%%%%%%%%%%%%%%%%%%%%%%%%%%%%%%%%%%%%%%%%%%%%%%%%%%%%%%%%%%%
We assume the upper- and lower-level costs $f$ and $g$ are twice continuously differentiable and the upper-level constraint function $h$ is continuously differentiable in both $u$ and $v$. Let $\fu$ and $\fv$ denote the gradient vectors, $\fuv$ denote the Jacobian matrix $\left[\frac{\partial^2 f}{\partial u_i\partial v_j}\right]$, and $\fvv$ denote the Hessian matrix $\left[\frac{\partial^2 f}{\partial v_i\partial v_j}\right]$. Following previous works, we also assume the lower-level solution $\vs(u):=\arg\min_v\; g(u,v)$ is unique for all $u$ and $\gvv$ is invertible everywhere. 

\subsection{Background}
A bilevel problem is a constrained optimization problem with the lower-level optimality $v^*(u)=\arg\min_v g(u,v)$ being a constraint. Additionally, it can also have other constraints in the upper- and lower-level problems\footnote{Problems with lower-level constraints are less common in machine learning and are left for future work.}.
\begin{equation}\label{eq:bilevel_with_inequality_constraints}
    \min_{u}\;\;f(u,v^*),\;\;\mathrm{s.t.} \;\; h(u,v^*) \leq 0\;\; \mathrm{and}\; \;{v^*(u)=\arg\min_v g(u,v)}.
\end{equation}
Inequality constraints $h(u,v)\leq 0$ can be converted into equality constraints, e.g., $h(u,v) + s^2 = 0$ by using a slack variables $s$, resulting in the following problem
\begin{equation} \label{eq:bilevel_with_equality_constraints}
    \min_{u}\;\;f(u,v^*),\;\;\mathrm{s.t.} \;\; h(u,v^*) = 0\;\; \mathrm{and}\; \;{v^*(u)=\arg\min_v g(u,v)}.
\end{equation}
The assumption about the uniqueness of the lower-level solution for each $u$ allows us to convert the bilevel problem in Eq.~(\ref{eq:bilevel_with_equality_constraints}) into the following single-level constrained problem:
\begin{equation} \label{eq:bilevel_with_constraints_single_reformulation}
\min_{u,v}\;\;f(u,v),\;\;\mathrm{s.t.}\;\; h(u,v) = 0\;\; \mathrm{and}\; \;{\gv = 0}.
\end{equation}

For general bilevel problems, Eq.~(\ref{eq:bilevel_with_equality_constraints}) and Eq.~(\ref{eq:bilevel_with_constraints_single_reformulation}) are not the same [\cite{dempe2012bilevel}]. But, for simpler problems where the lower-level cost $g$ is convex in $v$ for each $u$, the lower-level solution is unique for each $u$ and the upper-level problem does not have any additional constraints, the problems in Eq.~(\ref{eq:bilevel_with_constraints_single_reformulation}) and Eq.~(\ref{eq:bilevel_with_equality_constraints}) are equivalent.

{\bf Hypergradient for bilevel optimization.}\label{sec:hypergradient}
For such simpler problems, we can use the gradient-based approaches on the single-level problem in Eq.~(\ref{eq:bilevel_with_constraints_single_reformulation}) to compute the total derivative $\frac{df}{du}(u,\vs(u))$, also known as the hypergradient. By the chain rule, we have $\frac{df}{du} = \fu + \frac{dv}{du}\cdot\fv$ at $(u,\vs(u))$. Even if $\vs(u)$ cannot be found explicitly, we can still compute ${\frac{dv}{du}}$ using the implicit function theorem. As ${\gv=0}$ at ${v=\vs(u)}$ and $\gvv$ is invertible, we get ${du \cdot \guv + dv \cdot \gvv =0}$, and ${\frac{dv}{du} = -\guv (\gvv)^{-1}}$. Thus the hypergradient is
\begin{equation}\label{eq:hypergradient}
\frac{df}{du} = \fu -\guv (\gvv)^{-1} \fv \;\; \mathrm{at} \;\;(u,\vs(u)).
\end{equation}
Existing approaches [\cite{domke2012generic,maclaurin2015gradient,pedregosa2016hyperparameter,franceschi2017forward,shaban2018truncated,rajeswaran2019meta,lorraine2020optimizing}] can be viewed as implicit methods of approximating the hypergradient, with distinct efficiency and iteration complexity trade-offs [\cite{grazzi2020iteration}].

\subsection{Penalty function approach}
The penalty function method is a well-known approach for solving constrained optimization problems as in Eq.~(\ref{eq:bilevel_with_constraints_single_reformulation}) [\cite{bertsekas1997nonlinear}]. It has been previously applied to solve bilevel problems described in Eq.~(\ref{eq:bilevel_with_inequality_constraints}) under strict assumptions and only high-level descriptions of the algorithm were presented [\cite{aiyoshi1984solution,ishizuka1992double}]. 
The corresponding penalty function that is minimized to solve Eq.~(\ref{eq:bilevel_with_constraints_single_reformulation}) has the form $\tilde{f}(u,v;\gamma):= f(u,v) + \frac{\gamma_k}{2}(\Psi(\|h(u,v)\|) + \|\gv(u,v)\|^2)$ which is the sum of the original cost $f$ and the penalty terms for constraint satisfaction and first-order stationarity of the lower-level problem.
The function $\Psi$ is an interior or exterior penalty function depending on whether $h(u,v)$ is an equality or inequality constraint.
%Using this penalty function we can solve the problem in Eq.~(\ref{eq:bilevel_with_constraints_single_reformulation}) as follows
The following result for obtaining a solution to the inequality constrained problem in Eq.~(\ref{eq:bilevel_with_inequality_constraints}) by solving Eq.~(\ref{eq:bilevel_with_constraints_penalty_formulation}) is known. 
\begin{equation} \label{eq:bilevel_with_constraints_penalty_formulation}
(\uh_k,\vh_k) = \arg\min_{u,v} \tilde{f}(u,v;\gamma_k) = \arg\min_{u,v} f(u,v) + \frac{\gamma_k}{2}(\Psi(\|h(u,v)\|) + \|\gv(u,v)\|^2).
\end{equation}

\begin{thm}[{Simplified Theorem 8.3.1 [\cite{bard2013practical}]}]\label{thm:strict}
%for Eq.~(\ref{eq:bilevel_with_constraints_single_reformulation})

Assume $f$ and $g$ are convex in $v$ for any fixed $u$, 
Let $\{\gamma_k\}$ be any positive ($\gamma_k>0$) and divergent ($\gamma_k \to \infty$) sequence.
If $\{(\uh_k,\vh_k)\}$ is the corresponding sequence of the {\bf optimal} solutions to Eq.~(\ref{eq:bilevel_with_constraints_penalty_formulation}), then the sequence $\{(\uh_k,\vh_k)\}$ has limit points any one of which is a solution
to Eq.~(\ref{eq:bilevel_with_inequality_constraints}).
\end{thm}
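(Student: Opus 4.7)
The plan is to carry out the classical three-step penalty-function convergence argument, adapted to the bilevel setting. First, I would fix any solution $(\us, \vs)$ of the original bilevel problem~(\ref{eq:bilevel_with_inequality_constraints}), noting that $h(\us, \vs) \leq 0$ and, since $\vs = \arg\min_v g(\us, v)$, we have $\gv(\us, \vs) = 0$; therefore both penalty terms vanish and $\tilde{f}(\us, \vs; \gamma_k) = f(\us, \vs)$ for every $k$. Optimality of $(\uh_k, \vh_k)$ for problem~(\ref{eq:bilevel_with_constraints_penalty_formulation}) then yields the basic inequality $\tilde{f}(\uh_k, \vh_k; \gamma_k) \leq f(\us, \vs)$, which is the lever driving the rest of the argument.

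Second, I would establish feasibility of any limit point. Because the penalty terms $\Psi(\|h(\uh_k, \vh_k)\|)$ and $\|\gv(\uh_k, \vh_k)\|^2$ are non-negative while $\gamma_k \to \infty$, the above bound forces each of them to $0$ (otherwise $\tilde{f}(\uh_k, \vh_k; \gamma_k)$ would diverge and violate the bound by $f(\us, \vs)$). Continuity of $h$ and $\gv$ then implies that any limit point $(\uo, \vo)$ of $\{(\uh_k, \vh_k)\}$ satisfies $h(\uo, \vo) \leq 0$ and $\gv(\uo, \vo) = 0$. Here the convexity hypothesis enters crucially: convexity of $g(u, \cdot)$ makes $\gv(\uo, \vo) = 0$ equivalent to $\vo$ being a minimizer of $g(\uo, \cdot)$, and the uniqueness hypothesis identifies $\vo = \vs(\uo)$. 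Thus $(\uo, \vo)$ is feasible for~(\ref{eq:bilevel_with_inequality_constraints}).

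Third, I would prove optimality. From $f(\uh_k, \vh_k) \leq \tilde{f}(\uh_k, \vh_k; \gamma_k) \leq f(\us, \vs)$, passing to the limit along the convergent subsequence and using continuity of $f$ gives $f(\uo, \vo) \leq f(\us, \vs)$. Combined with the feasibility established above, $(\uo, \vo)$ attains the bilevel optimal value and is therefore itself a solution, which is what we wanted.

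The hardest part is not really any of the three steps above but justifying the \emph{existence} of limit points, which the statement asserts rather than proves; a fully rigorous version would need a compactness hypothesis on the feasible domain, or coercivity of $f$ combined with the penalty structure, to guarantee that $\{(\uh_k, \vh_k)\}$ stays bounded. A secondary technicality is the choice of $\Psi$ in the inequality case: one must take an exterior form (for instance $\Psi$ applied to $\max(0, h)$) that vanishes precisely on the feasible side, so that forcing the penalty term to $0$ yields $h(\uo, \vo) \leq 0$ rather than $h(\uo, \vo) = 0$. Once these points are settled, the remainder is a direct limit argument relying only on continuity and the assumed convexity plus uniqueness.
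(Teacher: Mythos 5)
The paper does not actually prove this statement---it is quoted, in simplified form, from Theorem 8.3.1 of Bard's book---so there is no in-paper proof to compare against; your argument is precisely the classical exterior-penalty convergence proof that underlies that citation, and it is correct in outline. Two small points of rigor: in your second step the penalty terms are forced to zero only along the convergent subsequence (you need $f(\uh_k,\vh_k)$ to stay bounded below there, which continuity of $f$ along the subsequence supplies---the blanket claim that the bound ``forces each of them to $0$'' for the whole sequence is not quite what you use or need), and, as you yourself flag, the existence of limit points and the requirement that $\Psi$ vanish exactly on the feasible side $h\leq 0$ (or, equivalently, the slack-variable reformulation the paper adopts) must be added as hypotheses; Bard's unsimplified statement carries the corresponding compactness assumptions that the version reproduced here omits. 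With those caveats your three steps---$\tilde{f}(\uh_k,\vh_k;\gamma_k)\leq f(\us,\vs)$ from feasibility of a true solution, feasibility of the limit point $(\uo,\vo)$ via $\gamma_k\to\infty$ together with convexity of $g(u,\cdot)$ turning $\gv(\uo,\vo)=0$ into lower-level optimality, and optimality by passing $f(\uh_k,\vh_k)\leq f(\us,\vs)$ to the limit---are exactly the standard argument the paper is invoking.
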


\subsection{Our algorithm}
Theorem~\ref{thm:strict} presents a strong result, however, it is not very practical, especially for bilevel problems involving deep neural networks due to the following reasons. Firstly, the minimizer $(\uh_k,\vh_k)$ for Eq.~(\ref{eq:bilevel_with_constraints_penalty_formulation}) cannot be computed exactly for each $\gamma_k$ and it is not computationally possible to increase $\gamma_k \to \infty$. Secondly, the upper- and lower-level costs $f$ and $g$ may not be convex in $v$ for any $u$. 
To overcome some of these limitations and guarantee convergence in deep learning settings, 
%we allow non-convexity of $f$
we allow $\epsilon_k$-optimal (instead of exact) solution to Eq.~(\ref{eq:bilevel_with_constraints_penalty_formulation}) at each $k$.
Our Theorem~\ref{thm:main} below (for equality constrained problems Eq.~(\ref{eq:bilevel_with_equality_constraints})), shows that the solution found by allowing $\epsilon_k$-optimal solution to Eq.~(\ref{eq:bilevel_with_constraints_penalty_formulation}) converges to a KKT point of Eq.~(\ref{eq:bilevel_with_constraints_single_reformulation}), assuming that linear independence constraint qualification (LICQ) is satisfied at the optimum (i.e., linear independence of the gradients of the constraints, $h$ and $\gv$). We handle inequality constraints using slack variables and use $\Psi(\|\cdot\|) = \|\cdot\|^2$.
Using the penalty function approach, we propose an algorithm for solving large-scale bilevel problems in Alg.~\ref{alg:main}. 

\begin{thm}\label{thm:main}%(Inspired by Theorem 17.2 of nocedal)
Suppose $\{\epsilon_k\}$ is a positive ($\epsilon_k>0$) and convergent ($\epsilon_k \to 0$) sequence,
$\{\gamma_k\}$ is a positive ($\gamma_k>0$), non-decreasing ($\gamma_1 \leq \gamma_2 \leq \cdots$),
and divergent ($\gamma_k \to \infty$) sequence.
Let $\{(u_k,v_k)\}$ be the sequence of approximate solutions to Eq.~(\ref{eq:bilevel_with_constraints_penalty_formulation}) with tolerance 
$(\nabla_u \tilde{f}(u_k,v_k))^2 + (\nabla_v \tilde{f}(u_k,v_k))^2 \leq \epsilon_k^2$ for all $k=0,1,\cdots$ and LICQ is satisfied at the optimum. 
Then any limit point of $\{(u_k,v_k)\}$ satisfies the KKT conditions of the problem in Eq.~(\ref{eq:bilevel_with_constraints_single_reformulation}).
\end{thm}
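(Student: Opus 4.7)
The plan is to follow the classical quadratic-penalty-to-KKT convergence argument, adapted to the $\epsilon_k$-stationary (rather than exactly optimal) regime. First I would collect all equality constraints into one vector $c(u,v) := [h(u,v);\, \gv(u,v)]$ and write $A(u,v)$ for its Jacobian with respect to $(u,v)$. A direct calculation gives $\nabla \tilde f(u_k,v_k;\gamma_k) = \nabla f(u_k,v_k) + \gamma_k\, A(u_k,v_k)^T c(u_k,v_k)$, so the $\epsilon_k$-stationarity hypothesis becomes $\|\nabla f(u_k,v_k) + A(u_k,v_k)^T \lambda_k\| \leq \epsilon_k$, where I introduce the surrogate multiplier $\lambda_k := \gamma_k\, c(u_k,v_k)$. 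This is already the shape of the KKT stationarity condition, modulo an $O(\epsilon_k)$ residual.

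Next, fix a limit point $(u^*,v^*)$ and, passing to a subsequence, assume $(u_k,v_k)\to (u^*,v^*)$. The remaining work splits into (i) bounding $\lambda_k$ and (ii) taking limits. For (i), LICQ at $(u^*,v^*)$ says the rows of $A(u^*,v^*)$ are linearly independent, so $A(u^*,v^*)A(u^*,v^*)^T$ is invertible; by continuity, $A_k A_k^T$ (where $A_k := A(u_k,v_k)$) is invertible with uniformly bounded inverse for all sufficiently large $k$. Multiplying the stationarity bound on the left by $(A_k A_k^T)^{-1} A_k$ yields $\lambda_k = -(A_k A_k^T)^{-1} A_k\, \nabla f(u_k,v_k) + O(\epsilon_k)$, and continuity of $\nabla f$ together with the uniform bound on $(A_k A_k^T)^{-1}$ near $(u^*,v^*)$ gives $\{\lambda_k\}$ bounded.

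Feasibility of the limit point is then immediate: $c(u_k,v_k) = \lambda_k/\gamma_k$, and since $\lambda_k$ is bounded while $\gamma_k \to \infty$, we have $c(u_k,v_k)\to 0$, hence $h(u^*,v^*)=0$ and $\gv(u^*,v^*)=0$. Applying Bolzano--Weierstrass to $\{\lambda_k\}$ and extracting a further subsequence, we may assume $\lambda_k \to \lambda^*$. Passing to the limit in $\nabla f(u_k,v_k) + A_k^T \lambda_k = O(\epsilon_k)$, using $\epsilon_k \to 0$ and continuity of $\nabla f$ and $A$, gives $\nabla f(u^*,v^*) + A(u^*,v^*)^T \lambda^* = 0$. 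Combined with feasibility, this is exactly the KKT system for Eq.~(\ref{eq:bilevel_with_constraints_single_reformulation}), with the components of $\lambda^*$ split into multipliers for $h$ and for $\gv$.

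The main obstacle is the boundedness step (i): without LICQ the surrogate multipliers $\lambda_k = \gamma_k c(u_k,v_k)$ can diverge, producing only a feasible limit point with no associated Lagrange multiplier, which is strictly weaker than a KKT point. Some care is also needed in making the ``uniformly bounded inverse'' claim rigorous — it follows from continuity of $A$ and the openness of the set of full-row-rank matrices in the Jacobian's ambient space, but should be stated explicitly. Everything else in the argument is routine continuity and subsequence extraction, and the passage from exact optimality (as in Theorem~\ref{thm:strict}) to $\epsilon_k$-stationarity is absorbed entirely into the $O(\epsilon_k)$ term on the right-hand side of the stationarity equation.
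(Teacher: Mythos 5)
Your proposal is correct and follows essentially the same route as the paper's proof: both are the classical quadratic-penalty argument in which the stacked constraints $[h;\gv]$ play the role of $c$, the surrogate multipliers $\gamma_k c(u_k,v_k)$ are controlled via LICQ (invertibility of $A A^T$ near the limit), and feasibility plus stationarity are obtained by passing to the limit along a subsequence using $\epsilon_k \to 0$ and $\gamma_k \to \infty$. The only difference is ordering — you bound the multipliers first and deduce feasibility from $c(u_k,v_k)=\lambda_k/\gamma_k\to 0$, whereas the paper first gets $J^T\tilde g \to 0$ and invokes LICQ to conclude feasibility before recovering the multipliers — which is an inessential variation of the same argument.
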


\begin{algorithm} 
\caption{Our algorithm for solving bilevel problems (Penalty).} \label{alg:main}
{Input}: $K,T, \{\sigma_k\},\{\rho_{k,t}\}, \gamma_0,\epsilon_0, c_\gamma\textrm{(=1.1)}, c_\epsilon\textrm{(=0.9)}$ \\
{Output}: $(u_{K},v_{T})$\\
Initialize $u_0,v_0$ randomly\\
{Begin}
\begin{algorithmic}
\FOR{$k=0,\;\cdots\;,K\textrm{-}1$}
	%\STATE\hspace{-0.075in}{{\it $v$-update:}}
    %\STATE{$w=v_{t-1}$}
	\WHILE{$\|\nabla_u \tilde{f}\|^2+\|\nabla_v \tilde{f}\|^2 > \epsilon_{k}^2$}
    	\FOR{$t=0,\cdots,T\textrm{-}1$}
            %\STATE{$v$-update:\;\;$v_{t+1} \leftarrow v_{t} - \rho_{k,t} \nabla_v \tilde{f}\;\;\textrm{where}\;\;\nabla_v \tilde{f}=\nabla_v\left[f(u_{k},v_{t}) + \frac{\gamma_{k}}{2} \|\nabla_v g(u_{k},v_{t})\|^2\right]$}
            \STATE{$v_{t+1}\leftarrow v_{t}-\rho_{k,t}\nabla_v\tilde{f}$ (from Eq.~(\ref{eq:bilevel_with_constraints_penalty_formulation}))}
        \ENDFOR
	%\STATE{$\nabla_u \tilde{f} = \nabla_u [f(u_{k},v_{T}) + \frac{\gamma_{k}}{2} \|\nabla_v g(u_{k},v_{T})\|^2]$}		
    %\STATE{$u$-update:\;\;$u_{k+1} \leftarrow u_{k} - \sigma_{k} \nabla_u \tilde{f}\;\;\textrm{where}\;\;\nabla_u \tilde{f} = \nabla_u\left[f(u_{k},v_{T}) + \frac{\gamma_{k}}{2} \|\nabla_v g(u_{k},v_{T})\|^2\right]$}	
    \STATE{$u_{k+1} \leftarrow u_{k} - \sigma_{k} \nabla_u \tilde{f}\;$(from Eq.~(\ref{eq:bilevel_with_constraints_penalty_formulation}))}
    \ENDWHILE
	%\STATE\hspace{-0.075in}{{\it Increase penalty}}    
	%\STATE{Break if max iteration is reached}
    \STATE{$\gamma_{k+1} \leftarrow c_\gamma \gamma_{k},\;\epsilon_{k+1} \leftarrow c_\epsilon \epsilon_{k}$}
\ENDFOR
\end{algorithmic}
\end{algorithm}

The proof for Theorem~\ref{thm:main} is based on the standard proof for penalty function methods (See [\cite{nocedal2006numerical}]) and is presented in the Appendix~\ref{app:proofs}. Alg.~\ref{alg:main} describes our method where we minimize the penalty function in Eq.~(\ref{eq:bilevel_with_constraints_penalty_formulation}), alternatively over $v$ and $u$. This greatly reduces the complexity of solving a bilevel problem since in each iteration we only approximately solve a single-level problem over the penalty function $\tilde{f}$ with guaranteed convergence to KKT point of Eq.~(\ref{eq:bilevel_with_constraints_single_reformulation}). Moreover, for unconstrained problems ($h\equiv0$), Lemma~\ref{lm:main} below shows that the approximate gradient direction $\nabla_u \tilde{f}$, computed from Alg.~\ref{alg:main} becomes the exact hypergradient Eq.~(\ref{eq:hypergradient}) asymptotically. 

\begin{lem}\label{lm:main}
Assume $h \equiv 0$. Given $u$, let $\vh$ be $\vh:=\arg\min_v \tilde{f}(u,v;\gamma)$ from Eq.~(\ref{eq:bilevel_with_constraints_penalty_formulation}). 
Then, $\nabla_u\tilde{f}(u,\vh;\gamma)=\frac{df}{du} (u,\vh)$ as in Eq.~(\ref{eq:hypergradient}). 
\end{lem}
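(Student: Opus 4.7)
The plan is essentially a direct computation exploiting the first-order optimality of $\vh$ in the inner minimization. Since $h\equiv 0$, the penalty function simplifies to
\begin{equation*}
\tilde{f}(u,v;\gamma) = f(u,v) + \tfrac{\gamma}{2}\|\gv(u,v)\|^2,
\end{equation*}
so the only terms I need to differentiate are $f$ and $\tfrac{1}{2}\|\gv\|^2$. The key observation is that differentiating the squared-norm penalty pulls out a factor of $\gamma\,\gv$, and the definition of $\vh$ as an inner minimizer gives an equation that lets me eliminate this $\gamma\,\gv$ in favor of $-\gvv^{-1}\fv$. Substituting into the $u$-derivative recovers the implicit-function expression for the hypergradient exactly.

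First I would compute $\nabla_v\tilde f$ by the chain rule: differentiating $\tfrac{1}{2}\|\gv(u,v)\|^2$ with respect to $v$ produces $\gvv(u,v)\,\gv(u,v)$ (the Hessian of $g$ in $v$ acting on the gradient), giving
\begin{equation*}
\nabla_v\tilde f(u,v;\gamma) = \fv(u,v) + \gamma\,\gvv(u,v)\,\gv(u,v).
\end{equation*}
Evaluating at $v=\vh$ and using the first-order optimality condition $\nabla_v\tilde f(u,\vh;\gamma)=0$, I obtain $\gamma\,\gvv(u,\vh)\,\gv(u,\vh) = -\fv(u,\vh)$. Invoking the standing assumption that $\gvv$ is invertible, this rearranges to
\begin{equation*}
\gamma\,\gv(u,\vh) = -\gvv(u,\vh)^{-1}\,\fv(u,\vh).
\end{equation*}

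Next I compute $\nabla_u\tilde f$. Differentiating $\tfrac{1}{2}\|\gv\|^2$ with respect to $u$ yields $\guv(u,v)\,\gv(u,v)$ by the same chain-rule calculation (with the mixed partial $\guv$ playing the role of the $v$-Jacobian of $\gv$ in the $u$ direction), so
\begin{equation*}
\nabla_u\tilde f(u,v;\gamma) = \fu(u,v) + \gamma\,\guv(u,v)\,\gv(u,v).
\end{equation*}
Substituting the identity $\gamma\,\gv(u,\vh) = -\gvv(u,\vh)^{-1}\fv(u,\vh)$ from the previous step into this expression at $v=\vh$ gives
\begin{equation*}
\nabla_u\tilde f(u,\vh;\gamma) = \fu(u,\vh) - \guv(u,\vh)\,\gvv(u,\vh)^{-1}\,\fv(u,\vh),
\end{equation*}
which is exactly the hypergradient formula of Eq.~(\ref{eq:hypergradient}) evaluated at $(u,\vh)$.

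There is no real obstacle here beyond bookkeeping: the only nontrivial ingredients are the invertibility of $\gvv$ (assumed globally in Sec.~\ref{sec:main}) and the exact inner optimality $\nabla_v\tilde f(u,\vh;\gamma)=0$ (built into the definition of $\vh$). The result is notable because it says the $\gamma$-dependence in the two penalty terms cancels once the inner problem is solved exactly, so no limit in $\gamma$ is required in this lemma, unlike in Theorem~\ref{thm:main}; the penalty acts as an implicit surrogate for the inverse-Hessian--vector product appearing in the classical hypergradient.
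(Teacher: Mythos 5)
Your proposal is correct and follows essentially the same argument as the paper: use the inner optimality condition $\fv + \gamma\,\gvv\gv = 0$ together with invertibility of $\gvv$ to express $\gamma\,\gv(u,\vh)$ as $-\gvv^{-1}\fv$, and substitute into $\nabla_u\tilde f = \fu + \gamma\,\guv\gv$ so the $\gamma$-dependence cancels, yielding Eq.~(\ref{eq:hypergradient}). No gaps; this matches the paper's proof of Lemma~\ref{lm:main}.
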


Thus if we find the minimizer $\vh$ of the penalty function for given $u$ and $\gamma$, Alg.~\ref{alg:main} computes the exact hypergradient for unconstrained problems (Eq.~(\ref{eq:hypergradient})) at $(u,\vh)$. 
%Furthermore, under the conditions of Theorem~\ref{thm:strict},
%$\vh(u) \to \vs(u)$ as $\gamma \to \infty$ and we get the exact hypergradient asymptotically. 
A similar theoretical analysis of the penalty method for non-bilevel problems was recently presented in [\cite{9120361}] using a different stopping condition and a weaker constraint quantification condition. Although we have a similar theoretical result, [\cite{9120361}] shows results on small-scale experiments on non-bilevel problems whereas we demonstrate our Alg.~\ref{alg:main} on large-scale bilevel problems appearing in machine learning involving deep neural networks.
% and show significant gains in comparison with other existing bilevel solvers.

{\bf Comparison of Penalty with other algorithms.}
Previous algorithms for solving bilevel optimization problems rely on computing an approximation to the hypergradient using forward/reverse-mode differentiation (FMD/RMD) or approximately solving a linear system (ApproxGrad) (See Appendix~\ref{app:other methods} for a summary). 
Unlike these methods, Penalty does not require an explicit computation of the hypergradient in each iteration and obtains the exact hypergradient asymptotically, leading to better run-times for the experiments (Sec.~\ref{sec:convergence_speed} and Appendix \ref{sec:impact_of_T}). Secondly, Penalty provides an easy way to incorporate upper-level constraints for various problems, unlike other methods which have to rely on projection to satisfy the constraints. Although projection is a reasonable method for convex constraints but for non-convex constraints, where computing the projection is intractable, Penalty has a significant computational advantage. Thirdly, for problems with multiple lower-level solutions, Penalty converges to the optimistic case solution without modification (Appendix~\ref{sec:non-uniqueness}) whereas convergence of other methods is unknown because the hypergradient is dependent on the choice of the particular minimizer of the lower-level cost. Lastly, for unconstrained problems, we show the trade-offs of the different methods for computing the hypergradient in Table~\ref{tbl:complexity} and see that as $T$ (total number of $v$-updates per one hypergradient computation) increases, FMD and RMD become impractical due to $O(cUT)$ time complexity and $O(U+VT)$ space complexity, respectively, whereas ApproxGrad and Penalty, have the same linear time complexity and constant space complexity. 
Recently, \cite{lorraine2020optimizing} proposed an implicit gradient-based method similar to ApproxGrad where the Hessian inverse term in the hypergradient is approximated using the terms of the Neumann series. 
%The space and time complexity of this method is the same as that of Penalty since solving the lower-level problem and computing the Hessian inverse for $T$ steps requires $O(cT)$ time and $O(U+V)$ space. 
Another recent method, BVFIM, \cite{liu2021value}, solves the bilevel problem by using a regularized value function of the lower-level problem in the upper-level objective and then solves a sequence of unconstrained problems. 
The time and space complexity of both these methods are the same as those of Penalty. 
Since the complexity analysis does not show the quality of the hypergradient approximation of these methods, we extensively compare Penalty against existing bilevel solvers (Sec.~\ref{sec:experiments}) and show its superiority on synthetic and real problems. 
We present a detailed comparison between ApproxGrad and Penalty since they have the same complexities and show that Penalty has better performance, run-time, and convergence speed (Fig.~\ref{fig:acc_vs_wct} and Fig.~\ref{fig:effect_of_T} in Appendix~\ref{app:additional_experiments}). 

{\bf Improvements.} Some of the assumptions made for analysis such as unique lower-level solution may not hold in practice. Here, we discuss some techniques to address these issues and improve Alg.~\ref{alg:main}. The first problem is non-convexity of the lower-level cost $g$, which leads to the problem that a local minimum of $\|\gv\|$ can be either minima, maxima, or a saddle point of $g$. To address this we modify the $v$-update in Eq.~(\ref{eq:bilevel_with_constraints_penalty_formulation}) by adding a `regularization' term $\lambda_k g$ to the cost. %so that $v$ finds a minimum of $g$.
The minimization over $v$ becomes ${\min_{v}(\tilde{f} + \lambda_k g)}$. This affects the optimization in the beginning; but as $\lambda_k\to 0$ the final solution remains unaffected with or without regularization.
The second problem is that the tolerance $\nabla_{(u,v)} \tilde{f}(u_k,v_k;\gamma_k) \leq \epsilon_k$ may not be satisfied in a limited time and the optimization may terminate before $\gamma_k$ becomes large enough. The method of multipliers and augmented Lagrangian [\cite{bertsekas1976penalty}] can help the penalty method to find a solution with a finite $\gamma_k$. Thus we add the term $\gv^T \nu$ to the penalty function in Eq.~(\ref{eq:bilevel_with_constraints_penalty_formulation}) to get ${\min_{u,v}(\tilde{f} + \gv^T \nu)}$ and use the method of multipliers to update $\nu$. 
In summary, we use the following update rules.
%\begin{eqnarray*}
$u_{k+1} \leftarrow u_k - \rho \nabla_u ( \tilde{f} + \gv^T \nu_k )$,
$v_{k+1} \leftarrow v_k  - \sigma \nabla_v (\tilde{f} + \gv^T \nu_k + \lambda_k g)$,
$\nu_{k+1} \leftarrow \nu_k + \gamma_k \gv$.
%\end{eqnarray*}
Practically, the improvement due to these changes is moderate and problem-dependent (see Appendix~\ref{app:extensions} for details).

\begin{table*}[t]
  \caption{\small{Complexity analysis of various bilevel methods (Appendix~\ref{app:other methods}) on unconstrained problems. $U$ is the size of $u$, $V$ is the size of $v$, $T$ is the number of $v$-updates per one hypergradient computation. $P$, $p$ and $q$ are
  variables of size $U \times V$, $U \times 1$, $V \times 1$ used to compute the hypergradient. We assume gradient of $f$, $g$, Hessian-vector product $\gvv$ and Jacobian-vector product $\guv$ can be computed in time $c=c(U,V)$. 
  %has $O(V)$ complexity [\cite{pearlmutter1994fast}]. 
  (We use gradient descent as the process for FMD and RMD.)}
  }
  \label{tbl:complexity}
  \centering
  \resizebox{0.85\textwidth}{!}{
   \small
      \begin{tabular}{c|cccc}
        \toprule
        Method & $v$-update & Intermediate updates & Time & Space \\
        \midrule
        \multirow{2}{*}{FMD} & \multirow{2}{*}{$v \leftarrow v -\rho \gv$} & 
        \multirow{2}{*}{$P\leftarrow P(I\textrm{ - }\rho\gvv) -\rho \guv$} & 
        \multirow{2}{*}{$O(cUT)$} & \multirow{2}{*}{$O(UV)$} \\ \\\cline{1-5}
        
        \multirow{2}{*}{RMD} & \multirow{2}{*}{$v \leftarrow v -\rho \gv$} & 
        $p\leftarrow p -\rho \guv\cdot q$ &
        \multirow{2}{*}{\boldmath$O(cT)$} & \multirow{2}{*}{$O(U+VT)$} \\
        & & $q\leftarrow q - \rho\gvv \cdot q$ & & \\\cline{1-5}
        
        \multirow{2}{*}{ApproxGrad} & \multirow{2}{*}{$v \leftarrow v -\rho \gv$} &
        \multirow{2}{*}{$q \leftarrow q -\rho \gvv[\gvv\cdot q - \fv] $} &
        \multirow{2}{*}{\boldmath$O(cT)$} & \multirow{2}{*}{\boldmath$O(U\textrm{+}V)$} \\ \\\cline{1-5}
        
        \multirow{2}{*}{Penalty} & \multirow{2}{*}{$v \leftarrow v -\rho[\fv\textrm{ + }\gamma \gvv\gv]$} & 
        \multirow{2}{*}{{Not required}} & 
        \multirow{2}{*}{\boldmath$O(cT)$} & \multirow{2}{*}{\boldmath$O(U\textrm{+}V)$} \\ \\
        \bottomrule
      \end{tabular}
      }
\end{table*}

%%%%%%%%%%%%%%%%%%%%%%%%%%%%%%%%%%%%%%%%%%%%%%%%%%%%%%%%%%%%%%%%%%%%%%%%%%%%%%%%%%%%%%%%%%%%%%%%%%%
\section{Experiments}\label{sec:experiments}
%%%%%%%%%%%%%%%%%%%%%%%%%%%%%%%%%%%%%%%%%%%%%%%%%%%%%%%%%%%%%%%%%%%%%%%%%%%%%%%%%%%%%%%%%%%%%%%%%%%
In this section, we evaluate the performance of the proposed method (Penalty) on machine learning problems discussed earlier. Since previous bilevel methods in machine learning dealt only with unconstrained problems, we evaluate Penalty on unconstrained problems in Sec.~\ref{sec:synthetic}-Sec.~\ref{sec:experiment2} and show its effectiveness in solving constrained problems in Sec.~\ref{sec:experiment3_1}.
\vspace{-0.3cm}
\subsection{Synthetic problems}\label{sec:synthetic}
We compare Penalty against gradient descent (GD), reverse-mode differentiation (RMD),
and approximate hypergradient method (ApproxGrad) on synthetic examples. We omit the comparison with forward-mode differentiation (FMD) because of its impractical time complexity for larger problems (Table~\ref{tbl:complexity}). GD refers to the alternating minimization: $u \leftarrow u - \rho \nabla_u f$, $v \leftarrow v - \sigma \nabla_v g$. For RMD, we used the version with gradient descent as the lower-level process. For ApproxGrad experiments, we used Adam for all updates including solving the linear system. We found that Adam performs similar to the conjugate gradient method for solving the linear system with enough iterations. Since Adam uses the GPU effectively we used it for all experiments with ApproxGrad with a large number of iterations and added regularization to improve the ill-conditioning when solving the linear systems.
Using simple quadratic surfaces for $f$ and $g$, we compare all the algorithms by observing their convergence as a function of the number of upper-level iterations for a different number of lower-level updates ($T$). We measure the convergence using the Euclidean distance of the current iterate $(u,v)$ from the closest optimal solution $(\us,\vs)$. Since the synthetic examples are not learning problems, we can only measure the distance of the iterates to an optimal solution ($\|(u,v)-(u^\ast,v^\ast)\|_{2}^{2}$). Fig.~\ref{fig:synthetic1} shows the performance of two 10-dimensional examples described in the caption (see Appendix~\ref{app:synthetic examples}). As one would expect, increasing the number $T$ of $v$-updates makes all the algorithms, except GD, better since doing more lower-level iterations makes the hypergradient estimation more accurate but it increases the run time of the methods. However, even for these examples, only Penalty and ApproxGrad converge to the optimal solution and GD and RMD converge to non-solution points. A large $T$ (eg. 100) makes RMD converge too but for large-scale experiments, it's impractical to have such a large T. From Fig.~\ref{fig:synthetic1}(b), we see that Penalty converges even with $T$=1 while ApproxGrad requires at least $T$=10 and RMD needs an even higher $T$ to converge. %, showing that our method approximates the hypergradient accurately even with smaller $T$. 
This translates to smaller run-time for our method since run-time is directly proportional to $T$ (Table~\ref{tbl:complexity}).

\begin{figure}[tb]
\small
\centering
\subfigure{\label{fig:synthetic1_a}\includegraphics[width=0.45\textwidth]{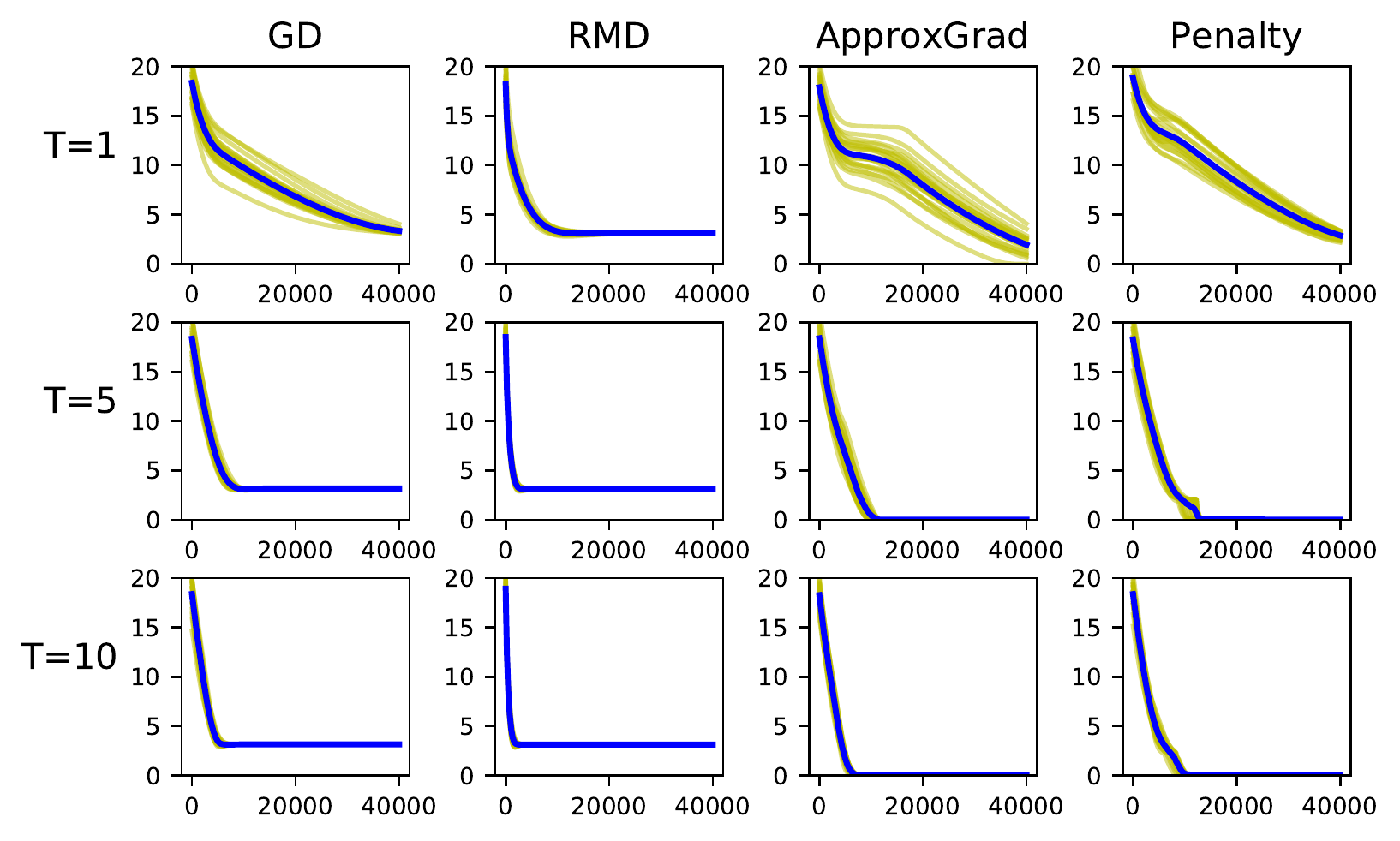}}
\subfigure{\label{fig:synthetic1_b}\includegraphics[width=0.45\textwidth]{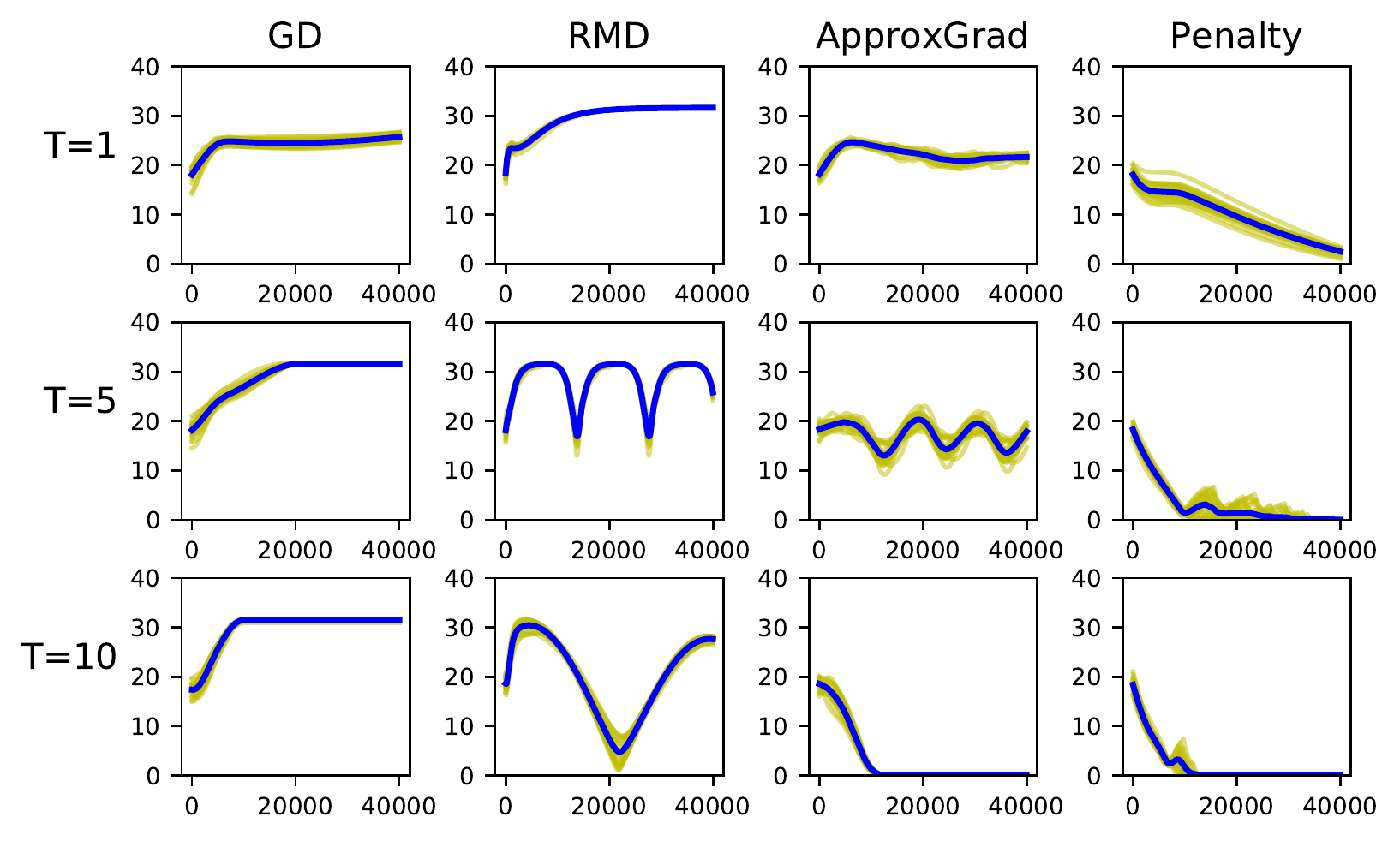}}
\vspace{-0.4cm}
\caption{Convergence of GD, RMD, ApproxGrad, and Penalty vs upper-level epochs (x-axis) on synthetic problems: $f(u,v) = \|u\|^2\textrm{ + }\|v\|^2, g(u,v) = \|1\textrm{-}u\textrm{-}v\|^2$ (a) and $f(u,v) = \|v\|^2\textrm{ - }\|u-v\|^2, g(u,v) = \|u\textrm{-}v\|^2$ (b). Mean curve (blue) is superimposed on 20 independent trials (yellow).
}
\vspace{-1cm}
\label{fig:synthetic1}
\end{figure}

\begin{figure}[tb]
\small
\centering
\subfigure{\includegraphics[width=0.45\textwidth]{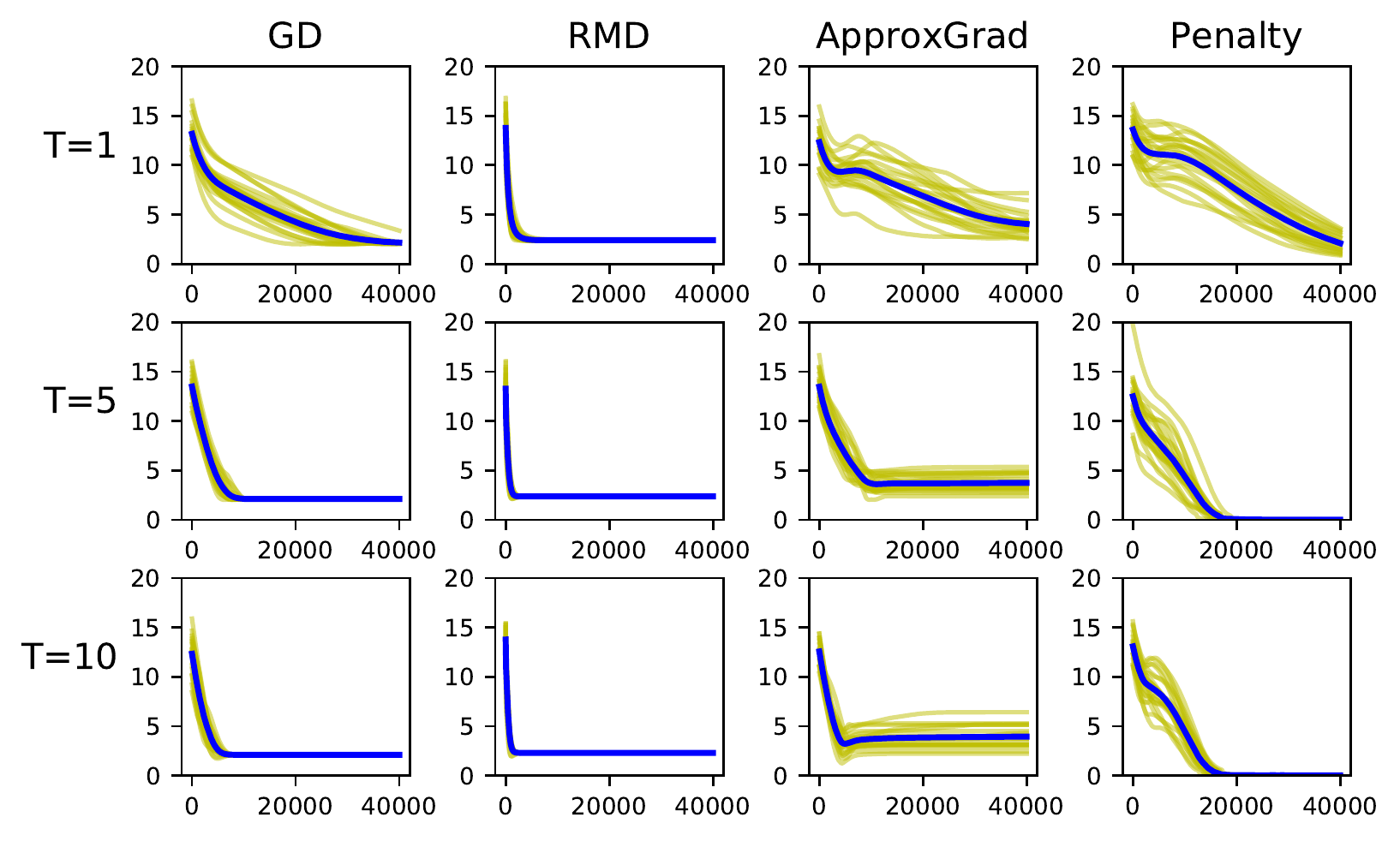}}
\subfigure{\includegraphics[width=0.45\textwidth]{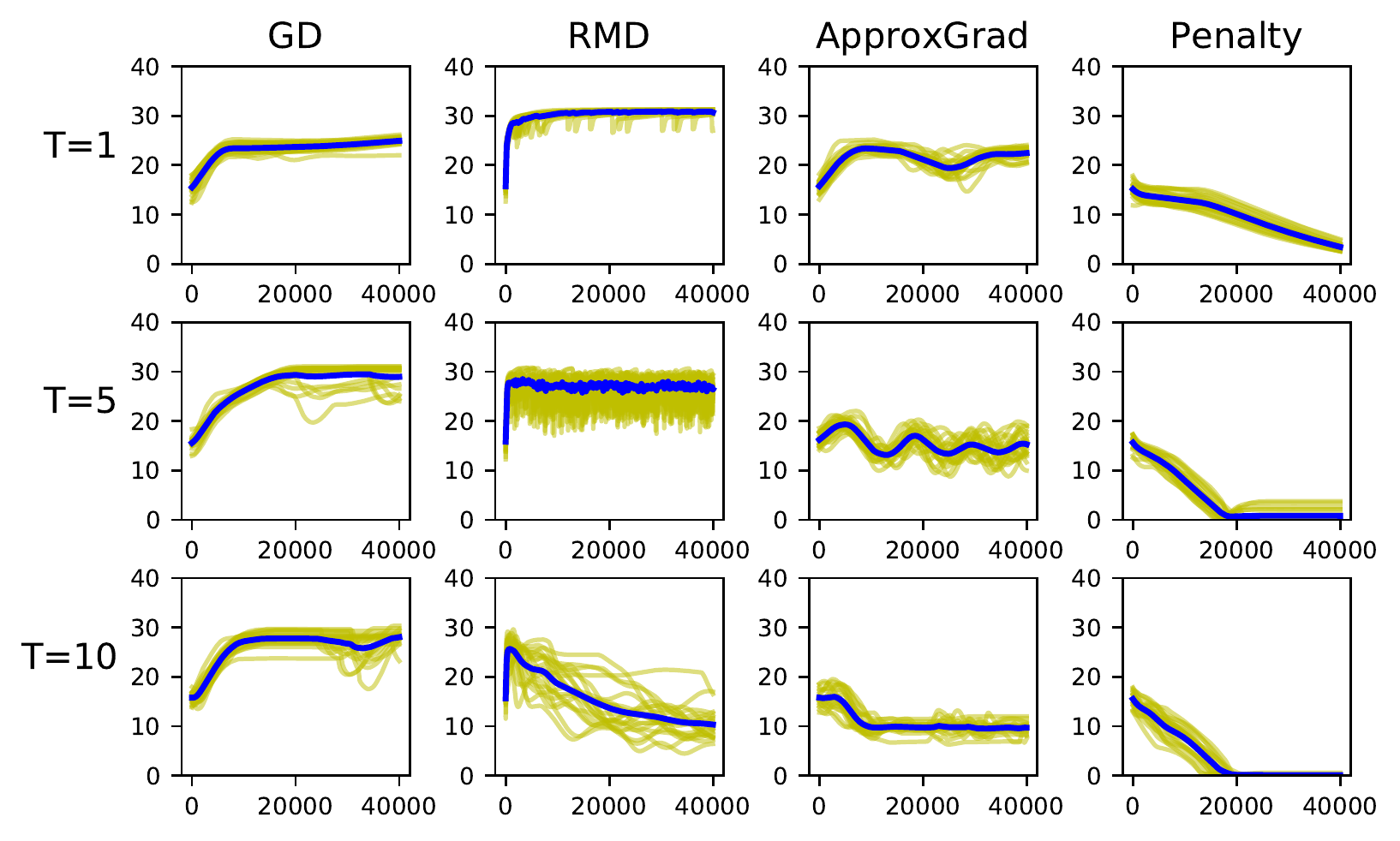}}
\vspace{-0.4cm}
\caption{Convergence of GD, RMD, ApproxGrad, and Penalty vs upper-level epochs (x-axis) for synthetic problems: $f(u,v)\textrm{=}\|u\|^2\textrm{ + }\|v\|^2, g(u,v)\textrm{=}(1\textrm{-}u\textrm{-}v)^TA^TA(1\textrm{-}u\textrm{-}v)$ (a) and $f(u,v)\textrm{=}\|v\|^2\textrm{-}(u\textrm{-}v)^TA^TA(u\textrm{-}v),  g(u,v)\textrm{=}(u\textrm{-}v)^TA^TA(u\textrm{-}v)$ (b), where $A^TA$ is a rank-deficient random matrix. Mean curve (blue) is superimposed on 20 independent trials (yellow).
}
\vspace{-1cm}
\label{fig:synthetic2}
\end{figure}

In Fig.~\ref{fig:synthetic2} we show examples with ill-conditioned or singular Hessian for the lower-level problem ($\gvv$). The ill-conditioning poses difficulty for the methods since the implicit function theorem requires the invertibility of the Hessian at the solution point. Fig.~\ref{fig:synthetic2} shows that only Penalty converges to the true solution even though we added regularization ($\gvv + \lambda I$) while solving the linear system in ApproxGrad. 
%We ascribe the robustness of Penalty to its simplicity. 
%and to the fact that it naturally handles non-uniqueness of the lower-level solution (see Appendix~\ref{sec:non-uniqueness}). 
Additionally, we report the wall clock times for different methods on the four examples tested here in Table~\ref{Table:wall-clock time} in the Appendix. We can see that as we increase the number of lower-level iterations all methods get slower but Penalty is still faster than both RMD and ApproxGrad. Although GD is the fastest but as seen in Fig.~\ref{fig:synthetic1} and~\ref{fig:synthetic2}, GD does not converge to optima since not all bilevel solutions are saddle points. Moreover, for bilevel problems such as in Eq.~(\ref{eq:importance learning}) and Eq.~(\ref{eq:clean-label-poisoning}) the upper-level costs are not directly dependent on the upper-level variable, thus the comparison with GD is omitted for large-scale bilevel problems.

%\begin{table*}[tb]
\begin{wraptable}[9]{r}{9.5cm}
  \vspace{-0.9cm}
  \caption{Test accuracy (\%) of the classifier learned after data denoising using importance learning. (Mean $\pm$ s.d. of 5 runs)}
  \vspace{-0.3cm}
  \label{Table:large_data_denoising}
  \centering
  \small
  \resizebox{0.6\textwidth}{!}{
  \begin{tabular}{c|c|cc|cc}
    \toprule
   Dataset  & & & & \multicolumn{2}{c}{Bilevel Approaches} \\
    (Noise\%) & Oracle & Val-Only & Train+Val & ApproxGrad & Penalty\\
    \midrule
    MNIST (25) & 99.3$\pm$0.1 & 94.6$\pm$0.3 & 83.9$\pm$1.3 & 98.11$\pm$0.08  & {\bf98.89}$\pm$0.04\\
    MNIST (50) & 99.3$\pm$0.1 & 94.6$\pm$0.3 & 60.8$\pm$2.5 & 97.27$\pm$0.15 & {\bf97.51}$\pm$0.07\\
    
    CIFAR10 (25) & 82.9$\pm$1.1 & 70.3$\pm$1.8 & 79.1$\pm$0.8 & 71.59$\pm$0.87 & {\bf79.67}$\pm$1.01\\
    CIFAR10 (50) & 80.7$\pm$1.2 & 70.3$\pm$1.8 & 72.2$\pm$1.8 & 68.08$\pm$0.83 & {\bf79.03}$\pm$1.19\\
    
    SVHN (25) & 91.1$\pm$0.5 & 70.6$\pm$1.5 & 71.6$\pm$1.4 & 80.05$\pm$1.37 & {\bf88.12}$\pm$0.16\\
    SVHN (50) & 89.8$\pm$0.6 & 70.6$\pm$1.5 & 47.9$\pm$1.3 & 74.18$\pm$1.05 & {\bf85.21}$\pm$0.34\\
    \bottomrule
  \end{tabular}
  }
\end{wraptable}

\subsection{Data denoising by importance learning}
\label{sec:experiment1}
We evaluate the performance of Penalty on learning a classifier from a dataset with corrupted labels by posing the problem as an importance learning problem (Eq.~(\ref{eq:importance learning})). The performance of the learned classifier using Penalty, with 20 lower-level updates (T), is evaluated against the following classifiers: \textbf{Oracle}: classifier trained on the portion of the training data with clean labels and the validation data, \textbf{Val-only}: classifier trained only on the validation data, \textbf{Train+Val}: classifier trained on the entire training and validation data, \textbf{ApproxGrad}: classifier trained with our implementation of ApproxGrad, with 20 lower-level and 20 linear system updates. We evaluate the performance on MNIST, CIFAR10, and SVHN datasets with the validation set sizes of 1000, 10000, and 1000 points respectively. We used convolutional neural networks (architectures described in Appendix~\ref{app:experiment1}) at the lower-level for this task. Table~\ref{Table:large_data_denoising} summarizes our results for this problem and shows that Penalty outperforms Val-only, Train+Val, and ApproxGrad by significant margins and in fact performs very close to the Oracle classifier (which is the ideal classifier), even for high noise levels. This demonstrates that Penalty is extremely effective in solving bilevel problems involving several million variables (see Table~\ref{Table:variable_sizes} in Appendix) and shows its effectiveness at handling non-convex problems. Along with the improvement in terms of accuracy over ApproxGrad, Penalty also gives better run-time per upper-level iteration and higher convergence speed leading to a decrease in the overall wall-clock time of the experiments (Fig.~\ref{fig:acc_vs_wct}(a) and Fig.~\ref{fig:effect_of_T}(a) in Appendix~\ref{sec:impact_of_T}).

Next, evaluate Penalty against recent methods [\cite{ren18l2rw,han2018coteaching}] which use a meta-learning-based approach to assigns weights to the examples. We used the same setting as \citeauthor{ren18l2rw}'s uniform flip experiment with 36\% label noise on the CIFAR10 dataset and Wide ResNet  28-10 (WRN-28-10) model (see Appendix ~\ref{app:additional_imp_learning}). Using $T=1$ for Penalty and 1000 validation points, we get an accuracy of 87.41 $\pm$ 0.26 (mean $\pm$ s.d. of 5 trials) comparable to 86.92 (\citeauthor{ren18l2rw}) and 89.27 (\citeauthor{han2018coteaching}). Thus, Penalty achieves comparable performance to these specialized methods designed for the data denoising problem. The enormous size of WRN-28-10, restricted us to use $T=1$ but we expect larger $T$ to improve the results (Fig.~\ref{fig:effect_of_T}(a) in Appendix~\ref{sec:impact_of_T}). 
We also compared Penalty against an RMD-based method [\cite{franceschi2017forward}], using the same setting as their Sec.~5.1, on a subset of MNIST data corrupted with 50\% label noise and softmax regression as the model (see Appendix~\ref{app:additional_imp_learning}). The accuracy of the classifier trained on a subset of the data with points having importance values greater than 0.9 (as computed by Penalty with $T=20$) along with the validation set is 90.77\% better than 90.09\% reported by the RMD-based method.

\begin{table*}[tb]
  \caption{\small{Few-shot classification accuracy (\%) on Omniglot and Mini-ImageNet. We report mean$\pm$s.d. for Omniglot and 95\% confidence intervals for Mini-Imagenet over five trials. Results for learning a common representation using Penalty, ApproxGrad and RMD [\cite{franceschi2018bilevel}] are averaged over 600 randomly sampled tasks from the meta-test set. Results for previous methods using similar models are also reported (MAML [\cite{finn2017model}], iMAML [\cite{rajeswaran2019meta}], Prototypical Networks (Proto Net) [\cite{snell2017prototypical}], Relation Networks (Rel Net) [\cite{sung2018learning}], SNAIL [\cite{mishra2017simple}]}).}
  \label{Table:few-shot_learning}
  \centering
  \resizebox{0.95\textwidth}{!}{
  \small
  \begin{tabular}{c|ccccc|ccc}
    \toprule
    \multirow{2}{*}{} & \multicolumn{5}{c|}{} & \multicolumn{3}{c}{Learning a common representation} \\
    & \makecell{MAML} & \makecell{iMAML} & \makecell{Proto Net} & \makecell{Rel Net} & \makecell{SNAIL} 
    & \makecell{RMD} & ApproxGrad & Penalty \\
    \midrule
    Omniglot & \multicolumn{8}{c}{}\\
    \midrule
    5-way 1-shot & 98.7 & {\bf99.50}$\pm$0.26 & {98.8} & {\bf99.6}$\pm$0.2 & {99.1} & {\bf98.6} & 97.75$\pm$0.06 & 97.83$\pm$0.35 \\
    5-way 5-shot & {\bf99.9}& 99.74 $\pm$ 0.11 & 99.7 & {\bf99.8}$\pm$0.1 & {99.8} & {\bf99.5} & {\bf99.51}$\pm$0.05 & {\bf99.45}$\pm$0.05 \\
    20-way 1-shot & 95.8 & 96.18 $\pm$ 0.36 & 96.0 & {\bf97.6}$\pm$0.2 & {\bf97.6} & {\bf95.5} & 94.69$\pm$0.22 & 94.06$\pm$0.17 \\
    20-way 5-shot & 98.9 & 99.14 $\pm$ 0.10 & 98.9 & 99.1$\pm$0.1 & {\bf99.4} & 98.4 & {\bf98.46}$\pm$0.08 & {\bf98.47}$\pm$0.08 \\
    \midrule
    Mini-Imagenet & \multicolumn{8}{c}{}\\
    \midrule
    5-way 1-shot & 48.70$\pm$1.75 & 49.30 $\pm$ 1.88 & 49.42$\pm$0.78 & 50.44$\pm$0.82 & {\bf55.71}$\pm$0.99 &50.54$\pm$0.85 & 43.74$\pm$1.75 & {\bf53.17}$\pm$0.96 \\
    5-way 5-shot & 63.11$\pm$0.92 & - & 68.20$\pm$0.66 & 65.32$\pm$0.82 & {\bf68.88}$\pm$0.92 & 64.53$\pm$0.68 & 65.56$\pm$0.67 & {\bf67.74}$\pm$0.71 \\
    \bottomrule
  \end{tabular}
  }
\end{table*}

\subsection{Few-shot learning}\label{sec:experiment2}
Next, we evaluate the performance of Penalty on the task of learning a common representation for the few-shot learning problem. We use the formulation presented in Eq.~(\ref{eq:meta-learning}) and use Omniglot [\cite{lake2015human}] and Mini-ImageNet [\cite{vinyals2016matching}] datasets for our experiments. Following the protocol proposed by [\cite{vinyals2016matching}] for $N$-way $K$-shot classification, we generate meta-training and meta-testing datasets. Each meta-set is built using images from disjoint classes. For Omniglot, our meta-training set comprises of images from the first 1200 classes and the remaining 423 classes are used in the meta-testing dataset. We also augment the meta-datasets with three different rotations (90, 180, and 270 degrees) of the images as used by [\cite{santoro2016one}]. For the experiments with Mini-Imagenet, we used the split of 64 classes in meta-training, 16 classes in meta-validation, and 20 classes in meta-testing as used by [\cite{ravi2016optimization}].

Each meta-batch of the meta-training and meta-testing dataset comprises of a number of tasks which is called the meta-batch-size. Each task in the meta-batch consists of a training set with $K$ images and a testing set consists of 15 images from $N$ classes. We train Penalty using a meta-batch-size of 30 for 5 way and 15 for 20-way classification for Omniglot and with a meta-batch-size of 2 for Mini-ImageNet experiments. The training sets of the meta-train-batch are used to train the lower-level problem and the test sets are used as validation sets for the upper-level problem in Eq.~(\ref{eq:meta-learning}). The final accuracy is reported using the meta-test-set, for which we fix the common representation learned during meta-training. We then train the classifiers at the lower-level for 100 steps using the training sets from the meta-test-batch and evaluate the performance of each task on the associated test set from the meta-test-batch. The average performance of Penalty and ApproxGrad over 600 tasks is reported in Table~\ref{Table:few-shot_learning}. Penalty outperforms other bilevel methods namely the ApproxGrad (trained with 20 lower-level iterations and 20 updates for the linear system) and the RMD-based method [\cite{franceschi2018bilevel}] on Mini-Imagenet and is comparable to them on Omniglot. We demonstrate the convergence speed of Penalty in comparison to ApproxGrad (Fig.~\ref{fig:acc_vs_wct}(b)) and the trade-off between using higher $T$ and time for the two methods (see Fig.~\ref{fig:effect_of_T} and Appendix~\ref{sec:impact_of_T} for a detailed evaluation of the impact of T on the performance of Penalty) and show that Penalty converges much faster than ApproxGrad. In comparison to non-bilevel approaches that used models of a similar size as ours, Penalty is comparable to most approaches and is only slightly worse than [\cite{mishra2017simple}] which makes use of temporal convolutions and soft attention.
We used convolutional neural networks and a residual network for learning the common task representation (upper-level) for Omniglot and Mini-ImageNet, respectively, and use logistic regression to learn task-specific classifiers (lower-level). 

\subsection{Training-data poisoning}\label{sec:experiment3_1}
Here, we evaluate Penalty on the clean label data poisoning attack problem. We use the setting presented in [\cite{shafahi2018poison}] which adds a single poison point to misclassify a particular target image from the test set. 
% and use the feature collision mechanism of [\cite{shafahi2018poison} in bilevel optimization to generate poisoning points. 
We use the dog vs. fish dataset and InceptionV3 network as the representation map. We choose a target image $t$ and a base image $b$ from the test set such that the representation of the base is closest to that of the target but has a different label. The poison point is initialized from the base image. Unlike the original approach [\cite{shafahi2018poison}], we use a bilevel formulation along with a constraint on the maximum perturbation. We solve the bilevel problem in Eq.~(\ref{eq:clean-label-poisoning}) with an additional feature collision term ($\|r(t) - r(u)\|_2^2$) from [\cite{shafahi2018poison}] which is shown to be helpful in practice
%\begin{figure}[tp]
\begin{wrapfigure}[12]{r}{0.45\textwidth}
\vspace{-0.4cm}
\small
\begin{center}
\includegraphics[width=0.45\textwidth]{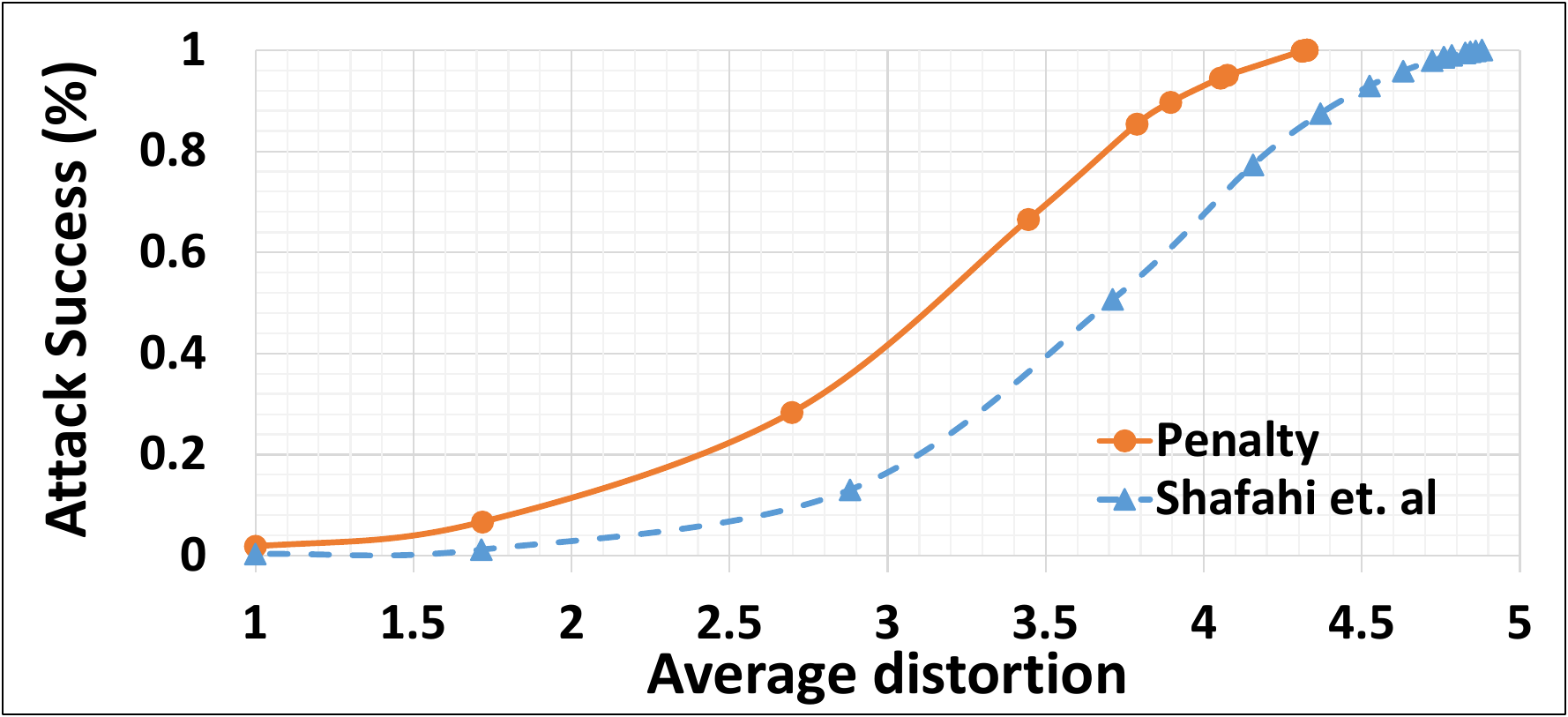}
\vspace{-0.8cm}
\caption{Penalty finds clean label poisoning data with smaller distortion compared to \cite{shafahi2018poison} making the attack stealthier.}
\label{fig:dataset_posioning_dogfish}
\end{center}
\end{wrapfigure}
where $r(\cdot)$ is the 2048-dimensional representation map. The lower-level problem trains a softmax classifier on top of this representation. (The full problem is in Eq.~(\ref{eq:clean-label-poisoning-full}) of Appendix~\ref{app:clean-label-full}).
We evaluate the attack success by retraining the softmax classifier on the clean dataset augmented with the poison point. The attack is considered successful if the target point is misclassified after retraining. 
%The attack success of Penalty using just a single poisoned point per target image is 100\% 
%with $\epsilon = 16$ (maximum distortion)
%. For comparison attack success with using a single poisoned point by [\cite{koh2017understanding} and [\cite{shafahi2018poison} are 57\% and 100\% respectively. 
Choosing each correctly classified point from the test set as the target we search for the smallest $\epsilon \in \{1,2,...,16\}$ that when used as the upper bound for the perturbation of the poison image, leads to misclassification of that target. 
%We compare the average distortion found by Penalty against the approach presented in [\cite{shafahi2018poison}. 
For comparison, we use the Alg. 1 from [\cite{shafahi2018poison}] using $\ell_2$ projection after each step to constrain the amount of perturbation. Similar to Penalty, the smallest $\epsilon $ that causes misclassification is recorded for each target (see Appendix~\ref{app:clean-label-full} for details). Fig.~\ref{fig:dataset_posioning_dogfish} shows that Penalty achieves higher attack success with the same amount of distortion, or in another view, achieves the same attack success with much less distortion. We ascribe this benefit to the use of the bilevel method as opposed to a non-bilevel approach in [\cite{shafahi2018poison}].
Poison points generated by Penalty are shown in Fig.~\ref{fig:attack_images} in Appendix.
Lastly, we also test Penalty on a data poisoning problem without upper-level constraint (Appendix~\ref{sec:experiment3_2}). We compare it with RMD and ApproxGrad and show that Penalty outperforms RMD and is comparable to ApproxGrad. 

\begin{figure*}[t]
\small

\centering
\subfigure[Data denoising for MNIST with 25\% label noise and 1000 validation points. Figures show the performance of the classifier learned using the importance re-weighted dataset during bilevel training. 75\% train accuracy and 100\% validation accuracy is an indication that the bilevel problem is correctly solved. We see that Penalty reaches this point earlier than ApproxGrad signifying better convergence speed. High test accuracy indicates the importance values are enabling good generalization properties on the test set.]
{
\label{fig:data_denoising_wct_1}
\includegraphics[width=\textwidth,height=3.5cm]{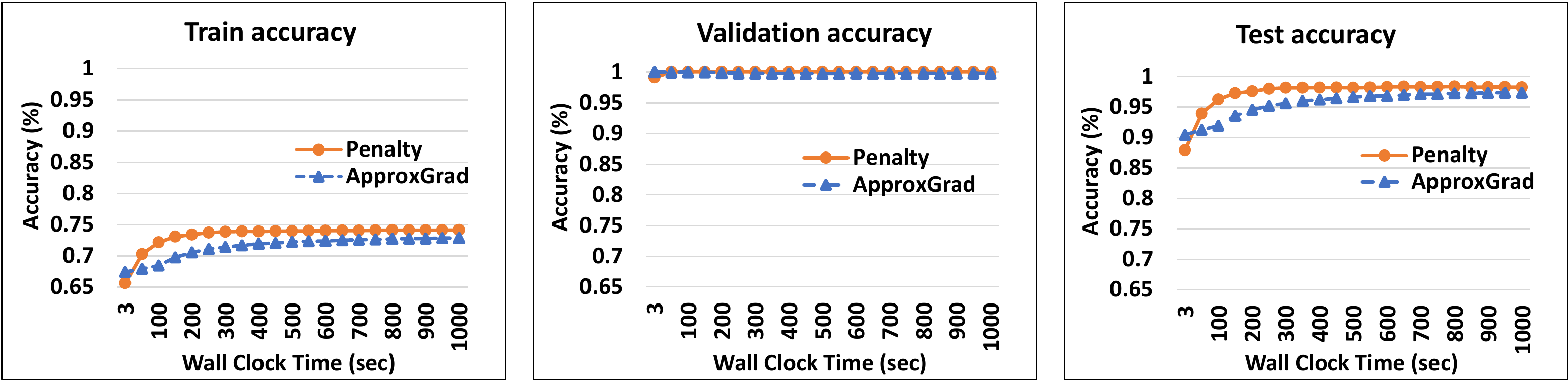}
}
\subfigure[5-way 5-shot learning on Miniimagenet. Figures show the average performance of the softmax classifier trained using the train sets of meta-train and meta-test sets, respectively, on top of the common representation which is learned during bilevel training. The accuracy on the test set of meta-train indicates how well the upper-level problem is being solved. High performance on test sets of meta-test shows that the common representation generalizes to new tasks not observed during training.]{
\label{fig:few_shot_wct_1}
\includegraphics[width=0.8\textwidth,height=3.7cm]{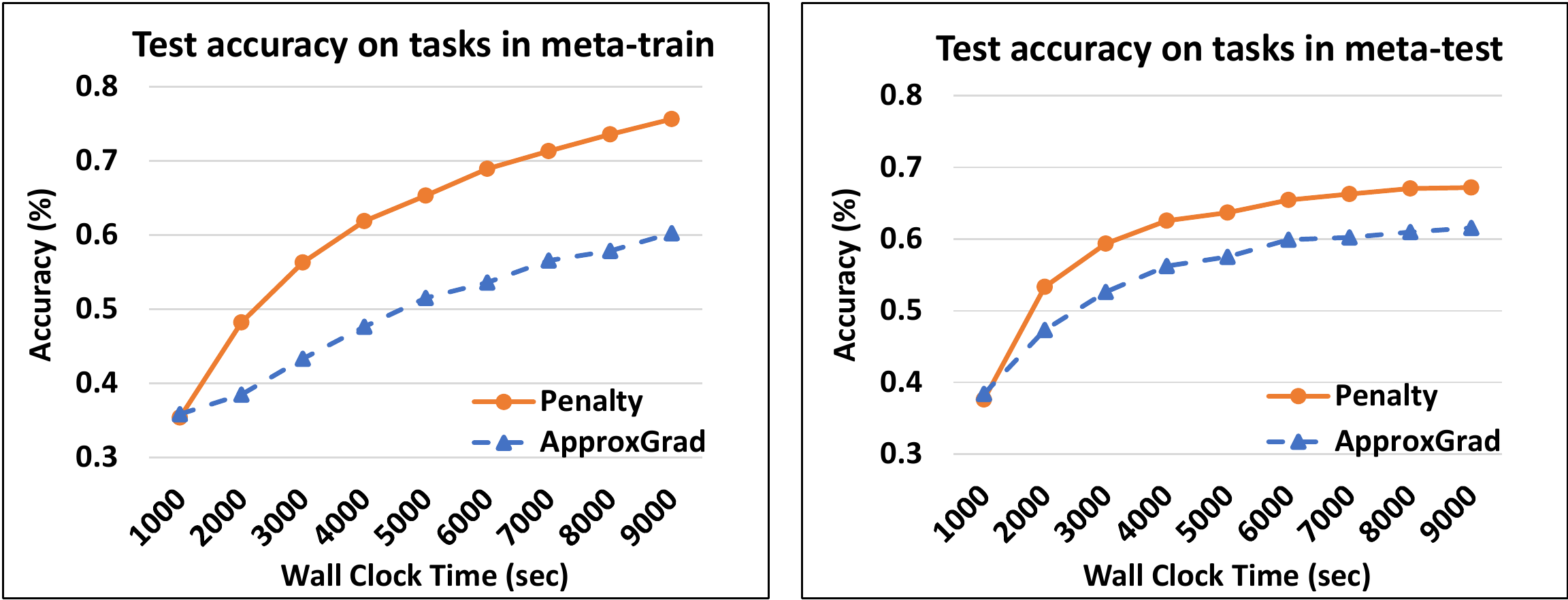}
}
%\subfigure[Untargeted data poisoning]{
%\label{fig:data_posioning_wct_1}\includegraphics[width=0.3\columnwidth,height=4cm]{acc_wct_dp.pdf}
%}
\vspace{-0.3cm}
\caption{
\small{Comparison of accuracy and wall clock time (convergence speed) during bilevel training with Penalty and ApproxGrad on the data denoising problem (Sec.~\ref{sec:experiment1}) and the few-shot learning problem (Sec.~\ref{sec:experiment2}).
}}
\vspace{-0.9cm}
\label{fig:acc_vs_wct}
\end{figure*}
\vspace{-0.3cm}
\subsection{Convergence speed comparison of Penalty and ApproxGrad}
\label{sec:convergence_speed}
Here, we compare the accuracy and wall clock time for Penalty and ApproxGrad on real (Fig.~\ref{fig:acc_vs_wct}) and synthetic (Table~\ref{Table:wall-clock time} in Appendix) problems and show that Penalty converges faster than ApproxGrad. We use the data denoising and few-shot learning problems discussed earlier for this comparison. 
%For untargeted data poisoning the methods have similar convergence properties. 
In Fig.~\ref{fig:acc_vs_wct}, the training accuracy is indicative of the optimality of the lower-level problem and the validation/meta-train-test accuracy shows how well the upper-level problem is being solved during bilevel training. For data denoising, the test accuracy is a measure of how well the classifier learned using the importance values obtained from the bilevel training generalizes on the test set. On the other hand, for few-shot learning, the test accuracy on the tasks in the meta-test set is a measure of how well the common representation, learned from bilevel training, generalizes to new unseen tasks after training the softmax classifier on the training set of the meta-test dataset. 
%For untargeted poisoning the test accuracy shows the degradation in the accuracy of a classifier trained on the poisoned data. 

We see that for the importance learning problem Penalty quickly reaches a training accuracy of roughly 75\% which is desirable since the dataset contains 25\% noise. High test and validation accuracy are indicative that the importance values learned for the training data indeed helps to learn a good classifier. To report the performance for few-shot learning we train the softmax classifier on top of the common representation using the data from the meta-train-train and meta-test-train sets and then evaluate their performance on meta-train-test and meta-test-test sets. The results in Fig.~\ref{fig:acc_vs_wct}(b) are an average of over 600 tasks. The performance on the meta-train-test is indicative of how well the upper-level problem is being solved. Accuracy on meta-test-test indicates the performance of the learned representation on new tasks which have not been observed during training. In both the experiments we see Penalty has a better convergence speed than ApproxGrad. This directly translates to shorter running times for Penalty, making it a significantly more practical algorithm in comparison to existing bilevel solvers used for machine learning problems.
\vspace{-0.3cm}
%%%%%%%%%%%%%%%%%%%%%%%%%%%%%%%%%%%%%%%%%%%%%%%%%%%%%%%%%%%%%%%%%%%%%%%%%%%%%%%%%%%%%%%%%%%%%%%%%%%%%%%%%%%
\section{Conclusion}\label{sec:conclusion}
%%%%%%%%%%%%%%%%%%%%%%%%%%%%%%%%%%%%%%%%%%%%%%%%%%%%%%%%%%%%%%%%%%%%%%%%%%%%%%%%%%%%%%%%%%%%%%%%%%%%%%%
A wide range of important machine learning problems can be expressed as bilevel optimization problems, and new applications are still being discovered. So far, the difficulty of solving bilevel optimization has limited its widespread use for solving large problems involving deep models. In this work, we presented an efficient algorithm based on penalty function which is simple and has practical advantages over existing methods. Compared to previous methods we demonstrated our method's ability to handle constraints, achieve competitive performance on problems with convex lower-level costs, and get significant improvements on problems with non-convex lower-level costs in terms of accuracy and convergence speed, highlighting its effectiveness in deep learning settings. In future works, we plan to tackle other challenges in bilevel optimization such as handling problems with non-unique lower-level solutions.

%%%%%%%%%%%%%%%%%%%%%%%%%%%%%%%%%%%%%%%%%%%%%%%%%%%%%%%%%%%%%%%%%%%%%%%%%%%%%%
\section{Acknowledgement}
%%%%%%%%%%%%%%%%%%%%%%%%%%%%%%%%%%%%%%%%%%%%%%%%%%%%%%%%%%%%%%%%%%%%%%%%%%%%%%
We thank the anonymous reviewers for their insightful comments and suggestions. This work was supported by the NSF EPSCoR-Louisiana Materials Design Alliance (LAMDA) program \#OIA-1946231.

%\bibliographystyle{plain}
%\small
\vspace{-0.3cm}
\bibliography{acml21}

\begin{thebibliography}{41}
\providecommand{\natexlab}[1]{#1}
\providecommand{\url}[1]{\texttt{#1}}
\expandafter\ifx\csname urlstyle\endcsname\relax
  \providecommand{\doi}[1]{doi: #1}\else
  \providecommand{\doi}{doi: \begingroup \urlstyle{rm}\Url}\fi

\bibitem[Aiyoshi and Shimizu(1984)]{aiyoshi1984solution}
Eitaro Aiyoshi and Kiyotaka Shimizu.
\newblock A solution method for the static constrained stackelberg problem via
  penalty method.
\newblock \emph{IEEE Transactions on Automatic Control}, 29\penalty0
  (12):\penalty0 1111--1114, 1984.

\bibitem[Bard(1991)]{bard1991some}
Jonathan~F Bard.
\newblock Some properties of the bilevel programming problem.
\newblock \emph{Journal of optimization theory and applications}, 68\penalty0
  (2):\penalty0 371--378, 1991.

\bibitem[Bard(2013)]{bard2013practical}
Jonathan~F Bard.
\newblock \emph{Practical bilevel optimization: algorithms and applications},
  volume~30.
\newblock Springer Science \& Business Media, 2013.

\bibitem[Bertsekas(1976)]{bertsekas1976penalty}
Dimitri~P Bertsekas.
\newblock On penalty and multiplier methods for constrained minimization.
\newblock \emph{SIAM Journal on Control and Optimization}, 14\penalty0
  (2):\penalty0 216--235, 1976.

\bibitem[Bertsekas(1997)]{bertsekas1997nonlinear}
Dimitri~P Bertsekas.
\newblock Nonlinear programming.
\newblock \emph{Journal of the Operational Research Society}, 48\penalty0
  (3):\penalty0 334--334, 1997.

\bibitem[Dempe and Dutta(2012)]{dempe2012bilevel}
Stephan Dempe and Joydeep Dutta.
\newblock Is bilevel programming a special case of a mathematical program with
  complementarity constraints?
\newblock \emph{Mathematical programming}, 131\penalty0 (1-2):\penalty0 37--48,
  2012.

\bibitem[Deng et~al.(2009)Deng, Dong, Socher, Li, Li, and
  Fei-Fei]{deng2009imagenet}
Jia Deng, Wei Dong, Richard Socher, Li-Jia Li, Kai Li, and Li~Fei-Fei.
\newblock Imagenet: A large-scale hierarchical image database.
\newblock In \emph{2009 IEEE conference on computer vision and pattern
  recognition}, pages 248--255. Ieee, 2009.

\bibitem[Domke(2012)]{domke2012generic}
Justin Domke.
\newblock Generic methods for optimization-based modeling.
\newblock In \emph{Artificial Intelligence and Statistics}, pages 318--326,
  2012.

\bibitem[Finn et~al.(2017)Finn, Abbeel, and Levine]{finn2017model}
Chelsea Finn, Pieter Abbeel, and Sergey Levine.
\newblock Model-agnostic meta-learning for fast adaptation of deep networks.
\newblock In \emph{International Conference on Machine Learning}, pages
  1126--1135, 2017.

\bibitem[Franceschi et~al.(2017)Franceschi, Donini, Frasconi, and
  Pontil]{franceschi2017forward}
Luca Franceschi, Michele Donini, Paolo Frasconi, and Massimiliano Pontil.
\newblock Forward and reverse gradient-based hyperparameter optimization.
\newblock In \emph{International Conference on Machine Learning}, pages
  1165--1173, 2017.

\bibitem[Franceschi et~al.(2018)Franceschi, Frasconi, Salzo, Grazzi, and
  Pontil]{franceschi2018bilevel}
Luca Franceschi, Paolo Frasconi, Saverio Salzo, Riccardo Grazzi, and
  Massimiliano Pontil.
\newblock Bilevel programming for hyperparameter optimization and
  meta-learning.
\newblock In \emph{International Conference on Machine Learning}, pages
  1563--1572, 2018.

\bibitem[Grazzi et~al.(2020)Grazzi, Franceschi, Pontil, and
  Salzo]{grazzi2020iteration}
Riccardo Grazzi, Luca Franceschi, Massimiliano Pontil, and Saverio Salzo.
\newblock On the iteration complexity of hypergradient computation.
\newblock \emph{International Conference on Machine Learning}, 2020.

\bibitem[Hamm and Noh(2018)]{hamm2018k}
Jihun Hamm and Yung-Kyun Noh.
\newblock K-beam minimax: Efficient optimization for deep adversarial learning.
\newblock \emph{International Conference on Machine Learning (ICML)}, 2018.

\bibitem[Ishizuka and Aiyoshi(1992)]{ishizuka1992double}
Yo~Ishizuka and Eitaro Aiyoshi.
\newblock Double penalty method for bilevel optimization problems.
\newblock \emph{Annals of Operations Research}, 34\penalty0 (1):\penalty0
  73--88, 1992.

\bibitem[Koh and Liang(2017)]{koh2017understanding}
Pang~Wei Koh and Percy Liang.
\newblock Understanding black-box predictions via influence functions.
\newblock In \emph{International Conference on Machine Learning}, pages
  1885--1894, 2017.

\bibitem[Lake et~al.(2015)Lake, Salakhutdinov, and Tenenbaum]{lake2015human}
Brenden~M Lake, Ruslan Salakhutdinov, and Joshua~B Tenenbaum.
\newblock Human-level concept learning through probabilistic program induction.
\newblock \emph{Science}, 350\penalty0 (6266):\penalty0 1332--1338, 2015.

\bibitem[Liu et~al.(2021)Liu, Liu, Yuan, Zeng, and Zhang]{liu2021value}
Risheng Liu, Xuan Liu, Xiaoming Yuan, Shangzhi Zeng, and Jin Zhang.
\newblock A value-function-based interior-point method for non-convex bi-level
  optimization.
\newblock \emph{arXiv preprint arXiv:2106.07991}, 2021.

\bibitem[Liu and Tao(2016)]{liu2016classification}
Tongliang Liu and Dacheng Tao.
\newblock Classification with noisy labels by importance reweighting.
\newblock \emph{IEEE Transactions on pattern analysis and machine
  intelligence}, 38\penalty0 (3):\penalty0 447--461, 2016.

\bibitem[Lorraine et~al.(2020)Lorraine, Vicol, and
  Duvenaud]{lorraine2020optimizing}
Jonathan Lorraine, Paul Vicol, and David Duvenaud.
\newblock Optimizing millions of hyperparameters by implicit differentiation.
\newblock In \emph{International Conference on Artificial Intelligence and
  Statistics}, pages 1540--1552. PMLR, 2020.

\bibitem[Luketina et~al.(2016)Luketina, Berglund, Greff, and
  Raiko]{luketina2016scalable}
Jelena Luketina, Mathias Berglund, Klaus Greff, and Tapani Raiko.
\newblock Scalable gradient-based tuning of continuous regularization
  hyperparameters.
\newblock In \emph{International Conference on Machine Learning}, pages
  2952--2960, 2016.

\bibitem[Luo et~al.(2018)Luo, Zhan, Xue, Wang, Ren, and Yang]{luo2018cosine}
Chunjie Luo, Jianfeng Zhan, Xiaohe Xue, Lei Wang, Rui Ren, and Qiang Yang.
\newblock Cosine normalization: Using cosine similarity instead of dot product
  in neural networks.
\newblock In \emph{International Conference on Artificial Neural Networks},
  pages 382--391. Springer, 2018.

\bibitem[Maclaurin et~al.(2015)Maclaurin, Duvenaud, and
  Adams]{maclaurin2015gradient}
Dougal Maclaurin, David Duvenaud, and Ryan Adams.
\newblock Gradient-based hyperparameter optimization through reversible
  learning.
\newblock In \emph{International Conference on Machine Learning}, pages
  2113--2122, 2015.

\bibitem[Mei and Zhu(2015)]{mei2015using}
Shike Mei and Xiaojin Zhu.
\newblock Using machine teaching to identify optimal training-set attacks on
  machine learners.
\newblock In \emph{AAAI}, pages 2871--2877, 2015.

\bibitem[Mishra et~al.(2017)Mishra, Rohaninejad, Chen, and
  Abbeel]{mishra2017simple}
Nikhil Mishra, Mostafa Rohaninejad, Xi~Chen, and Pieter Abbeel.
\newblock A simple neural attentive meta-learner.
\newblock \emph{arXiv preprint arXiv:1707.03141}, 2017.

\bibitem[Mu{\~n}oz-Gonz{\'a}lez et~al.(2017)Mu{\~n}oz-Gonz{\'a}lez, Biggio,
  Demontis, Paudice, Wongrassamee, Lupu, and Roli]{munoz2017towards}
Luis Mu{\~n}oz-Gonz{\'a}lez, Battista Biggio, Ambra Demontis, Andrea Paudice,
  Vasin Wongrassamee, Emil~C Lupu, and Fabio Roli.
\newblock Towards poisoning of deep learning algorithms with back-gradient
  optimization.
\newblock In \emph{Proceedings of the 10th ACM Workshop on Artificial
  Intelligence and Security}, pages 27--38. ACM, 2017.

\bibitem[Nocedal and Wright(2006)]{nocedal2006numerical}
Jorge Nocedal and Stephen Wright.
\newblock \emph{Numerical optimization}.
\newblock Springer Science \& Business Media, 2006.

\bibitem[Pearlmutter(1994)]{pearlmutter1994fast}
Barak~A Pearlmutter.
\newblock Fast exact multiplication by the hessian.
\newblock \emph{Neural computation}, 6\penalty0 (1):\penalty0 147--160, 1994.

\bibitem[Pedregosa(2016)]{pedregosa2016hyperparameter}
Fabian Pedregosa.
\newblock Hyperparameter optimization with approximate gradient.
\newblock In \emph{International conference on machine learning}, pages
  737--746, 2016.

\bibitem[Rajeswaran et~al.(2019)Rajeswaran, Finn, Kakade, and
  Levine]{rajeswaran2019meta}
Aravind Rajeswaran, Chelsea Finn, Sham~M Kakade, and Sergey Levine.
\newblock Meta-learning with implicit gradients.
\newblock In \emph{Advances in Neural Information Processing Systems}, pages
  113--124, 2019.

\bibitem[Ravi and Larochelle(2017)]{ravi2016optimization}
Sachin Ravi and Hugo Larochelle.
\newblock Optimization as a model for few-shot learning.
\newblock \emph{International Conference on Learning Representations (ICLR)},
  2017.

\bibitem[Ren et~al.(2018)Ren, Zeng, Yang, and Urtasun]{ren18l2rw}
Mengye Ren, Wenyuan Zeng, Bin Yang, and Raquel Urtasun.
\newblock Learning to reweight examples for robust deep learning.
\newblock In \emph{ICML}, 2018.

\bibitem[Santoro et~al.(2016)Santoro, Bartunov, Botvinick, Wierstra, and
  Lillicrap]{santoro2016one}
Adam Santoro, Sergey Bartunov, Matthew Botvinick, Daan Wierstra, and Timothy
  Lillicrap.
\newblock One-shot learning with memory-augmented neural networks.
\newblock \emph{arXiv preprint arXiv:1605.06065}, 2016.

\bibitem[Shaban et~al.(2018)Shaban, Cheng, Hatch, and
  Boots]{shaban2018truncated}
Amirreza Shaban, Ching-An Cheng, Nathan Hatch, and Byron Boots.
\newblock Truncated back-propagation for bilevel optimization.
\newblock \emph{arXiv preprint arXiv:1810.10667}, 2018.

\bibitem[Shafahi et~al.(2018)Shafahi, Huang, Najibi, Suciu, Studer, Dumitras,
  and Goldstein]{shafahi2018poison}
Ali Shafahi, W~Ronny Huang, Mahyar Najibi, Octavian Suciu, Christoph Studer,
  Tudor Dumitras, and Tom Goldstein.
\newblock Poison frogs! targeted clean-label poisoning attacks on neural
  networks.
\newblock In \emph{Advances in Neural Information Processing Systems}, pages
  6103--6113, 2018.

\bibitem[{Shi} and {Hong}(2020)]{9120361}
Q.~{Shi} and M.~{Hong}.
\newblock Penalty dual decomposition method for nonsmooth nonconvex
  optimization—part i: Algorithms and convergence analysis.
\newblock \emph{IEEE Transactions on Signal Processing}, 68:\penalty0
  4108--4122, 2020.

\bibitem[Shu et~al.(2019)Shu, Xie, Yi, Zhao, Zhou, Xu, and
  Meng]{han2018coteaching}
Jun Shu, Qi~Xie, Lixuan Yi, Qian Zhao, Sanping Zhou, Zongben Xu, and Deyu Meng.
\newblock Meta-weight-net: Learning an explicit mapping for sample weighting.
\newblock In \emph{NeurIPS}, 2019.

\bibitem[Snell et~al.(2017)Snell, Swersky, and Zemel]{snell2017prototypical}
Jake Snell, Kevin Swersky, and Richard Zemel.
\newblock Prototypical networks for few-shot learning.
\newblock In \emph{Advances in Neural Information Processing Systems}, pages
  4080--4090, 2017.

\bibitem[Sung et~al.(2018)Sung, Yang, Zhang, Xiang, Torr, and
  Hospedales]{sung2018learning}
Flood Sung, Yongxin Yang, Li~Zhang, Tao Xiang, Philip~HS Torr, and Timothy~M
  Hospedales.
\newblock Learning to compare: Relation network for few-shot learning.
\newblock In \emph{Proceedings of the IEEE Conference on Computer Vision and
  Pattern Recognition}, pages 1199--1208, 2018.

\bibitem[Vinyals et~al.(2016)Vinyals, Blundell, Lillicrap, Wierstra,
  et~al.]{vinyals2016matching}
Oriol Vinyals, Charles Blundell, Tim Lillicrap, Daan Wierstra, et~al.
\newblock Matching networks for one shot learning.
\newblock In \emph{Advances in Neural Information Processing Systems}, pages
  3630--3638, 2016.

\bibitem[von Stackelberg(2010)]{von2010market}
Heinrich von Stackelberg.
\newblock \emph{Market structure and equilibrium}.
\newblock Springer Science \& Business Media, 2010.

\bibitem[Yu et~al.(2017)Yu, Liu, Gong, Zhang, and Tao]{yu2017transfer}
Xiyu Yu, Tongliang Liu, Mingming Gong, Kun Zhang, and Dacheng Tao.
\newblock Transfer learning with label noise.
\newblock \emph{arXiv preprint arXiv:1707.09724}, 2017.

\end{thebibliography}

\clearpage
\appendix

%%%%%%%%%%%%%%%%%%%%%%%%%%%%%%%%%%%%%%%%%%%%%%%%%%%%%%%%%%%%%%%%%%%%%%%%%%%%%%%%%%%%%%%%%
%\begin{appendices}
\begin{center}
{\LARGE \bf Appendix}
\end{center}
We provide the missing proofs in Appendix~\ref{app:proofs},
review other methods for solving bilevel problems in Appendix~\ref{app:other methods}.
We discuss the modifications to improve the Alg.~\ref{alg:main} in Appendix~\ref{app:extensions}, show additional experiment results in Appendix~\ref{app:additional_experiments} and conclude by providing experiment details in Appendix~\ref{app:details_experiments}.
%%%%%%%%%%%%%%%%%%%%%%%%%%%%%%%%%%%%%%%%%%%%%%%%%%%%%%%%%%%%%%%%%%%%%%%%%%%%%%%%%%%%%%%%%%%%%%%%%%%%%%%%%%%%%%%%%%%%%
\section{Proofs} \label{app:proofs}
%%%%%%%%%%%%%%%%%%%%%%%%%%%%%%%%%%%%%%%%%%%%%%%%%%%%%%%%%%%%%%%%%%%%%%%%%%%%%%%%%%%%%%%%%%%%%%%%%%%%%%%%%%%%%%%%%%%%%
\begin{theoremrep}[\ref{thm:main}]
Suppose $\{\epsilon_k\}$ is a positive ($\epsilon_k>0$) and convergent ($\epsilon_k \to 0$) sequence,
$\{\gamma_k\}$ is a positive ($\gamma_k>0$), non-decreasing ($\gamma_1 \leq \gamma_2 \leq \cdots$),
and divergent ($\gamma_k \to \infty$) sequence.
Let $\{(u_k,v_k)\}$ be the sequence of approximate solutions to Eq.~(\ref{eq:bilevel_with_constraints_penalty_formulation}) with tolerance 
$(\nabla_u \tilde{f}(u_k,v_k))^2 + (\nabla_v \tilde{f}(u_k,v_k))^2 \leq \epsilon_k^2$ for all $k=0,1,\cdots$ and LICQ is satisfied at the optimum. 
Then any limit point of $\{(u_k,v_k)\}$ satisfies the KKT conditions of the problem in Eq.~(\ref{eq:bilevel_with_constraints_single_reformulation}).
\end{theoremrep}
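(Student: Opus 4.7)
The plan is to adapt the standard convergence argument for the quadratic penalty method (Nocedal--Wright, Thm.~17.2) to this bilevel setting, where the ``constraints'' are $h(u,v)=0$ and $\nabla_v g(u,v)=0$, and the penalty function is $\tilde f(u,v;\gamma_k)=f(u,v)+\tfrac{\gamma_k}{2}\bigl(\|h(u,v)\|^2+\|\nabla_v g(u,v)\|^2\bigr)$. Let $(u^*,v^*)$ be a limit point of $\{(u_k,v_k)\}$ and, after passing to a subsequence, assume $(u_k,v_k)\to(u^*,v^*)$.

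First I would write out the stationarity residual explicitly. Differentiating $\tilde f$ gives
\begin{equation*}
\nabla_{(u,v)}\tilde f(u_k,v_k) \;=\; \nabla_{(u,v)} f(u_k,v_k) \;+\; \gamma_k\,J_h(u_k,v_k)^\top h(u_k,v_k) \;+\; \gamma_k\,J_{\nabla_v g}(u_k,v_k)^\top \nabla_v g(u_k,v_k),
\end{equation*}
where $J_h$ and $J_{\nabla_v g}$ are the Jacobians (in $(u,v)$) of the two constraint maps. Define the candidate multipliers $\lambda_k := \gamma_k\, h(u_k,v_k)$ and $\mu_k := \gamma_k\, \nabla_v g(u_k,v_k)$. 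The tolerance assumption $\|\nabla_{(u,v)}\tilde f(u_k,v_k)\|\le\epsilon_k\to 0$ then rewrites as
\begin{equation*}
\nabla_{(u,v)} f(u_k,v_k) + J_h(u_k,v_k)^\top \lambda_k + J_{\nabla_v g}(u_k,v_k)^\top \mu_k \;\longrightarrow\; 0.
\end{equation*}

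The second step is feasibility of the limit, i.e.\ $h(u^*,v^*)=0$ and $\nabla_v g(u^*,v^*)=0$. For this I would compare $\tilde f(u_k,v_k;\gamma_k)$ against its value at a fixed feasible point $(\bar u,\bar v)$ of Eq.~(\ref{eq:bilevel_with_constraints_single_reformulation}) (which exists by the standing assumption that the bilevel problem has a solution). The $\epsilon_k$-stationarity plus convexity of a small neighborhood argument shows that $\tilde f(u_k,v_k;\gamma_k)\le f(\bar u,\bar v)+o(1)$, so
\begin{equation*}
\tfrac{\gamma_k}{2}\bigl(\|h(u_k,v_k)\|^2+\|\nabla_v g(u_k,v_k)\|^2\bigr) \;\le\; f(\bar u,\bar v)-f(u_k,v_k)+o(1),
\end{equation*}
and since $\gamma_k\to\infty$ while the right-hand side stays bounded (because $f$ is continuous), both squared constraint norms must vanish in the limit. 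Hence $h(u^*,v^*)=0$ and $\nabla_v g(u^*,v^*)=0$.

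The third and crucial step is boundedness of $(\lambda_k,\mu_k)$, for which LICQ is used. Stacking the two constraint Jacobians into a single matrix $A_k$, the previous displayed equation reads $A_k^\top\binom{\lambda_k}{\mu_k}=-\nabla f(u_k,v_k)+o(1)$. At the limit point, LICQ states that $A^* := A(u^*,v^*)$ has full row rank, so $A^*(A^*)^\top$ is invertible; by continuity the same holds for $A_k(A_k)^\top$ for large $k$, and multiplying the previous identity by $A_k$ and applying $(A_k A_k^\top)^{-1}$ yields a uniformly bounded expression for $\binom{\lambda_k}{\mu_k}$. Therefore $\binom{\lambda_k}{\mu_k}$ has a further convergent subsequence with limit $\binom{\lambda^*}{\mu^*}$, and passing to the limit in the stationarity equation produces exactly the KKT stationarity of Eq.~(\ref{eq:bilevel_with_constraints_single_reformulation}); combined with feasibility from step two this completes the argument.

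The main obstacle I expect is the boundedness of the multipliers: one must carefully justify that LICQ at the (unknown) limit point transfers to a uniform full-rank bound for $A_k$ along the subsequence, so that the pseudoinverse estimate is valid for all sufficiently large $k$ and not just in the limit. The feasibility bound via comparison with a fixed KKT-feasible point is routine but relies on the upper-level cost being bounded below on a neighborhood of the limit, which follows from continuity and the convergent subsequence.
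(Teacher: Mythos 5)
Your overall architecture (rewrite the tolerance as approximate stationarity of a Lagrangian with candidate multipliers $\lambda_k=\gamma_k h(u_k,v_k)$, $\mu_k=\gamma_k\nabla_v g(u_k,v_k)$, prove feasibility of the limit, then use LICQ to bound the multipliers and pass to the limit) matches the paper's proof in its third step, but your feasibility step has a genuine gap. You claim $\tilde f(u_k,v_k;\gamma_k)\le f(\bar u,\bar v)+o(1)$ by comparing with a fixed feasible point, invoking ``$\epsilon_k$-stationarity plus convexity of a small neighborhood argument.'' That comparison is only valid when $(u_k,v_k)$ is an (approximate) \emph{global minimizer} of $\tilde f(\cdot;\gamma_k)$ — that is the setting of Theorem~\ref{thm:strict} (Bard's Theorem 8.3.1) and of Nocedal--Wright Theorem 17.1 — whereas the hypothesis here is only the gradient-norm tolerance $\|\nabla_{(u,v)}\tilde f(u_k,v_k;\gamma_k)\|\le\epsilon_k$. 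Approximate stationarity of a nonconvex function gives no bound on its value relative to the value at some other point, and the theorem assumes no convexity of $f$ or $g$ (removing that assumption is precisely its purpose), so the displayed inequality $\tfrac{\gamma_k}{2}(\|h\|^2+\|\nabla_v g\|^2)\le f(\bar u,\bar v)-f(u_k,v_k)+o(1)$ is not justified and feasibility of the limit does not follow this way.

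The paper obtains feasibility from the same ingredient you already use later, namely LICQ: dividing the tolerance inequality by $\gamma_k$ gives
\begin{equation*}
\bigl\|J_{w}^{\top}\bigl(\tilde g(w_k)\bigr)\,\tilde g(w_k)\bigr\| \;\le\; \frac{1}{\gamma_k}\bigl[\|\nabla_w f(w_k)\|+\epsilon_k\bigr],
\end{equation*}
where $w=(u,v)$ and $\tilde g=(h;\nabla_v g)$; along the convergent subsequence the right-hand side vanishes (continuity of $\nabla_w f$, $\gamma_k\to\infty$), so $J_{w}^{\top}(\tilde g(\bar w))\,\tilde g(\bar w)=0$, and linear independence of the constraint gradients at $\bar w$ (LICQ) then forces $\tilde g(\bar w)=0$. (Without LICQ one only gets that $\bar w$ is a stationary point of $\tfrac12\|\tilde g\|^2$, which is exactly why the constraint qualification is in the hypothesis.) Your multiplier-boundedness and limit-passing step is correct and is essentially the paper's argument — continuity of the Jacobians makes $A_kA_k^{\top}$ uniformly invertible near the limit, so the concern you flag at the end is not the real obstacle; the feasibility step is.
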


\begin{proof}
The proof follows the standard proof for penalty function methods, e.g., [\cite{nocedal2006numerical}].
Let $w:=(u,v)$ refer to the pair, and let $\wo:=(\uo,\vo)$ be any limit point of the sequence $\{w_k:=(u_k,v_k)\}$, and 
\[
\tilde{g}:=\left(
\begin{array}{c}h(u,v)\\ \gv(u,v)\end{array}\right)
\]
 
then there is a subsequence $\mathcal{K}$ such that $\lim_{k \in \mathcal{K}} w_k = \wo$.
From the tolerance condition
\begin{eqnarray*}
\|\nabla_w \tilde{f}(w_k;\gamma_k)\| = \;
\|\nabla_w f(w_k) + \gamma_{k} J_{w}^T(\tilde{g}(w_k)) \tilde{g}(w_k)\| \leq \epsilon_k
\end{eqnarray*}
we have
\begin{eqnarray*}
\|J_{w}^T (\tilde{g}(w_k)) \tilde{g}(w_k)\|  \leq \frac{1}{\gamma_k}[\|\nabla_w f(w_k)\| + \epsilon_k]
\end{eqnarray*}
Take the limit with respect to the subsequence $\mathcal{K}$ on both sides to get
\(J_{w}^T (\tilde{g}(\wo)) \tilde{g}(\wo) = 0\).
Assuming linear independence constraint quantification (LICQ) we have that the columns of the $J_{w}^T\tilde{g} = \left(\begin{array}{cc}J_{u}^T h & \guv\\ J_v^T h &\gvv\end{array}\right)
%=\left(\begin{array}{cc}J_{u}^T h & \guv\\ 0 &\gvv\end{array}\right)
$ are linearly independent. 
%is a tall matrix and $\gvv$ is invertible by assumption, $\nabla_{wv}^2 g$ is full-rank and 
Therefore $\tilde{g}(\wo) = 0$, which is the primary feasibility condition for 
Eq.~(\ref{eq:bilevel_with_constraints_single_reformulation}).
Furthermore, let $\mu_k := -\gamma_k \tilde{g}(w_k)$, then by definition, 
\begin{eqnarray*}
\nabla_w \tilde{f}(w_k;\gamma_k) = \nabla_w f(w_k) - J_{w}^T(\tilde{g}(w_k)) \mu_k
\end{eqnarray*}
We can write 
\begin{eqnarray*}
\left[J_{w}(\tilde{g}(w_k)) J_{w}^T(\tilde{g}(w_k))\right]\mu_k = 
J_{w}(\tilde{g}(w_k)) \left[\nabla_w f(w_k)-\nabla_w \tilde{f}(w_k;\gamma_k) \right]
\end{eqnarray*}

The corresponding limit $\overline{\mu}$ can be found by taking the limit of the subsequence $\mathcal{K}$
\begin{eqnarray*}
\overline{\mu}:=\lim_{k\in\mathcal{K}} \mu_k = 
\left[J_{w}(\tilde{g}(\wo)) J_{w}^T(\tilde{g}(\wo))\right]^{-1}J_{w}(\tilde{g}(w_k))\nabla_w f(\wo)
\end{eqnarray*}
Since $\lim_{k\in\mathcal{K}} \nabla_w \tilde{f}(w_k;\gamma_k) = 0$ from the condition $\epsilon_k \to 0$,
we get 
\begin{eqnarray*}
\nabla_w f(\wo) - J_{w}^T(\tilde{g}(\wo)) \overline{\mu} = 0   
\end{eqnarray*}
at the limit $\wo$,
which is the stationarity condition of Eq.~(\ref{eq:bilevel_with_constraints_single_reformulation}). 
Together with the feasibility condition $\tilde{g}(\wo) = 0$, the two KKT conditions of
Eq.~(\ref{eq:bilevel_with_constraints_single_reformulation}) are satisfied at the limit point. 
\end{proof}

\if0
\begin{corollaryrep}[\ref{cor:main}]
%In addition to the conditions of Theorem~\ref{thm:main}, if $f$ and $g$ are convex functions
%of $v$ for any $u$, then any limit point of Theorem~\ref{thm:main} is an optimal solution $(\us,\vs)$ of  Eq.~(\ref{eq:bilevel}). {\bf ?????}
%n optimal solution $(\us,\vs)$ of  Eq.~(\ref{eq:bilevel}). 
In addition to the conditions of Theorem~\ref{thm:main}, if $g$ is convex in $v$ for any $u$, then any limit point of Theorem~\ref{thm:main} also satisfies the KKT conditions of Eq.~(\ref{eq:bilevel}). 
Furthermore if $f$ is convex in $(u,v)$ then the limit point of Theorem~\ref{thm:main} is an optimal solution $(\us,\vs)$ of Eq.~(\ref{eq:bilevel}).
\end{corollaryrep}
%Proof is omitted.
\begin{proof}
From Theorem~\ref{thm:main}, the limit point satisfies the KKT conditions of
\end{proof}
%{\it Comment.}
%The requirement of $f(u,\vs(u))$ being convex in $u$ is weaker than $f(u,v)$ being convex in $(u,v)$. 
%This is a consequence of the fact that the convexity of $f$ and $g$ implies that the KKT
%condition for Eq.~(\ref{eq:bilevel_necessary}) is also sufficient, and that
%the solution for Eq.~(\ref{eq:bilevel_necessary}) is also the solution for %Eq.~(\ref{eq:bilevel}).
%\end{proof}
\fi

\begin{lemmarep}[\ref{lm:main}]
Assume $h \equiv 0$. \\ Given $u$, let $\vh$ be $\vh:=\arg\min_v \tilde{f}(u,v;\gamma)$ from Eq.~(\ref{eq:bilevel_with_constraints_penalty_formulation}).
Then, $\nabla_u\tilde{f}(u,\vh;\gamma)=\frac{df}{du} (u,\vh)$ as in Eq.~(\ref{eq:hypergradient}).
\end{lemmarep}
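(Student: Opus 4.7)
The plan is to compute $\nabla_u \tilde{f}$ and $\nabla_v \tilde{f}$ of the penalty function directly, and then exploit the first-order optimality of $\vh$ in the inner minimization to eliminate the penalty contribution $\gamma\,\gv$.

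First, with $h \equiv 0$ the penalty function reduces to $\tilde{f}(u,v;\gamma) = f(u,v) + \tfrac{\gamma}{2}\|\gv(u,v)\|^2$. Differentiating under the chain rule, and using the convention in the paper for the mixed Hessians (so that the Jacobian of the column vector $\gv$ with respect to $v$ is the symmetric matrix $\gvv$, while the Jacobian with respect to $u$ transposed is $\guv$), I would obtain
\begin{align*}
\nabla_u \tilde{f}(u,v;\gamma) &= \fu + \gamma\, \guv\, \gv, \\
\nabla_v \tilde{f}(u,v;\gamma) &= \fv + \gamma\, \gvv\, \gv.
\end{align*}

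Second, since by definition $\vh$ minimizes $\tilde{f}(u,\cdot\,;\gamma)$ for fixed $u$, the stationarity condition $\nabla_v \tilde{f}(u,\vh;\gamma) = 0$ gives $\fv + \gamma\, \gvv\, \gv = 0$ at $(u,\vh)$. Invoking the standing invertibility assumption on $\gvv$, I can solve for the combination that appears in the $u$-gradient:
\begin{equation*}
\gamma\, \gv(u,\vh) = -(\gvv)^{-1}\fv(u,\vh).
\end{equation*}

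Third, I would substitute this identity into the expression for $\nabla_u \tilde{f}(u,\vh;\gamma)$ to get
\begin{equation*}
\nabla_u \tilde{f}(u,\vh;\gamma) = \fu - \guv\, (\gvv)^{-1}\, \fv,
\end{equation*}
which is precisely the hypergradient $\tfrac{df}{du}(u,\vh)$ in Eq.~(\ref{eq:hypergradient}), completing the proof.

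There is no real obstacle here; the argument is a one-line application of first-order optimality combined with the invertibility of $\gvv$. The only things to watch are the bookkeeping of the mixed-partial Jacobians $\guv$ and $\gvu$ (which are transposes of each other under the notational conventions of the paper) and making the role of the standing assumption that $\gvv$ is invertible explicit, so that solving for $\gamma\,\gv$ at $(u,\vh)$ is justified.
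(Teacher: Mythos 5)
Your proposal is correct and follows essentially the same argument as the paper: the stationarity condition $\fv + \gamma\,\gvv\,\gv = 0$ at $\vh$, combined with the invertibility of $\gvv$, is used to substitute $\gamma\,\gv = -(\gvv)^{-1}\fv$ into $\nabla_u\tilde{f} = \fu + \gamma\,\guv\,\gv$, recovering the hypergradient of Eq.~(\ref{eq:hypergradient}). No gaps; your explicit remark on the role of the invertibility assumption is a minor presentational plus over the paper's version.
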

\begin{proof}
At the minimum $\hat{v}$ the gradient $\nabla_v \tilde{f}$ vanishes, that is 
$\fv + \gamma \gvv\gv = 0$. \\Equivalently, $\gv = -\gamma^{-1} (\gvv)^{-1} \fv$.
Then,
\begin{eqnarray*}
\nabla_u \tilde{f}(\vh) = \fu(\vh) + \gamma \guv(\vh)\gv(\vh) 
= \fu(\vh) - \guv(\vh)\gvv^{-1}(\vh)\fv(\vh),
\end{eqnarray*}
where $\gamma$ disappears, which is the hypergradient $\frac{df}{du}(u,\vh)$ as in Eq.~(\ref{eq:hypergradient}).
\end{proof}
That is, if we find the minimum $\vh$ of the penalty function for given $u$ and $\gamma$, 
we get the hypergradient Eq.~(\ref{eq:hypergradient}) at $(u,\vh)$. 
Furthermore, under the conditions of Theorem~\ref{thm:strict},
$\vh(u) \to \vs(u)$ as $\gamma \to \infty$ (see Lemma 8.3.1 of [\cite{bard2013practical}]), and we get the exact hypergradient asymptotically.

\if0
\subsection{Notes}

\subsubsection{Local minimax, saddle points, and critical points}

Assumptions.
No convexity or concavity yet. Unconstrained minimax. $f$ is twice continuously differentiable. 
Notation: inequality of a symmetric real matrix such as $\fvv \geq0$ means positive semidefiniteness.

Definitions:
\begin{itemize}
\item Def: Stationary point is where $\fu=0, \fv=0$.
\item Def: Local minimax point. The point $(\us,\vs)$ is a local minimax point if there is a neighborhood
$\Uc \times \Vc$ such that $(\us,\vs)$ is the solution 
$\min_{u\in\Uc}\max_{v \in \Vc} f(u,v)$, or, equivalently, 
\[
f(\us,v) \leq f(\us,\vs) \leq f(u,v(u)),\;\;\;\forall u \in \Uc,\;v \in \Vc.
\]
Note that $(\us,\vs)$ may not be the solution if we enlarge the neighborhood.
\item Def: Local saddle point. The point $(u,v)$ is a local saddle point if there is a neighborhood
$\Uc \times \Vc$ such that $(\us,\vs)$ is both the minimax and the maximin solution. 
Equivalently, 
\[
f(\us,v) \leq f(\us,\vs) \leq f(u,\vs),\;\;\;\forall u \in \Uc,\; v\in\Vc.
\]
Apparently, a local/global minimax point is not necessarily a local/global saddle point.
\end{itemize}

Properties.
\begin{itemize}

\item Unlike single-level minimization problems, a (global) minimax point is not necessarily a local minimax point. 
If we want this to be true, the local minimax should be defined as the solution to $\min_{u\in\Uc}\max_v f(u,v)$,
that is, the maximization is global. Still, there are cases where local minimax is also global, which 
will be described shortly. 

\item Conjecture: $S(\fvv)$ is the Hessian of the new $\tilde{f}(u):=f(u,v(u))$.
\item If $(\us,\vs)$ is a local minimax point then $\fv=0$ and $\fvv\leq 0$.
Furthermore, if $\fvv < 0$ (i.e., strictly locally concave), then 
the Schur complement $S(\fvv)=\fuu - \fuv (\fvv)^{-1}\fvu$ satisfies $S(\fvv) \geq 0$ at $(\us,\vs)$.
Note that $\fu=0$ is not a necessary condition. 

\item A stationary point is not necessarily a local minimax point. (Of course. The analogy is: a stationary point
of a single-level problem can be a minimum or a maximum point or neither.)
\item One sufficient condition for a stationary point to be a local minimax is, 
$\fvv<0$ (locally concave for $v$) and the Schur complement $S(\fvv)>0$ at $(\us,\vs)$.
What's it mean? But why not $df(u,v(u))/dv=0$? 

\item At some point I need to assume $v(u)$ is unique at least at $u=\us$, otherwise,
things are very complicated. $\fvv<0$ is not enough for the unique $v(u)$ for $\Vc$, 
although it is unique in some smaller neighborhood of $\Vc$.

\item If $f$ is strictly concave in $v$ (i.e., $\fvv<0$) and $\hat{f}(u)$ is strictly convex in
$u$ (i.e., $\hat{f}_{uu}>0$) for all $u,v$, then the local minimax is also the global minimax and
is unique, and is also the stationary point (i.e., $\fu=\fv=0$.)

\item If $f$ is strictly concave in $v$ and strictly convex in $u$ (which implies the 
condition above), then we have the same conclusion as above. 
Furthermore, the point is also the saddle point.
Proof? Let $-\fvv=A^TA>0$. Then, $S(\fvv)^{-1}=\fuu+(A\fvu)^TA\fvu >0$. 
\end{itemize}

\subsubsection{Alt-gd}
Let $(\us,\vs)$ be such that $\dot{\us}=0, \dot{\vs}=0$, i.e., an equlibrium. 
Then the equalibirum is Lyapunov stable for the dynamical system $\dot{u}=-\fu$, $\dot{v}=\fv.$

Proof: my proof. (Also see Benzi, Th3.6).

\subsection{Implementation}

Avoid direct computation or even storage of any Jacobian or Hessian
matrices at any point of training.

Algorithm.

Stop gradient $\lfloor \cdot \rfloor$.
\[
\frac{\partial^2 g}{\partial u \partial v} \frac{\partial f}{\partial v} 
= \frac{\partial}{\partial u} \left[
\frac{\partial g}{\partial v}^T \left \lfloor \frac{\partial f}{\partial v} \right \rfloor \right].
\]
\fi

%%%%%%%%%%%%%%%%%%%%%%%%%%%%%%%%%%%%%%%%%%%%%%%%%%%%%%%%%%%%%%%%%%%%%%%%%%%%%%%%%%%%%%%%%%
\section{Review of other bilevel optimization methods for unconstrained problems}\label{app:other methods}
%%%%%%%%%%%%%%%%%%%%%%%%%%%%%%%%%%%%%%%%%%%%%%%%%%%%%%%%%%%%%%%%%%%%%%%%%%%%%%%%%%%%%%%%%%
Several methods have been proposed to solve bilevel optimization problems appearing in machine learning,
including forward/reverse-mode differentiation  [\cite{maclaurin2015gradient,franceschi2017forward}]
and approximate gradient [\cite{domke2012generic,pedregosa2016hyperparameter}] described briefly here.

{\bf Forward-mode (FMD) and Reverse-mode differentiation (RMD).}
Domke [\cite{domke2012generic}], Maclaurin et al.[\cite{maclaurin2015gradient}],  
Franceschi et al.~[\cite{franceschi2017forward}], and Shaban et al.~[\cite{shaban2018truncated}]
studied forward and reverse-mode differentiation to solve the
minimization problem $\min_u f(u,v)$ where the lower-level variable $v$ 
follows a dynamical system $v_{t+1} =\Phi_{t+1}(v_{t};u),\;\;t=0,1,2,\cdots,T-1$.
This setting is more general than that of a bilevel problem. 
However, a stable dynamical system is one that converges to a steady state and
thus, the process $\Phi_{t+1}(\cdot)$ can be considered as minimizing an energy or a potential function.  

Define $A_{t+1}:=\nabla_v \Phi_{t+1}(v_t)$ and $B_{t+1}:=\nabla_u \Phi_{t+1}(v_t)$,
then the hypergradient Eq.~(\ref{eq:hypergradient}) can be computed by
\begin{eqnarray*}
%\small
\frac{df}{du} = \nabla_u f(u,v_T) + \sum_{t=0}^T B_t A_{t+1} \times\cdots\times A_T \nabla_v f(u,v_T)
\end{eqnarray*}

When the lower-level process is one step of gradient descent on a cost function $g$, that is, 
\begin{eqnarray*}
\Phi_{t+1}(v_t;u) = v_t - \rho \gv(u,v_t)
\end{eqnarray*}
we get 
\[
{A_{t+1}=I-\rho \gvv(u,v_t)},\;\;{B_{t+1}=-\rho\guv(u,v_t)}.
\]

$A_t$ is of dimension $V \times V$ and $B_t$ is of dimension $V \times U$. The sequences $\{A_{t}\}$ and $\{B_{t}\}$ can be computed in forward or reverse mode.
\\\\
For {\bf reverse-mode differentiation}, first compute
\[
{v_{t+1}=\Phi_{t+1}(v_t),\;\; t=0,1,\cdots,T\textrm{-}1},
\]
then compute 
\begin{eqnarray*}
&&q_T \leftarrow \nabla_{v} f(u,v_T),\;p_T \leftarrow \nabla_u f(u,v_T)\\
&&p_{t-1} \leftarrow p_t + B_t q_t,\;q_{t-1}\leftarrow A_t q_t,\;\;t=T,T\textrm{-}1,\cdots,1.
\end{eqnarray*}
Time and space Complexity for computing $p_t$ is $O(c)$ since the Jacobian vector product can be computed in $O(c)$ time and space.
The final hypergradient for RMD is $\frac{df}{du} = p_{0}$. \\
Hence the final time complexity for RMD is $O(cT)$ and space complexity is $O(U + VT)$.
\\\\
For {\bf forward-mode differentiation}, 
simultaneously compute $v_t$, $A_{t}$, $B_{t}$ and
\begin{eqnarray*}
P_0 \leftarrow 0,\;\;\;P_{t+1}\leftarrow P_t A_{t+1} + B_{t+1},\;\;\;\;
t=0,1,\cdots,T\textrm{-}1. 
\end{eqnarray*}
Time complexity for computing $P_t$ is $O(Uc)$ since $P_t A_{t+1}$ can be computed using $U$ Hessian vector products each needing $O(c)$ and $B_{t+1}$ also needs $O(Uc)$ time for unit vectors $e_i$ for $i=1...U$. The space complexity for each $P_t$ is $O(UV)$.
The final hypergradient for FMD is 
\[
{\frac{df}{du}=\nabla_u f(u,v_T) + P_T \nabla_v f(v_T)}. 
\]
Hence the final time complexity for FMD is $O(cUT)$ and space complexity is $O(U + UV) = O(UV)$.
\\\\
{\bf Approximate hypergradient (ApproxGrad).} 
Since computing the inverse of the Hessian $(\gvv)^{-1}$ directly is difficult even for moderately-sized
neural networks, Domke [\cite{domke2012generic}] proposed to find an approximate solution to 
$q = (\gvv)^{-1} \fv$ by solving the linear system of equations $\gvv\cdot q \approx \fv$. 
This can be done by solving 
\begin{eqnarray*}
\min_q\;\|\gvv\cdot q -\fv\|
\end{eqnarray*}
using gradient descent, conjugate gradient descent, or any other iterative solver. 
Note that the minimization requires evaluation of the Hessian-vector product,
which can be done in linear time [\cite{pearlmutter1994fast}]. Hence the time complexity of the method is $O(cT)$ and space complexity is $O(U + V)$ since we only need to store a single copy of $u$ and $v$ same as Penalty.
The asymptotic convergence with approximate solutions was shown by [\cite{pedregosa2016hyperparameter}]. In our experiments we use $T$ steps to solve the linear system.

%%%%%%%%%%%%%%%%%%%%%%%%%%%%%%%%%%%%%%%%%%%%%%%%%%%%%%%%%%%%%%%%%%%%%%%%%%%%%%%%%%%%%%%%%%%%%%%%%%%%%%%%
\section{Improvements to Algorithm~\ref{alg:main}}\label{app:extensions}
%%%%%%%%%%%%%%%%%%%%%%%%%%%%%%%%%%%%%%%%%%%%%%%%%%%%%%%%%%%%%%%%%%%%%%%%%%%%%%%%%%%%%%%%%%%%%%%%%%%%%%%%%%
Here we discuss the details of the modifications to Alg.~\ref{alg:main} presented in the main text which can be added to improve the performance of the algorithm in practice. 

\subsection{Improving local convexity by regularization}\label{sec:regularization}
One of the common assumptions of this and previous works is that $\gvv$ is invertible and locally positive definite. Neither invertibility nor positive definiteness hold in general for bilevel 
problems, involving deep neural networks, and this causes difficulties in the optimization.
Note that if $g$ is non-convex in $v$, minimizing the penalty term $\|\gv\|$ does not necessarily lower the cost
$g$ but instead just moves the variable towards a stationary point -- which is a known problem 
even for Newton's method. 
Thus we propose the following modification to the $v$-update:
\begin{eqnarray*}
\min_{v}\;\; \left[\tilde{f} + \lambda_k g\right]
\end{eqnarray*}
keeping the $u$-update intact. 
To see how this affects the optimization, note that $v$-update becomes
\begin{eqnarray*}
v \leftarrow v - \rho \left[ \fv + \gamma_k \gvv \gv + \lambda_k \gv \right]
\end{eqnarray*}
After $v$ converges to a stationary point, we get $\gv = -(\gamma_k \gvv + \lambda_k I)^{-1} \fv$, 
and after plugging this into $u$-update, we get
\begin{eqnarray*}
u\leftarrow u-\sigma\left[\fu-\guv\left(\gvv+\frac{\lambda_k}{\gamma_k}I\right)^{-1}\fv\right]
\end{eqnarray*}
that is, the Hessian inverse $\gvv^{-1}$ is replaced by a regularized version 
$(\gvv + \frac{\lambda_k}{\gamma_k} I)^{-1}$ to improve the positive definiteness of the Hessian. 
With a decreasing or constant sequence $\{\lambda_k\}$ such that $\lambda_k/\gamma_k \to 0$
the regularization does not change to solution. 

\subsection{Convergence with finite $\gamma_k$}\label{sec:augmented lagrangian}
The penalty function method is intuitive and easy to implement, but 
the sequence $\{(\uh_k,\vh_k)\}$ is guaranteed to converge to an optimal solution %$(\us,\vs)$
only in the limit with $\gamma\to\infty$, which may not be achieved in practice in a limited time.
It is known that the penalty method can be improved by introducing an additional term
into the function,
which is called the augmented Lagrangian (penalty) method [\cite{bertsekas1976penalty}]:
\begin{eqnarray*}\label{eq:penalty_aug}
\min_{u,v}\;\;\left[ \tilde{f} + \gv^T \nu \right].
\end{eqnarray*}
This new term $\gv^T \nu$ allows convergence to the optimal solution $(\us,\vs)$ even
when $\gamma_k$ is finite.
Furthermore, using the update rule $\nu \leftarrow \nu + \gamma \gv$, called the method of
multipliers, it is known that $\nu$ converges to the true 
Lagrange multiplier of this problem corresponding to the equality constraints $\gv=0$.
%\textcolor{red}{Empirically, the improvement due to the Lagrangian term seems small, due to the fact that we often do not run the optimization long enough to let $\gamma$ grow sufficiently (e.g., $>10^6$) when solving a large problem.}

\subsection{Non-unique lower-level solution}\label{sec:non-uniqueness}
Most existing methods have assumed that the lower-level solution $\arg\min_v g(u,v)$ is unique
for all $u$. Regularization from the previous section can improve the ill-conditioning of the Hessian $\gvv$ but it does not address the case of multiple disconnected global minima of $g$.
With multiple lower-level solutions $Z(u)=\{v\;|\;v=\arg\min g(u,v)\}$, 
there is an ambiguity in defining the upper-level problem. 
If we assume that $v\in Z(u)$ is chosen adversarially (or pessimistically), 
then the upper-level problem should be defined as 
\begin{eqnarray*}
\min_u \max_{v\in Z(u)} f(u,v). 
\end{eqnarray*}
If $v\in Z(u) $ is chosen cooperatively (or optimistically), then the upper-level problem
should be defined as
\begin{eqnarray*}
\min_u \min_{v\in Z(u)} f(u,v),
\end{eqnarray*}
and the results can be quite different between these two cases.
Note that the proposed penalty function method is naturally solving the optimistic case,
as Alg.~\ref{alg:main} is solving the problem of $\min_{u,v} \tilde{f}(u,v)$ by alternating 
gradient descent. 
However, with a gradient-based method, we cannot hope to find all disconnected multiple solutions. 
In a related problem of min-max optimization, which is a special case of bilevel optimization,
an algorithm for handling non-unique solutions was proposed recently [\cite{hamm2018k}].
This idea of keeping multiple candidate solutions may be applicable to bilevel problems too and further analysis of the non-unique lower-level problem is left as future work. 

\subsection{Modified algorithm}\label{app:mod_alg}
Here we present the modified algorithm which incorporates
regularization (Appendix.~\ref{sec:regularization}) and augmented Lagrangian (Appendix.~\ref{sec:augmented lagrangian}) as discussed previously.
The augmented Lagrangian term $\gv^T \nu$ applies to both $u$- and $v$-update,
but the regularization term $\lambda g$ applies to only the $v$-update 
as its purpose is to improve the ill-conditioning of $\gvv$ during $v$-update.
The modified penalized functions $\tilde{f}_\mathrm{1}$ for $u$-update and 
$\tilde{f}_\mathrm{2}$ for $v$-update are 
\begin{eqnarray*}
\tilde{f}_\mathrm{1}(u,v;\gamma,\nu) := \tilde{f} + \gv^T \nu, \\
\tilde{f}_\mathrm{2}(u,v;\gamma,\lambda,\nu) := \tilde{f} + \gv^T \nu + \lambda g
\end{eqnarray*}

The new algorithm (Alg.~\ref{alg:extended}) is similar to Alg.~\ref{alg:main}
with additional steps for updating $\lambda_k$ and $\nu_k$.
\begin{algorithm}[tb] \caption{Modified Alg.~\ref{alg:main} with regularization and augmented Lagrangian}  \label{alg:extended}
{Input}: $K, T, \{\sigma_k\},\{\rho_{k,t}\}, \gamma_0,\epsilon_0, \lambda_0, \nu_0,  c_\gamma\textrm{(=1.1)}, \\ c_\epsilon\textrm{(=0.9)}, c_\lambda\textrm{(=0.9)}$ \\
{Output}: $(u_{K},v_{T})$\\
Initialize $u_0,v_0$ randomly\\
{Begin}
\begin{algorithmic}
\FOR{$k=0,\;\cdots\;,K\textrm{-}1$}
	\WHILE{$\|\nabla_u \tilde{f}_\mathrm{1}\|^2+\|\nabla_v \tilde{f}_\mathrm{2}\|^2 > \epsilon_{k}^2$}
    	\FOR{$t=0,\cdots,T\textrm{-}1$}
            \STATE{$v_{t+1} \leftarrow v_{t} - \rho_{k,t} \nabla_v \tilde{f}_\mathrm{2}(u_k,v_t)\;$}(Appendix~\ref{app:mod_alg})
        \ENDFOR
        \STATE{$u_{k+1} \leftarrow u_{k} - \sigma_{k} \nabla_u \tilde{f}_\mathrm{1}(u_k,v_T)\;$}(Appendix~\ref{app:mod_alg})
    \ENDWHILE
    \STATE{$\gamma_{k+1} \leftarrow c_\gamma \gamma_{k}$}
    \STATE{$\epsilon_{k+1} \leftarrow c_\epsilon \epsilon_{k}$}
    \STATE{$\lambda_{k+1}\leftarrow c_\lambda \lambda_k$}
    \STATE{$\nu_{k+1} \leftarrow \nu_k + \gamma_k \gv$}
\ENDFOR
\end{algorithmic}
\end{algorithm}

\begin{figure}[t]
\small
\centering
\subfigure[Importance learning]
{
\label{fig:data_denoising_T_1}\includegraphics[width=0.48\columnwidth,height=3cm]{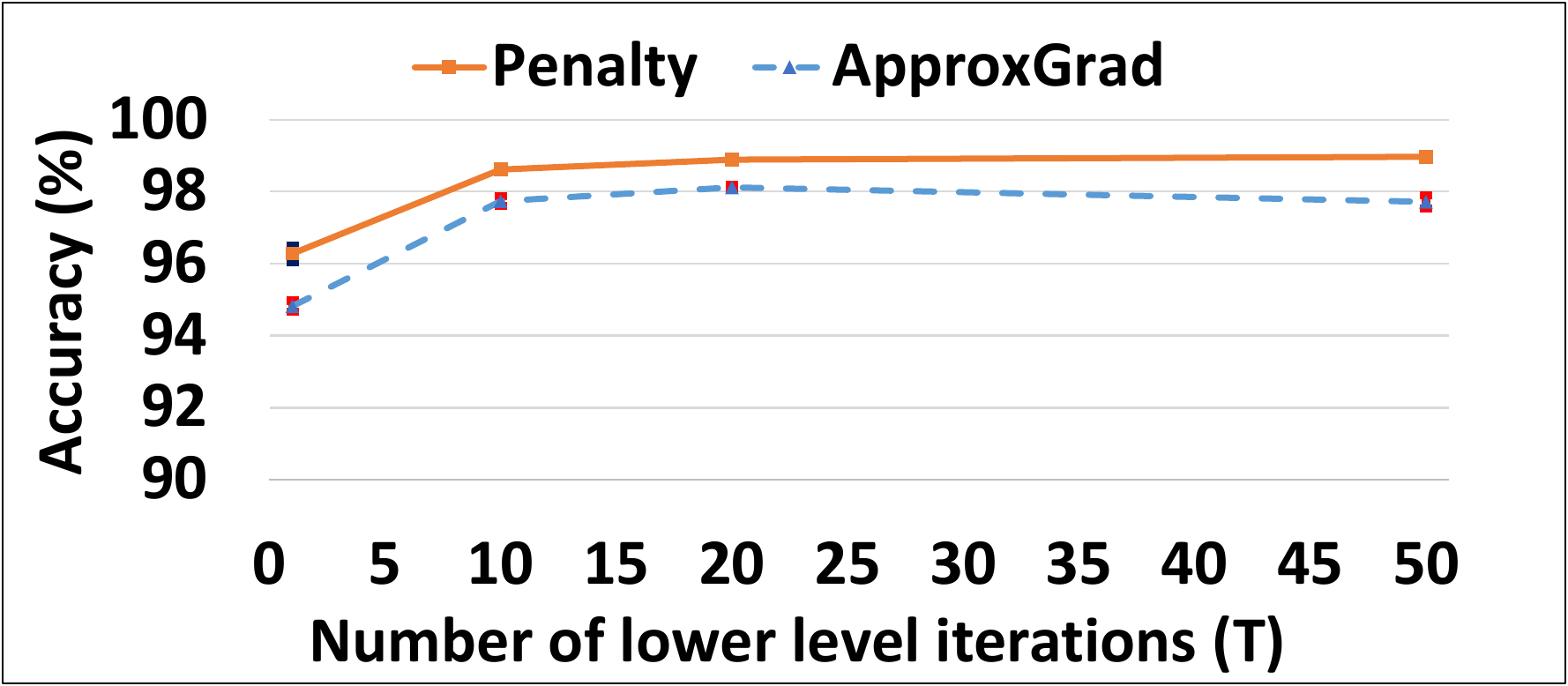}
\label{fig:data_denoising_T_2}\includegraphics[width=0.48\columnwidth,height=3cm]{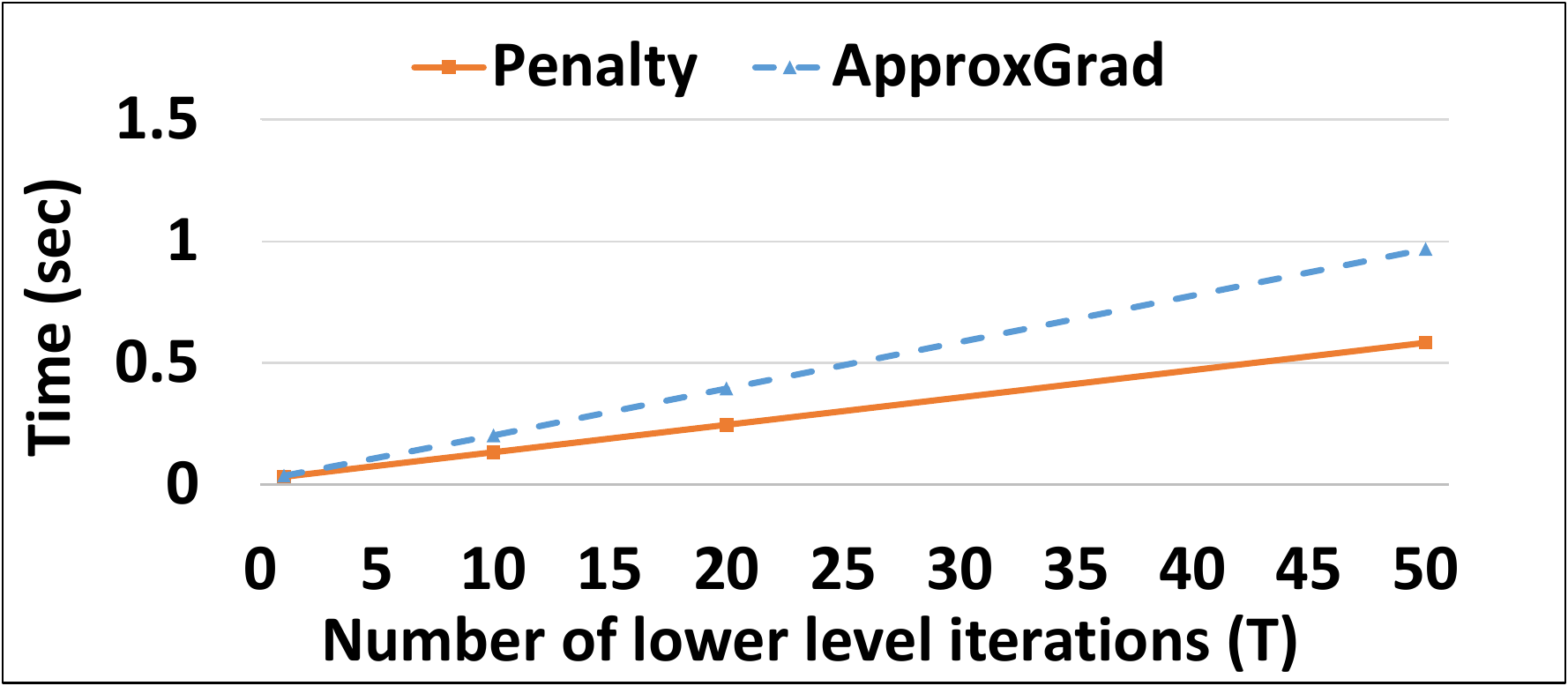}
}
\subfigure[Few-shot learning]{
\label{fig:few_shot_T_1}\includegraphics[width=0.48\columnwidth,height=3cm]{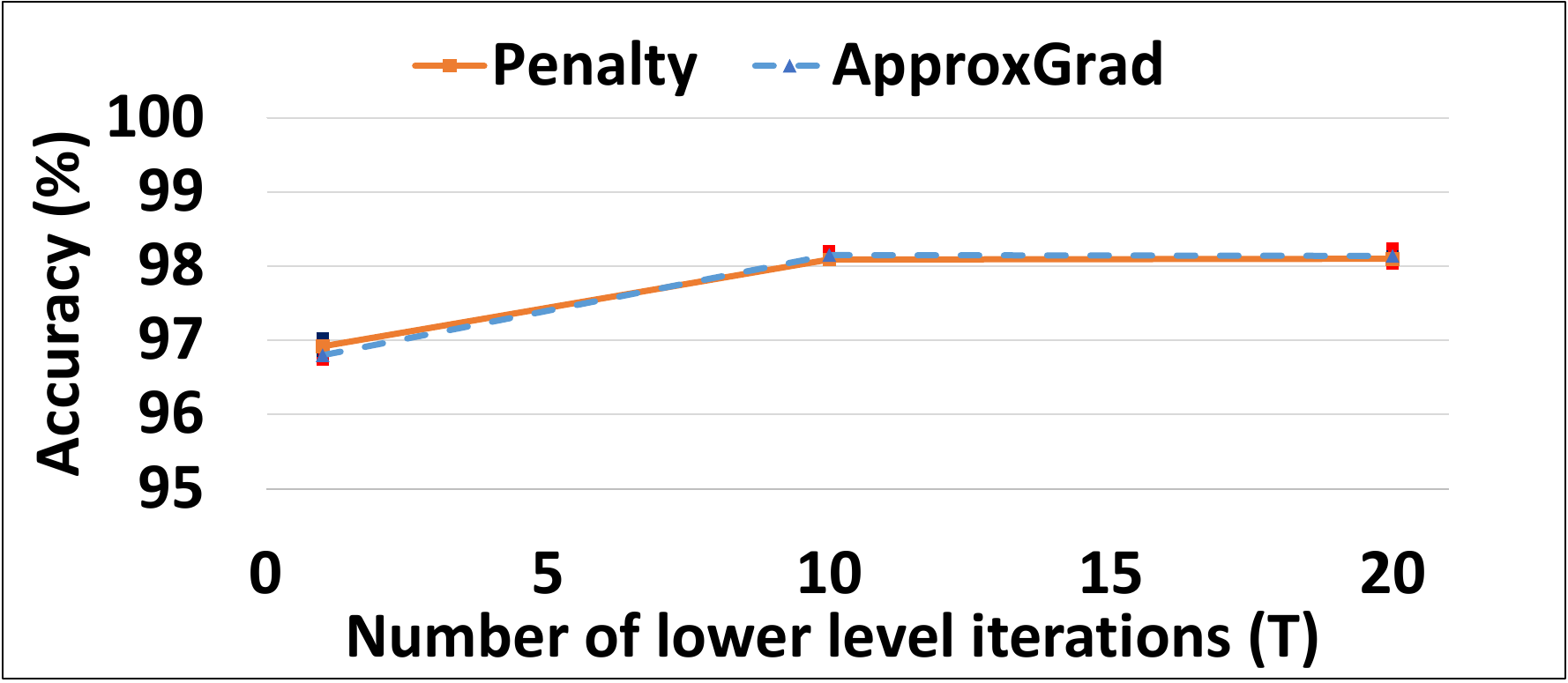}
\label{fig:few_shot_T_2}\includegraphics[width=0.48\columnwidth,height=3cm]{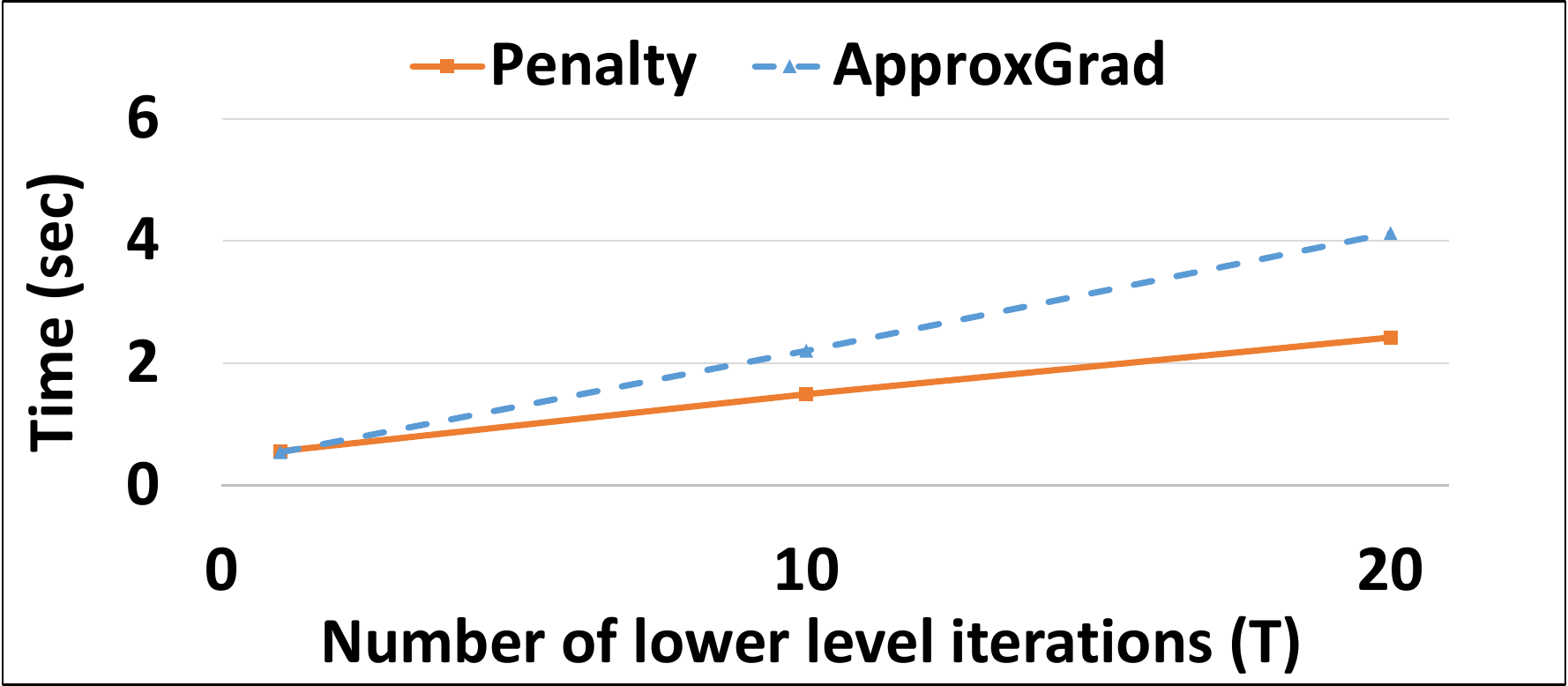}
}
\subfigure[Untargeted data poisoning]{
\label{fig:data_posioning_T_1}\includegraphics[width=0.48\columnwidth,height=3cm]{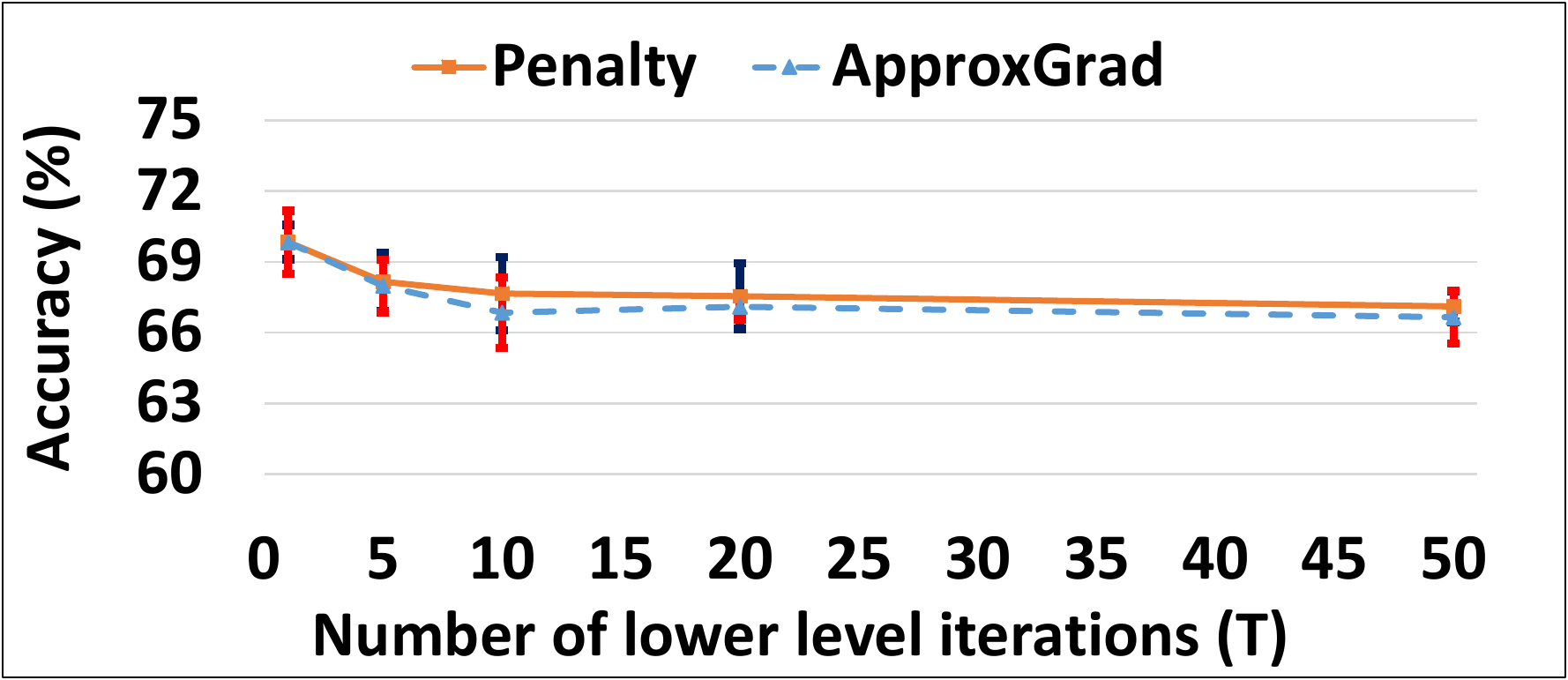}
\label{fig:data_posioning_T_2}\includegraphics[width=0.48\columnwidth,height=3cm]{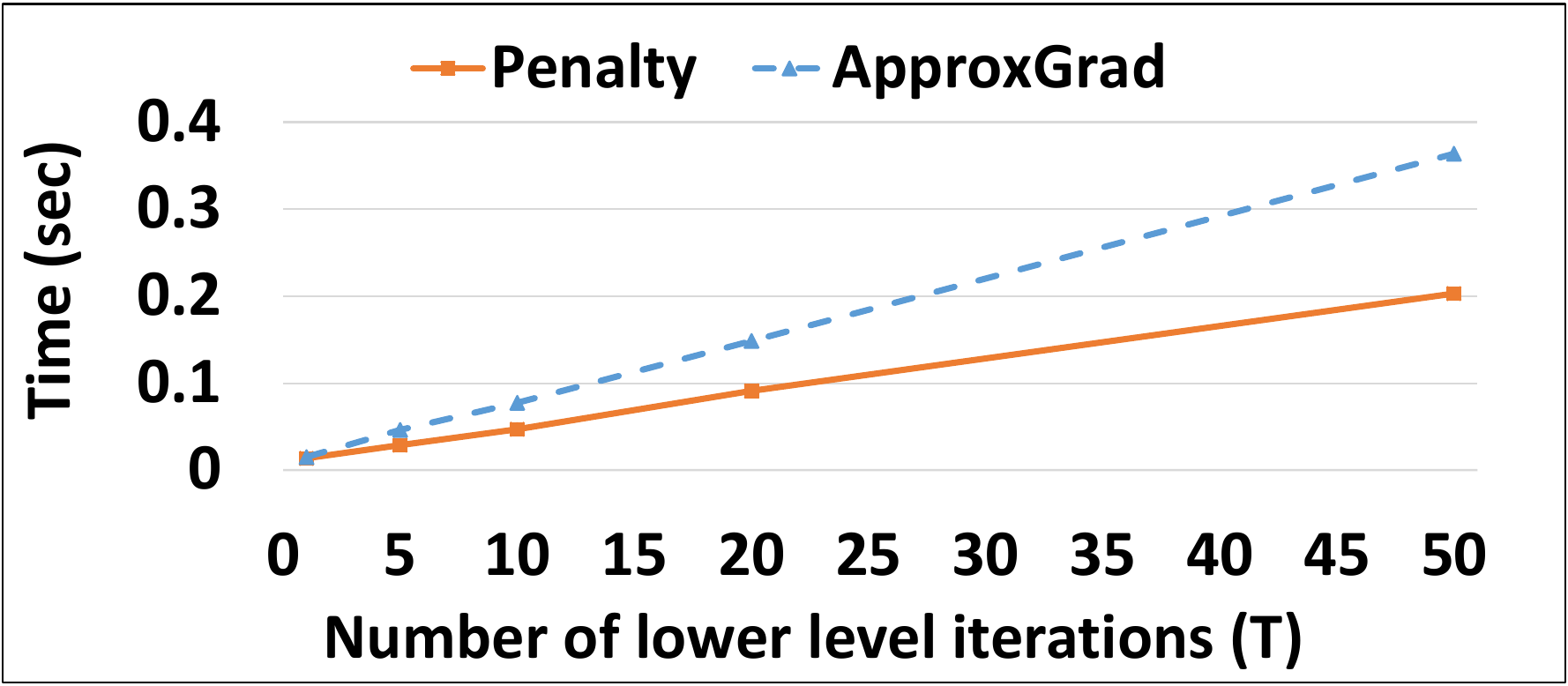}
}
\caption{
\small{Comparison of the final accuracy for different number of lower-level iterations $T$ and wall-clock time required for single upper-level iteration for different values of $T$ for Penalty and ApproxGrad (with $T$ updates for the linear system) on data denoising problem (Sec.~\ref{sec:experiment1} with 25\% noise on MNIST) and few-shot learning problem (Sec.~\ref{sec:experiment2} with 20 way 5 shot classification on Omniglot) and untargeted data poisoning (Appendix.~\ref{sec:experiment3_2} with 60 poisoned points on MNIST)
.}}
\label{fig:effect_of_T}
\end{figure}

\label{sec:Wall-clock-times}
\begin{table}%[tb]
\caption{Mean wall-clock time (sec) for 10,000 upper-level iterations for synthetic experiments. Boldface is the smallest among RMD, ApproxGrad, and Penalty. (Mean $\pm$ s.d. of 10 runs)}
  \label{Table:wall-clock time}
  \centering
   \resizebox{0.6\textwidth}{!}{
   \small
  \begin{tabular}{c|cccc}
    \toprule
      Example 1 & GD & RMD & ApproxGrad & Penalty  \\
    \midrule
    
    T=1 &
    {7.4$\pm$0.3} & {\bf 15.0$\pm$0.1} & {17.4$\pm$0.2} & {17.2$\pm$0.1} \\
    T=5 &
    {14.3$\pm$0.1} & {51.4$\pm$0.3} & {39.3$\pm$2.3} & {\bf 34.3$\pm$0.3}  \\
    T=10 &
    {23.2$\pm$0.1} & {95.4$\pm$0.2} & {60.9$\pm$0.3} & {\bf 57.0$\pm$1.0}  \\
    
    \toprule
    Example 2  &  GD & RMD & ApproxGrad & Penalty \\
    \midrule
    
    T=1 &
     {7.7$\pm$0.1} & {18.5$\pm$0.1} & {\bf 17.2$\pm$0.3} & {17.4$\pm$0.2} \\
    T=5 &
     {17.3$\pm$0.1} & {62.7$\pm$0.1} & {37.9$\pm$0.1} & {\bf 35.0$\pm$0.2} \\
    T=10 &
     {22.4$\pm$2.6} & {115.0$\pm$0.4} & {64.2$\pm$0.3} & {\bf 52.7$\pm$1.4} \\
    
    \toprule
    Example 3  & GD & RMD & ApproxGrad & Penalty \\
    \midrule
    T=1 &
    {8.2$\pm$0.2} & {\bf 18.8$\pm$0.1} & {19.8$\pm$0.1} & {19.1$\pm$0.1} \\
    T=5 &
    {17.4$\pm$0.1} & {72.4$\pm$0.1} & {47.1$\pm$0.4} & {\bf 38.6$\pm$0.4} \\
    T=10 &
    {28.7$\pm$0.6} & {125.0$\pm$9.3} & {80.6$\pm$0.3} & {\bf 62.7$\pm$0.1} \\
    
    \toprule
    Example 4  & GD & RMD & ApproxGrad & Penalty \\
    \midrule
    T=1 &
    {7.9$\pm$0.1} & {\bf 19.5$\pm$0.1} & {20.4$\pm$0.0} & {19.6$\pm$0.1} \\
    T=5 &
    {16.9$\pm$0.2} & {72.8$\pm$0.5} & {48.4$\pm$0.6} & {\bf 40.2$\pm$0.1} \\
    T=10 &
    {28.3$\pm$0.2} & {138.0$\pm$0.2} & {81.2$\pm$1.6} & {\bf 58.0$\pm$4.3} \\
    
    \bottomrule
\end{tabular}
  }
\end{table}

%%%%%%%%%%%%%%%%%%%%%%%%%%%%%%%%%%%%%%%%%%%%%%%%%%%%%%%%%%%%%%%%%%%%%%%%%%%%%%%%%%%%%%%%%%%%%%%%%%%%%%%%
\section{Additional experiments}\label{app:additional_experiments}
%%%%%%%%%%%%%%%%%%%%%%%%%%%%%%%%%%%%%%%%%%%%%%%%%%%%%%%%%%%%%%%%%%%%%%%%%%%%%%%%%%%%%%%%%%%%%%%%%%%%%%%

\subsection{Additional comparison of Penalty on the data denoising problem }\label{app:additional_imp_learning}
\subsubsection{Comparison with [\cite{franceschi2017forward}]}
For comparison of Penalty against the RMD-based method presented in [\cite{franceschi2017forward}], we used their setting from Sec.~5.1, which is a smaller version of this data denoising task. For this, we choose a sample of 5000 training, 5000 validation and 10000 test points from MNIST and randomly corrupted labels of 50\% of the training points and used softmax regression in the lower-level of the bilevel formulation (Eq.~(\ref{eq:importance learning})). The accuracy of the classifier trained on a subset of the dataset comprising only of points with importance values greater than 0.9 (as computed by Penalty) along with the validation set is 90.77\%. This is better than the accuracy obtained by Val-only (90.54\%), Train+Val (86.25\%) and the RMD-based method (90.09\%) used by [\cite{franceschi2017forward}] and is close to the accuracy achieved by the Oracle classifier (91.06\%). The bilevel training uses $K=100$ and $T=20$, $\sigma_0$=3, $\rho_0$=0.00001, $\gamma_0$=0.01, $\epsilon_0$=0.01, $\lambda_0$=0.01, $\nu_0$=0.0 with batch-size of 200

\subsubsection{Comparison with [\cite{ren18l2rw}]}
To demonstrate the effectiveness of the penalty in solving the importance learning problem with bigger models, we compared its performance against the recent method proposed by [\cite{ren18l2rw}], which uses a meta-learning approach to find the weights for each example in the noisy training set based on their gradient directions. We use the same setting as their uniform flip experiment with 36\% label noise on the CIFAR10 dataset. We also use our own implementation of the Wide Resnet 28-10 (WRN-28-10) which achieves roughly 93\% accuracy without any label noise. For comparison, we used the validation set of 1000 points and training set of 44000 points with labels of 36\% points corrupted, same as used by [\cite{ren18l2rw}]. We use Penalty with $T=1$ since using larger $T$ was not possible due to extremely high computational needs. However, using a larger value $T$ is expected to improve the results further based on Fig.~\ref{fig:effect_of_T}(a). Different from other experiments in this section we did not use the arctangent conversion to restrict importance values between 0 and 1 but instead just normalize the important values in a batch, similar to the method used by [\cite{ren18l2rw}], for proper comparison. We used $K=200$ and $T=1$, $\sigma_0$=3, $\rho_0$=0.0001, $\gamma_0$=1, $\epsilon_0$=1, $\lambda_0$=10, $\nu_0$=0.0 with batch-size of 75 and used data augmentation during training. We achieve an accuracy of 87.41 $\pm$ 0.26.

\begin{figure}[tb]
\centering
\subfigure[Clean label poisoning attack on dogfish dataset. The top and middle rows show the target and base instances from the test set and the last row shows the poisoned instances obtained from Penalty. Notice that poisoned images (bottom row) are visually indistinguishable from the base images (middle row) and can evade visual detection.]
{
\label{fig:dataset_posioning_dogfish_images}\includegraphics[width=0.9\columnwidth, height=2cm]{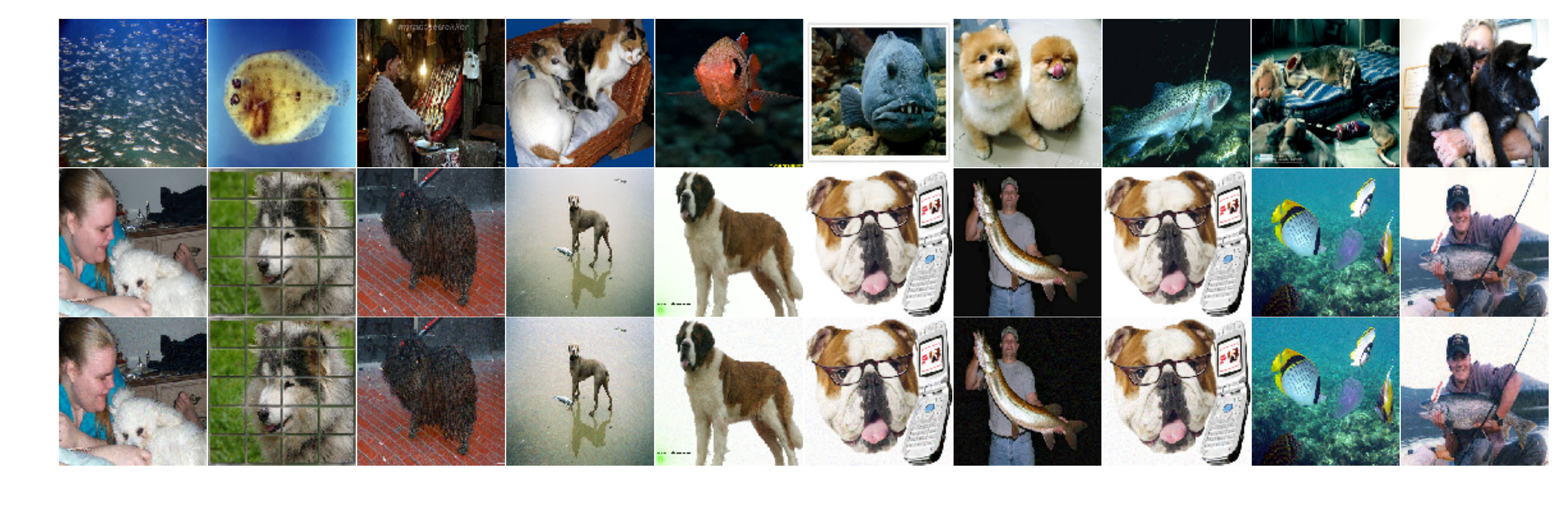}
}
\subfigure[Untargeted data poisoning attack on MNIST. The top row shows the learned poisoned image using Penalty, starting from the images in the bottom row as initial poisoned images. The column number represents the fixed label of the image, i.e. the label of the images in the first column is digit 0, the second column is digit 1, etc. %The average $l_2$ distortion of the images is 5.79.
]
{
\label{fig:dataset_posioning_untargeted}\includegraphics[width=0.9\columnwidth, height=2cm]{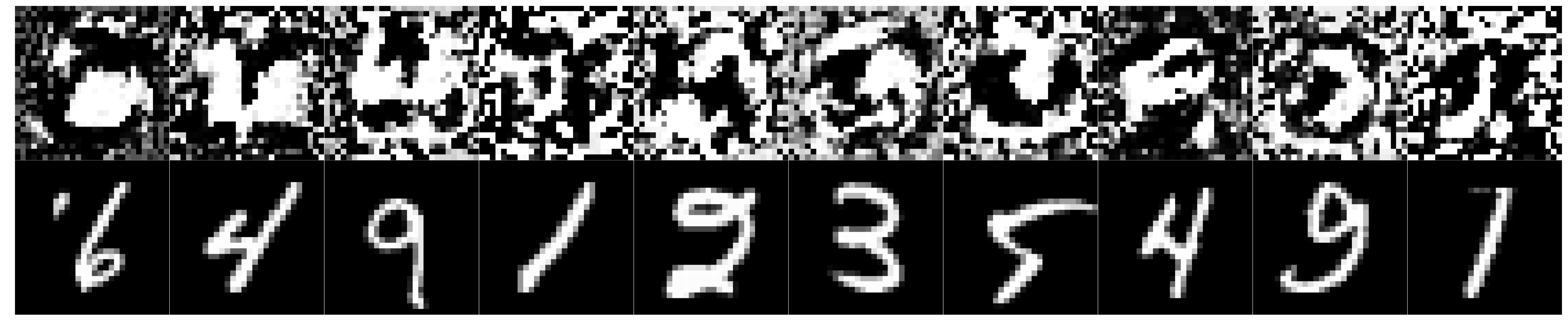}
}
\subfigure[Targeted data poisoning attack on MNIST. The top row shows the learned poisoned images using Penalty, starting from the images in the bottom row as initial poisoned images. Images in the first 5 columns have the fixed label of digit 3, and in the next 5 columns are images with the fixed label of digit 8. %The average $l_2$ distortion of the poisoned examples from base images is 3.89.
]{
\label{fig:dataset_posioning_targeted}\includegraphics[width=0.9\columnwidth, height=2cm]{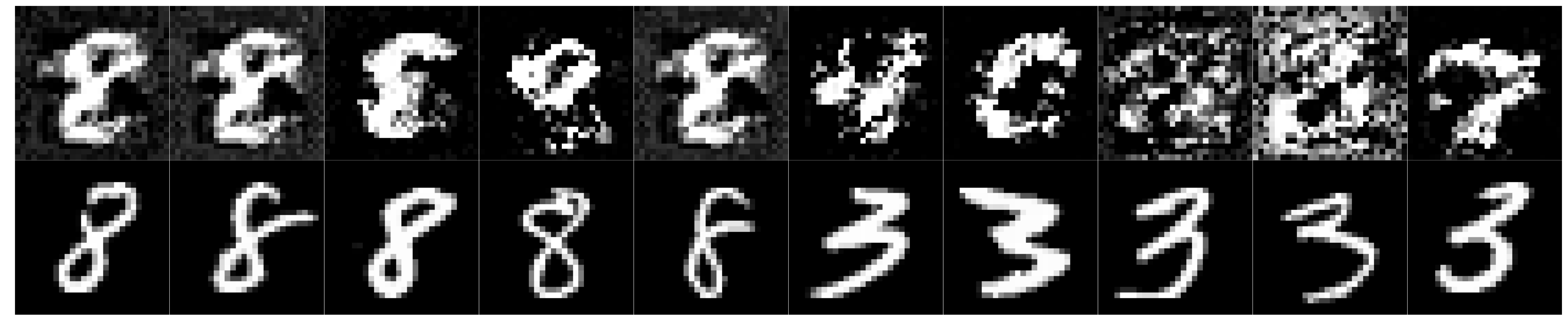}
}
\caption{Poisoning points for clean label and simple data poisoning attacks}
\label{fig:attack_images}
\end{figure}

\begin{table}[tb]
  \caption{Test accuracy (\%) of untargeted poisoning attack (TOP) and success rate (\%) of targeted attack (BOTTOM), using MNIST (Mean $\pm$ s.d. of 5 runs). Results for RMD are from [\cite{munoz2017towards}].
  }
  \label{Table:data_posioning_sc_1}
  \centering
  \small
  \resizebox{\textwidth}{!}{
    \begin{tabular}{c|cccc|cccc}
    \toprule
    & \multicolumn{4}{c}{Untargeted Attacks ({\bf lower} accuracy is better)} & \multicolumn{4}{|c}{Targeted Attacks ({\bf higher} accuracy is better)} \\
    \midrule
         \makecell{Poisoned \\ points}  & \makecell{Label \\ flipping} & \makecell{RMD} & ApproxGrad & Penalty & \makecell{Label \\ flipping} & \makecell{RMD} & ApproxGrad  & Penalty \\
        \midrule
        1\%  & 86.71$\pm$0.32 & 85 & {\bf82.09}$\pm$0.84 & {\bf83.29}$\pm$0.43 & 7.76$\pm$1.07 & 10 & {\bf18.84}$\pm$1.90 & {\bf17.40}$\pm$3.00
        \\
        2\% & 86.23$\pm$ 0.98 & 83 & {\bf77.5}4$\pm$0.57 & {\bf78.14}$\pm$0.53 & 12.08$\pm$2.13 & 15 &  {\bf39.64}$\pm$3.72 & {\bf41.64}$\pm$4.43
        \\
        3\% & 85.17$\pm$0.96 & 82 & {\bf74.41}$\pm$1.14 & {\bf75.14}$\pm$1.09 & 18.36$\pm$1.23 & 25 & {\bf52.76}$\pm$2.69 & {\bf51.40}$\pm$2.72
        \\
        4\% & 84.93$\pm$0.55 & 81 & {\bf71.88}$\pm$0.40 & {\bf72.70}$\pm$0.46 & 24.41$\pm$2.05 & 35 & {\bf60.01}$\pm$1.61 & {\bf61.16}$\pm$1.34
        \\
        5\% & 84.39$\pm$1.06 & 80 & {\bf68.69}$\pm$0.86 & {\bf69.48}$\pm$1.93 & 30.41$\pm$4.24 & - & {\bf65.61}$\pm$4.01 & {\bf65.52}$\pm$2.85
        \\
        6\% & 84.64$\pm$0.69 & 79 & {\bf66.91}$\pm$0.89 & {\bf67.59}$\pm$1.17 & 32.88$\pm$3.47 & - & {\bf71.48}$\pm$4.24 & {\bf70.01}$\pm$2.95
        \\
        \bottomrule
    \end{tabular}
    }
\end{table}

\subsection{Simple data poisoning attack}\label{sec:experiment3_2}
%\subsection{Training-data poisoning}\label{app:experiment3}
Here we discuss a simple data poisoning attack problem that does not involve any constraint on the amount of perturbation on the poisoned data. We solve the following bilevel problem
\begin{equation}\label{eq:poisoning}
\max_{u} \;L_\mathrm{val}(u,w^*) \;\;\mathrm{s.t.}\;\;w^* = \arg\min_{w}\; L_\mathrm{poison}(u,w),
\end{equation}
Here, we evaluate Penalty on the task of generating poisoned training data, such that models trained on this data, perform poorly/differently as compared to the models trained on the clean data. We use the same setting as Sec.~4.2 of [\cite{munoz2017towards}] and test both untargeted and targeted data poisoning on MNIST using the data augmentation technique. We assume regularized logistic regression will be the classifier used for training. The poisoned points obtained after solving Eq.~(\ref{eq:poisoning}) by various methods are added to the clean training set and the performance of a new classifier trained on this data is used to report the results in Table~\ref{Table:data_posioning_sc_1}. 
For untargeted attacks, our aim is to generally lower the performance of the classifier on the clean test set. For this experiment, we select a random subset of 1000 training, 1000 validation, and 8000 testing points from MNIST and initialize the poisoning points with random instances from the training set but assign them incorrect random labels. We use these poisoned points along with clean training data to train logistic regression, in the lower-level problem of Eq.~(\ref{eq:poisoning}). For targeted attacks, we aim to misclassify images of eights as threes. For this, we selected a balanced subset (each of the 10 classes are represented equally in the subset) of 1000 training, 4000 validation, and 5000 testing points from the MNIST dataset. Then we select images of class 8 from the validation set and label them as 3 and use only these images for the upper-level problem in Eq.~(\ref{eq:poisoning}) with a difference that now we want to minimize the error in the upper level instead of maximizing. To evaluate the performance we selected images of 8 from the test set and labeled them as 3 and report the performance on this modified subset of the original test set in the targeted attack section of Table~\ref{Table:data_posioning_sc_1}. For this experiment, the poisoned points are initialized with images of classes 3 and 8 from the training set, with flipped labels. This is because images of threes and eights are the only ones involved in the poisoning. We compare the performance of Penalty against the performance reported using RMD in [\cite{munoz2017towards}] and ApproxGrad. For ApproxGrad, we used 20 lower-level and 20 linear system updates to report the results in Table~\ref{Table:data_posioning_sc_1}. We see that Penalty significantly outperforms the RMD based method and performs similar to ApproxGrad. However, in terms of wall-clock time Penalty has an advantage over ApproxGrad (see Fig.~\ref{fig:effect_of_T}(c) in Appendix~\ref{sec:impact_of_T}). We also compared the methods against a label flipping baseline where we select poisoned points from the validation sets and change their labels (randomly for untargeted attacks and mislabel threes as 8 and eights as 3 for targeted attacks). All bilevel methods are able to beat this baseline showing that solving the bilevel problem generates better poisoning points. Examples of the poisoned points for untargeted and targeted attacks generated by Penalty are shown in Fig.~\ref{fig:attack_images}.
For this experiment, we used $l_2$-regularized logistic regression implemented as a single layer neural network with the cross entropy loss and a weight regularization term with a coefficient of 0.05. The model is trained for 10000 epochs using the Adam optimizer with learning rate of 0.001 for training with and without poisoned data. We pre-train the lower-level with clean training data for 5000 epochs with the Adam optimizer and learning rate 0.001 before starting bilevel training. For untargeted attacks, we optimized Penalty with $K=5000$, $T=20$, $\sigma_0$=0.1, $\rho_0$ = 0.001, $\gamma_0$=10, $\epsilon_0$=1, $\lambda_0$=100, $\nu_0$=0.0. The test accuracy of this model trained on clean data is 87\%. For targeted attack, Penalty is optimized with $K=5000$, $T=20$, $\sigma_0$=0.1, $\rho_0$ = 0.001, $\gamma_0$=10, $\epsilon_0$=1, $\lambda_0$=1, $\nu_0$=0.0. 

\subsection{Impact of T on accuracy and run-time}
\label{sec:impact_of_T}
Here, we compare the accuracy and time for Penalty and ApproxGrad (Fig.~\ref{fig:effect_of_T} and Table~\ref{Table:wall-clock time}) as we vary the number of lower-level iterations $T$ for different experiments. Intuitively, a larger $T$ corresponds to a more accurate approximation of the hypergradient and therefore improves the results for both methods. But this improvement comes with a significant increase in time. Moreover, Fig.~\ref{fig:effect_of_T} shows that relative improvement after $T=20$ is small in comparison to the increased run-time for Penalty and especially for ApproxGrad. Based on these results we used $T=20$ for all our experiments on real data for both methods. The figure also shows that even though Penalty and ApproxGrad have the same linear time complexity (Table~\ref{tbl:complexity}), Penalty is about twice as fast ApproxGrad in wall-clock time on real experiments.

\subsection{Impact of various hyperparameters and terms}\label{app:effect_of_various_hyperparameters}
Here we evaluate the impact of different initial values for the hyperparameters and the impact of different terms added in the modified algorithm (Algorithm \ref{alg:extended}). In particular, we examine the effect of using different initial values of $\lambda_0$ for synthetic experiments and $\lambda_0, \gamma_0$ for untargeted data poisoning with 60 points and also test the effect of having the $\lambda_k g$ and $\gv^T \nu$ (Fig.~\ref{fig:synthetic_exps} and Table \ref{Table:data_posioning_hyperparameter_effects}). Based on the results we find that the initial value of the regularization parameter $\lambda_0$ does not influence the results too much and the absence of $\lambda_k g$ ($\lambda_k$ = 0) also does not change the results too much. We also don't see significant gains from using the augmented Lagrangian term and method of multipliers on these simple problems. However, the initial value of the parameter $\gamma_0$ does influence the results since starting from very large $\gamma_0$ makes the algorithm focus only on satisfying the necessary condition at the lower level ignoring the $f$ whereas with small $\gamma_0$ it can take a large number of iterations for the penalty term to have influence. Apart from these, we also tested the effects of the rate of tolerance decrease ($c_{\epsilon}$) and penalty increase $(c_{\gamma})$, and initial value for $\epsilon_0$. Within certain ranges, the results do not change much.

\begin{table}[tb]
  \caption{\small{Effect of using different initial values for various hyper-parameters with Penalty on untargeted data poisoning attacks, Appendix~\ref{sec:experiment3_2} ({\bf lower} accuracy is better) with 60 poisoning points (Mean $\pm$ s.d. of 5 runs with $T$ = 20 (lower-level iterations)). We used the parameters corresponding to the bold values for the results reported in Table \ref{Table:data_posioning_sc_1}.}
  }
  \label{Table:data_posioning_hyperparameter_effects}
  \centering
  \resizebox{0.6\columnwidth}{!}{
    \begin{tabular}{c|ccccccccccccc}
    \toprule
    \multicolumn{1}{c|}{\makecell{Hyper-\\parameters}} & \multicolumn{12}{c}{Different initial values of various hyperparameters}\\
    \midrule
    \multirow{3}{*}{$\lambda_0$} & \multicolumn{3}{c|}{$\lambda_0$ = 0} & \multicolumn{3}{c|}{$\lambda_0$ = 1} & \multicolumn{3}{c|}{$\lambda_0$ = 10} & \multicolumn{3}{c}{$\lambda_0$ = 100} \\\cline{2-13}
     &\multicolumn{3}{c|}{} & \multicolumn{3}{c|}{} & \multicolumn{3}{c|}{} & \multicolumn{3}{c}{}  \\
     &\multicolumn{3}{c|}{67.87$\pm$1.35} & \multicolumn{3}{c|}{68.21$\pm$1.78} & \multicolumn{3}{c|}{68.18$\pm$1.04} & \multicolumn{3}{c}{{\bf67.59}$\pm$1.17} \\
    \midrule
    
    \multirow{3}{*}{$\nu$} & \multicolumn{6}{c|}{with $\nu$} & \multicolumn{6}{c}{without $\nu$} \\\cline{2-13}
     &\multicolumn{6}{c|}{} & \multicolumn{6}{c}{}  \\
     &\multicolumn{6}{c|}{{\bf67.59}$\pm$1.17} & \multicolumn{6}{c}{68.82$\pm$0.75} \\
    \midrule
    
    \multirow{3}{*}{$\gamma_0$} & \multicolumn{5}{c|}{$\gamma_0$ = 1} & \multicolumn{4}{c|}{$\gamma_0$ = 10} & \multicolumn{3}{c}{$\gamma_0$ = 100} \\\cline{2-13}
     &\multicolumn{5}{c|}{} & \multicolumn{4}{c|}{} & \multicolumn{3}{c}{}  \\
     &\multicolumn{5}{c|}{73.38$\pm$4.98} & \multicolumn{4}{c|}{{\bf67.59}$\pm$1.17} & \multicolumn{3}{c}{71.96$\pm$3.56} \\
    
    \bottomrule
    \end{tabular}
    }
\end{table}
\begin{figure}[tb]
  \centering
  \includegraphics[width=0.45\columnwidth]{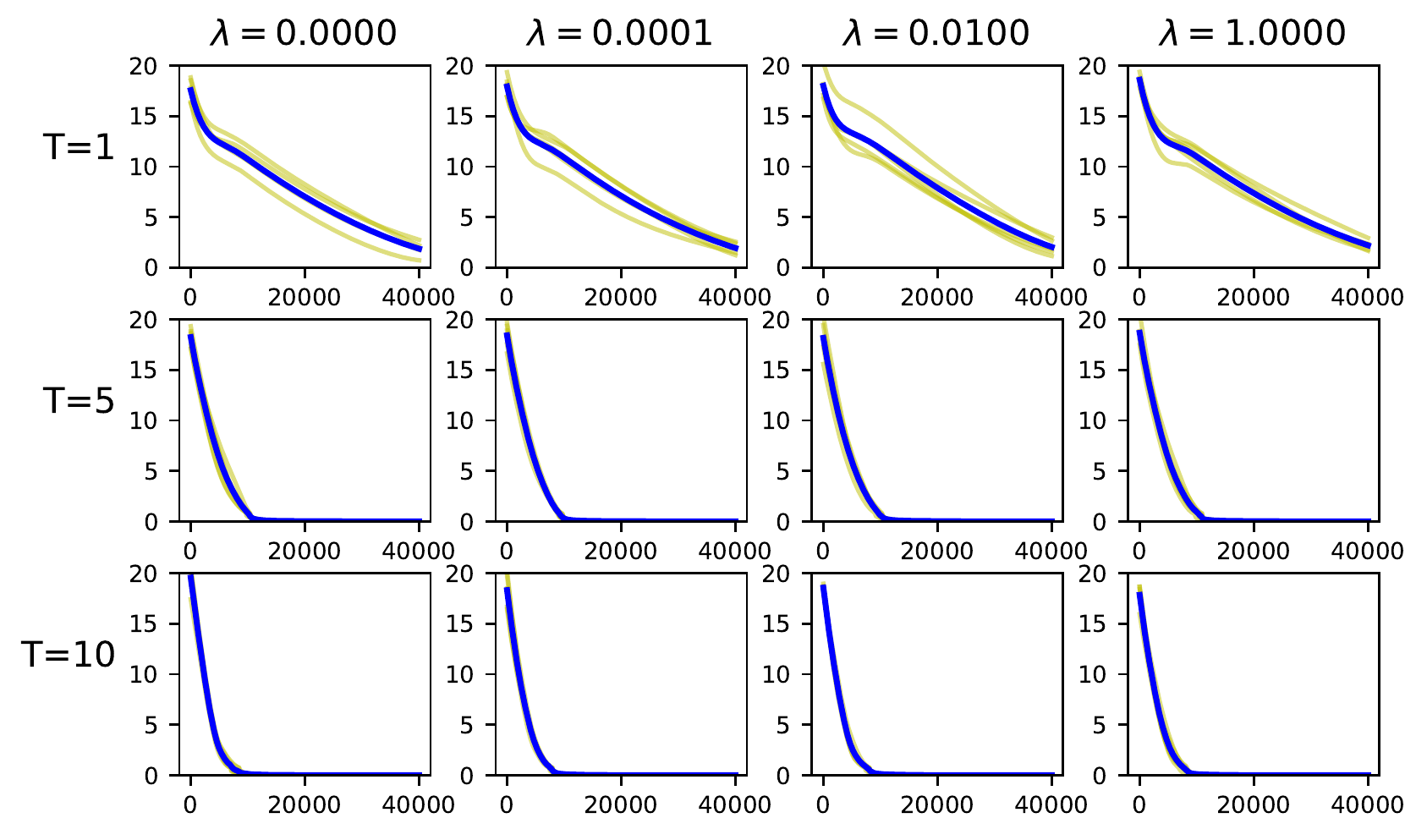}  
  \includegraphics[width=0.45\columnwidth]{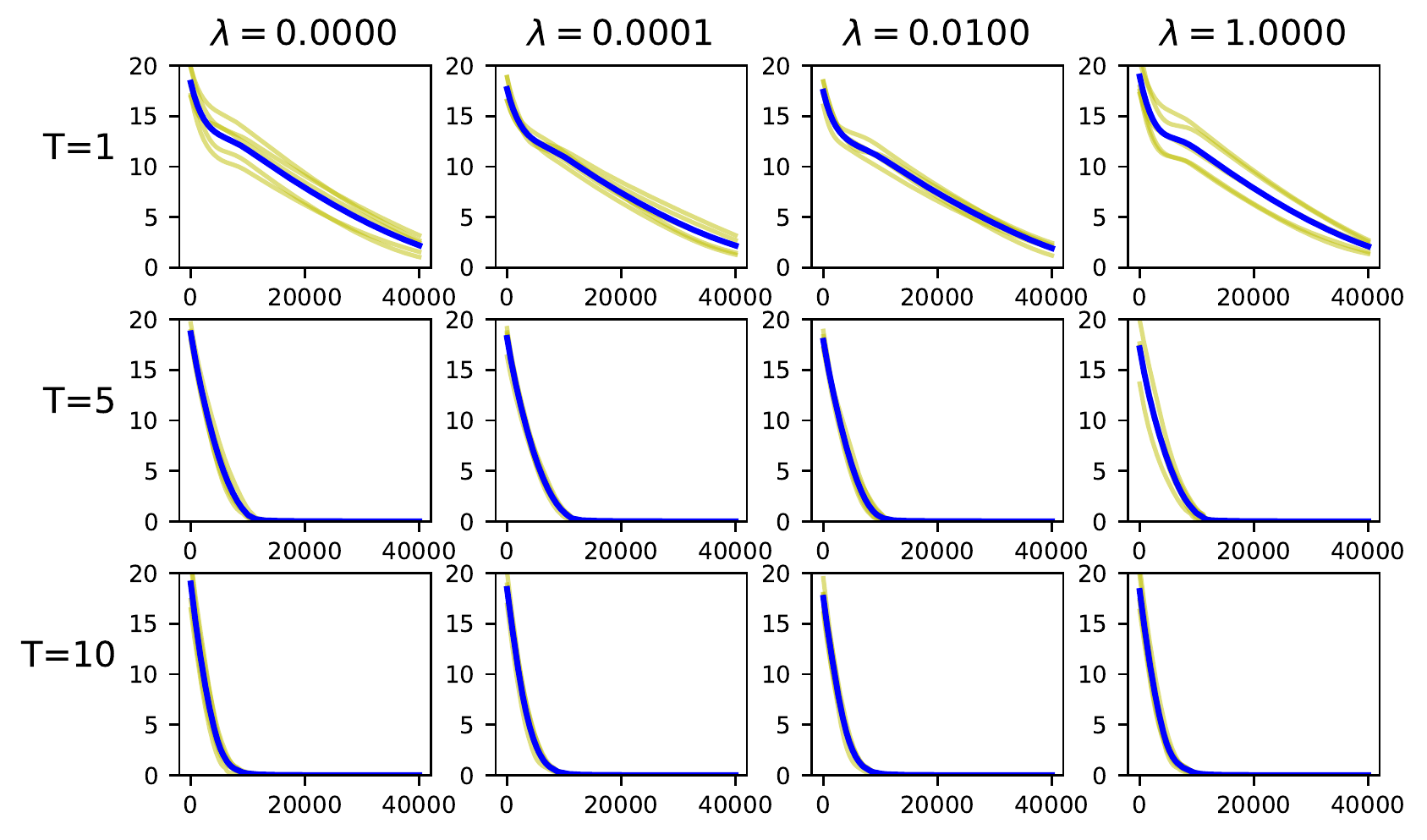}  
  \caption{Penalty method for $T$=1,5,10 and $\lambda_0=0,10^{-4},10^{-2},1$ for Example 1 of Sec.3.1. Top: with $\nu$. Bottom: without $\nu$. Averaged over 5 trials.}
  \label{fig:synthetic_exps}
\end{figure}

\begin{table}[t]
  \caption{\small{Upper- and lower-level variable sizes for different experiments}}
  \label{Table:variable_sizes}
  \centering
  \resizebox{0.6\columnwidth}{!}{
  \begin{tabular}{c|c|c|c}
    \toprule
    Experiment & Dataset & \makecell{Upper-level \\ variable} & \makecell{Lower-level \\ variable} \\
    \midrule
    \multirow{3}{*}{\rotatebox[origin=c]{0}{\makecell{Data \\denoising}}}
    & MNIST & 59K & 1.4M\\%\cline{2-4}
    & CIFAR10 (Alexnet) & 40K & 1.2M \\%\cline{2-4}
    & CIFAR10 (WRN-28-10) & 44K & {\bf36M}  \\
    & SVHN & 72K & 1.3M  \\
    \midrule
    \multirow{2}{*}{\rotatebox[origin=c]{0}{\makecell{Few-shot \\ learning}}}
    & Omniglot & 111K & 39K  \\%\cline{2-4}
    & Mini-Imagenet & {\bf3.8M} & 5K \\    
    \midrule
    \multirow{2}{*}{\rotatebox[origin=c]{0}{\makecell{Data \\ poisoning}}}
    & \makecell{MNIST (Augment 60 \\ poison points)} & 47K  &  8K \\%\cline{2-4}
    & \makecell{ImageNet (Clean \\ label attack)} & 268K &  4K\\
    \bottomrule
  \end{tabular}
  }
\end{table}

%%%%%%%%%%%%%%%%%%%%%%%%%%%%%%%%%%%%%%%%%%%%%%%%%%%%%%%%%%%%%%%%%%%%%%%%%%%%%%%%%%%%%%%%%%%%%%%%%%%%%%%%
\section{Details of the experiments}\label{app:details_experiments}
%%%%%%%%%%%%%%%%%%%%%%%%%%%%%%%%%%%%%%%%%%%%%%%%%%%%%%%%%%%%%%%%%%%%%%%%%%%%%%%%%%%%%%%%%%%%%%%%%%%%%%%
All codes are written in Python using Tensorflow/Keras and were run on Intel CORE i9-7920X CPU with 128 GB of RAM and dual NVIDIA TITAN RTX. Implementation and hyperparameters of the algorithms are experiment-dependent and described separately below.

\subsection{Synthetic problems}\label{app:synthetic examples}
In this experiment, four simple bilevel problems with known optimal solutions are used
to check the convergence of different algorithms. 
The two problems in Fig.~\ref{fig:synthetic1} are
\begin{eqnarray*}
\min_{u,v}\|u\|^2+\|v\|^2, \; \mathrm{s.t.}\; v=\arg\min_v  \|1-u-v\|^2,
\end{eqnarray*}
and 
\begin{eqnarray*}
\min_{u,v} \|v\|^2 - \|u-v\|^2, \; \mathrm{s.t.}\; v=\arg\min_v \|u-v\|^2,
\end{eqnarray*}
where $u=[u_1,\cdots,u_{10}]^T,\;\;|u_i|\leq 5$ and 
$v=[v_1,\cdots,v_{10}]^T,\;\;|v_i|\leq 5$. 
The optimal solutions are $u_i=v_i=0.5,\;\;$ $i=1,\cdots,10$ for the former and $u_i=v_i=0,\;\;$ $i=1,\cdots,10$ for the latter. 
Since there are unique solutions, convergence is measured by the Euclidean distance $\sqrt{\|u-\us\|^2+\|v-\vs\|^2}$
of the current iterate $(u,v)$ and the optimal solution $(\us,\vs)$.

The two problems in Fig.~\ref{fig:synthetic2} are 
\begin{eqnarray*}
\min_{u,v} \|u\|^2 + \|v\|^2, \;\; \mathrm{s.t.} \;\;
v=\arg\min_v (1-u-v)^TA^TA(1-u-v)
\end{eqnarray*}
and 
\begin{eqnarray*}
\min_{u,v} \|v\|^2 - (u-v)^TA^TA(u-v), \;\; \mathrm{s.t.}\;\; v=\arg\min_v (u-v)^TA^TA(u-v),
\end{eqnarray*}
where $A$ is a $5 \times 10$ real matrix such that $A^TA$ is rank-deficient,
and the domains are the same as before. 
These problems are ill-conditioned versions of the previous two problems and
are more challenging. 
The optimal solutions to these two example problems are not unique. 
For the former, the solutions are 
$u=0.5+p$ and $v=0.5+p$ for any vector $p \in \mathrm{Null}(A)$. For the latter, 
$u=p$ and $v=0$ for any vector $p \in \mathrm{Null}(A)$. 
Since they are non-unique, convergence is measured by the residual distance $\sqrt{\|P(u-0.5)\|^2 + \|P(v-0.5)\|^2}$ for the former
and $\sqrt{\|Pu\|^2 + \|v\|^2}$ for the latter, where $P = A^T(AA^T)^{-1}A$
is the orthogonal projection to the row-space of $A$. 

The algorithms used in this experiment are GD, RMD, ApproxGrad, and Penalty.
Adam optimizer is used for minimization everywhere except RMD which uses
gradient descent for a simpler implementation. 
The learning rates common to all algorithms are $\sigma_0=10^{-3}$ for $u$-update
and $\rho_0 = 10^{-4}$ for $v$- and $p$-updates.
For Penalty, the values $\gamma_0=1$, $\lambda_0=10$, and $\epsilon_0=1$ are used.
For each problem and algorithm, 20 independent trials are performed with random initial locations $(u_0,v_0)$ sampled uniformly in the domain, 
and random entries of $A$ sampled from independent Gaussian distributions.
We test with $T=1,5,10$. 
Each run was stopped after $K=40000$ iterations of $u$-updates.

\subsection{Data denoising by importance learning}\label{app:experiment1}
Following the formulation for data denoising presented in Eq.~(\ref{eq:importance learning}), we associate an importance value (denoted by $u_i$) with each point in the training data. Our goal is to find the correct values for these $u_i$'s such that the noisy points are given lower importance values and clean points are given higher importance values. In our experiments, we allow the importance values to be between 0 and 1. We use the change of variable technique to achieve this. We set $u'_i = 0.5 (tanh(u_i) + 1)$ and since $-1 \leq tanh(u_i) \leq 1$, $u'_i$ is automatically scaled between 0 and 1. We use a warm start for the bilevel methods (Penalty and ApproxGrad) by pre-training the network using the validation set and initializing the importance values with the predicted output probability from the pre-trained network. We see an advantage in the convergence speed of the bilevel methods with this pre-training. Below we describe the network architectures used for our experiments.

For the experiments on the MNIST dataset, our network consists of a convolution layer with a kernel size of 5x5, 64 filters, and ReLU activation, followed by a max-pooling layer of size 2x2 and a dropout layer with a drop rate of 0.25. This is followed by another convolution layer with a 5x5 kernel, 128 filters, and ReLU activation followed by similar max pooling and dropout layers. Then we have 2 fully connected layers with ReLU activation of sizes 512 and 256 respectively, each followed by a dropout layer with a drop rate of 0.5. Lastly, we have a softmax layer with 10 classes. We used the Adam optimizer with a learning rate of 0.00001, batch size of 200, and 100 epochs to report the accuracy of Oracle, Val-Only, and Train+Val classifiers. For bilevel training using Penalty we used $K=100$, $T=20$, $\sigma_0$=3, $\rho_0$=0.00001, $\gamma_0$=0.01, $\epsilon_0$=0.01, $\lambda_0$=0.01, $\nu_0$=0.000001 as per Alg.~\ref{alg:extended}.

For the experiments on the CIFAR10 dataset, our network consists of 3 convolution blocks with filter sizes of 48, 96, and 192. Each convolution block consists of two convolution layers, each with a kernel size of 3x3 and ReLU activation. This is followed by a max-pooling layer of size 2x2 and a drop-out layer with a drop rate of 0.25. After these 3 blocks, we have 2 dense layers with ReLU activation of sizes 512 and 256 respectively, each followed by a dropout layer with a rate of 0.5. Finally, we have a softmax layer with 10 classes. This is optimized with the Adam optimizer using a learning rate of 0.001 for 200 epochs with a batch size of 200 to report the accuracy of Oracle, Val-Only, and Train+Val classifiers. For this experiment, we used data augmentation during our training. For the bilevel training using Penalty we used $K=200$, $T=20$, $\sigma_0$=3, $\rho_0$=0.00001, $\gamma_0$=0.01, $\epsilon_0$=0.01, $\lambda_0$=0.01, $\nu_0$=0.0001 with mini-batches of size 200. We also use data augmentation for bilevel training.

For the experiments on the SVHN dataset, our network consists of 3 blocks each with 2 convolution layers with a kernel size of 3x3 and ReLU activation followed by a max-pooling and drop out layer (drop rate = 0.3). The two convolution layers of the first block have 32 filters, the second block has 64 filters and the last block has 128 filters. This is followed by a dense layer of size 512 with ReLU activation and a dropout layer with a drop rate = 0.3. Finally, we have a softmax layer with 10 classes. This is optimized with the Adam optimizer and learning rate of 0.001 for 100 epochs to report results of Oracle, Val-Only, and Train+Val classifiers. The bilevel training uses $K=100$ and $T=20$, $\sigma_0$=3, $\rho_0$=0.00001, $\gamma_0$=0.01, $\epsilon_0$=0.01, $\lambda_0$=0.01, $\nu_0$=0.0 with batch-size of 200. 
The test accuracy of these models, when trained on the entire training data without any label corruption, is 99.5\% for MNIST, 86.2\% for CIFAR10, and 91.23\% for SVHN. 
For all the experiments with ApproxGrad, we used 20 updates for the lower-level and 20 updates for the linear system and did the same number of epochs as for Penalty (i.e. 100 for MNIST and SVHN and 200 for CIFAR), with a mini-batch-size 200.

\subsection{Few-shot learning}\label{app:experiment2}
For these experiments, we used the Omniglot [\cite{lake2015human}] dataset consisting of 20 instances (size 28 $\times$ 28) of 1623 characters from 50 different alphabets and the Mini-ImageNet [\cite{vinyals2016matching}] dataset consisting of 60000 images (size 84 $\times$ 84) from 100 different classes of the ImageNet [\cite{deng2009imagenet}] dataset. For the experiments on the Omniglot dataset, we used a network with 4 convolution layers to learn the common representation for the tasks. The first three layers of the network have 64 filters, batch normalization, ReLU activation, and a 2 $\times$ 2 max-pooling. The final layer is the same as the previous ones with the exception that it does not have any activation function. The final representation size is 64. For the Mini-ImageNet experiments, we used a residual network with 4 residual blocks consisting of 64, 96, 128, and 256 filters followed by a 1 $\times$ 1 convolution block with 2048 filters, average pooling, and finally a 1 $\times$ 1 convolution block with 384 filters. Each residual block consists of 3 blocks of 1 $\times$ 1 convolution, batch normalization, leaky ReLU with leak = 0.1, before the residual connection and is followed by dropout with rate = 0.9. The last convolution block does not have any activation function. The final representation size is 384. Similar architectures have been used ite{franceschi2018bilevel} in their work with the difference that we don't use any activation function in the last layers of the representation in our experiments. For both the datasets, the lower-level problem is a softmax regression with a difference that we normalize the dot product of the input representation and the weights with the $l_2$-norm of the weights and the $l_2$-norm of the input representation, similar to the cosine normalization proposed by [\cite{luo2018cosine}]. For $N$ way classification, the dimension of the weights in the lower-level are 64 $\times$ $N$ for Omniglot and 384 $\times$ $N$ for Mini-ImageNet.
For our Omniglot experiments we use a meta-batch-size 30 for 5-way and 20-way classification and a meta-batch-size of 2 for 5-way classification with Mini-ImageNet. We use $T=20$ iterations for the lower-level in all experiments and ran them for $K$=10000. The hyper-parameters used for Penalty are $\sigma_0$=0.001, $\rho_0$=0.001, $\gamma_0$=0.01, $\epsilon_0$=0.01, $\lambda_0$=0.01, $\nu_0$=0.0001. 

\subsection{Clean label data poisoning attack}\label{app:clean-label-full}
We solve the following problem for clean label poisoning:
\begin{equation}\label{eq:clean-label-poisoning-full}
    \min_{u} \;L_\mathrm{t}(u,w^*) + \|r(t) - r(u)\| \;\;\mathrm{s.t.} \;\; \|x_\mathrm{base} - u\|_2 \leq \epsilon\;\mathrm{and}\;w^* = \arg\min_{w}\; L_\mathrm{poison}(u,w),
\end{equation}
We use the dog vs. fish image dataset as used by [\cite{koh2017understanding}], consisting of 900 training and 300 testing examples from each of the two classes. The size of the images in the dataset is 299 $\times$ 299 with pixel values scaled between -1 and 1. Following the setting in Sec.~5.2 of [\cite{koh2017understanding}], we use the InceptionV3 model with weights pre-trained on ImageNet. We train a dense layer on top of these pre-trained features using the RMSProp optimizer and a learning rate of 0.001 optimized for 1000 epochs before starting bilevel training. Test accuracy obtained with training on clean training data is 98.33. We repeat the same procedure as training during evaluation and train the dense layer with training data augmented with a poisoned point. For solving the Eq.~(\ref{eq:clean-label-poisoning-full}) with Penalty we converted the inequality constraint to an equality constraint by adding a non-negative slack variable. Penalty is optimized with $K=200$, $T=10$, $\sigma_0$=0.01, $\rho_0$ = 0.001, $\gamma_0$=1, $\epsilon_0$=1, $\lambda_0$=1. 

The experiment shown in Fig.~\ref{fig:dataset_posioning_dogfish} is done on the correctly classified instances from the test set. For a fair comparison with Alg. 1 in [\cite{shafahi2018poison}] we choose the same target and base instance for both the algorithms and generate the poison points. We modify Alg. 1 of [\cite{shafahi2018poison}] in order to constrain the amount of perturbation it adds to the base image to generate the poison point. We achieve this by projecting the perturbation back onto the $l_2$ ball of radius $\epsilon$ whenever it exceeds. This is a standard trick used by several methods which generate adversarial examples for test time attacks. We use $\beta = 0.1, \;\lambda = 0.01$ for Alg. 1 of [\cite{shafahi2018poison}] and run it for 2000 epochs in this experiment. For both the algorithms we aim to find the smallest $\epsilon$ that causes misclassification. We incrementally search for the $\epsilon \in \{1,2, ..., 16\}$ and record the minimum one that misclassifies the particular target. These are then used to report the average distortion in Fig.~\ref{fig:dataset_posioning_dogfish}.

\end{document}